\documentclass[letterpaper, 10pt]{IEEEtran}

%\IEEEoverridecommandlockouts        % This command is only
                                                          % needed if you want 
                                                          % to use the \thanks 
														%command
%\overrideIEEEmargins
% See the \addtolength command later in the file to balance the column lengths
% on the last page of the document

%\usepackage{generic}
\usepackage{cite}
\usepackage{amsmath,amssymb,amsfonts,amsthm}
\usepackage{hyperref}
\usepackage{epstopdf,color}
\usepackage{textcomp}
\usepackage{graphicx,color}
\graphicspath{{img/}}
\usepackage{mathrsfs}
\usepackage{subfigure}
\usepackage{url}
\usepackage{color}
\usepackage{dsfont}
\usepackage{bbm}
\usepackage{booktabs}
\usepackage{array}
\usepackage[table]{xcolor}
\usepackage{yfonts}
\usepackage{cleveref} % for the section symbol in the references
\usepackage{tikz}
\usepackage{pgfplots}
\usepackage{multirow}
\usepackage{textcomp}
\usepackage{tabularx}
\usepackage{placeins}
\usepackage{float}
\usepackage{nccmath}
\usepackage{accents}
\usepackage{multicol}
\usepackage{etoolbox, refcount}
\usepackage{chngcntr}
\usepackage{apptools}
\usepackage{relsize}
\usepackage[font=footnotesize]{caption}
\usepackage{algorithm}
\usepackage{algpseudocode}

% \usepackage{savesym}
% \savesymbol{AND}
% \savesymbol{OR}
% \savesymbol{NOT}
% \savesymbol{TO}
% \savesymbol{COMMENT}
% \savesymbol{BODY}
% \savesymbol{IF}
% \savesymbol{ELSE}
% \savesymbol{ELSIF}
% \savesymbol{FOR}
% \savesymbol{WHILE}
%\usepackage{algorithmicx,algpseudocode}

%\usepackage[compact]{titlesec}
%\usepackage{enumitem}

\newtheorem{theorem}{Theorem}[section]
\newtheorem{lemma}[theorem]{Lemma}
\newtheorem{definition}[theorem]{Definition}
\newtheorem{corollary}[theorem]{Corollary}
\newtheorem{proposition}[theorem]{Proposition}

\newtheorem{remark}[theorem]{Remark}

\DeclareMathAlphabet{\mymathbb}{U}{BOONDOX-ds}{m}{n}

%Letters in appendix:

\newcommand{\setdef}[2]{\{#1 \; : \; #2\}}

\newcommand{\zero}{\mymathbb{0}}

 %intergals' d

\newcommand{\real}{\mathbb{R}}

\newcommand{\integerspos}{\mathbb{Z}_{> 0}}

 %or \top or \intercal

\newcommand{\Cc}{\mathcal{C}}
\newcommand{\Dc}{\mathcal{D}}
\newcommand{\Sc}{\mathcal{S}}
\newcommand{\Ec}{\mathcal{E}}
\newcommand{\Kc}{\mathcal{K}}
\newcommand{\Oc}{\mathcal{O}}
\newcommand{\Pc}{\mathcal{P}}
\newcommand{\Jc}{\mathcal{J}}

\newcommand{\Fc}{\mathcal{F}}
\newcommand{\Lc}{\mathcal{L}}
\newcommand{\Ac}{\mathcal{A}}
\newcommand{\Rc}{\mathcal{R}}
\newcommand{\Ic}{\mathcal{I}}
\newcommand{\Tc}{\mathcal{T}}
\newcommand{\Xc}{\mathcal{X}}
\newcommand{\Bc}{\mathcal{B}}

\newcommand{\Mc}{\mathcal{M}}
\newcommand{\Gc}{\mathcal{M}}
\newcommand{\Nc}{\mathcal{N}}
\newcommand{\Zc}{\mathcal{Z}}
\newcommand{\Yc}{\mathcal{Y}}

\newcommand{\argmin}[2] {\mathrm{arg}\min_{#1}#2}

\DeclareSymbolFont{bbold}{U}{bbold}{m}{n}
\DeclareSymbolFontAlphabet{\mathbbold}{bbold}

\newcommand{\norm}[1]{\left\lVert#1\right\rVert}

\newcommand{\normB}[1]{\Big\lVert#1\Big\rVert}

\newcommand{\Cl}{\operatorname{Cl}}

% Procend
\newcommand\oprocendsymbol{\hbox{$\bullet$}}
\newcommand\oprocend{\relax\ifmmode\else\unskip\hfill\fi\oprocendsymbol}

%% Enumerate environment

%"Proof of" environment
\newcommand*{\QEDA}{\hfill\ensuremath{\blacksquare}}%

% Triangle at the end of remark
%\newenvironment{remark}
  %{\pushQED{\qed}\renewcommand{\qedsymbol}{$\triangle$}\remarkx}
  %{\popQED\endremarkx}
\newcommand\xqed[1]{%
  \leavevmode\unskip\penalty9999 \hbox{}\nobreak\hfill
  \quad\hbox{#1}}
\newcommand\demo{\xqed{$\bullet$}}

% Black circle at the end of definitions
%\newcommand{\tikzcircle}[2][black,fill=black]{\tikz[baseline=-0.5ex]\draw[#1,radius=#2] (0.5em,0) circle ;}%

%spacing in sections
%\titlespacing{\section}{0pt}{2ex}{1ex}
%\titlespacing{\subsection}{0pt}{1ex}{0ex}
%\titlespacing{\subsubsection}{0pt}{0.5ex}{0ex}

%Itemize in multiple columns
\newcounter{countitems}
\newcounter{nextitemizecount}
\newcommand{\setupcountitems}{%
  \stepcounter{nextitemizecount}%
  \setcounter{countitems}{0}%
  \preto\item{\stepcounter{countitems}}%
}
\makeatletter
\newcommand{\computecountitems}{%
  \edef\@currentlabel{\number\c@countitems}%
  \label{countitems@\number\numexpr\value{nextitemizecount}-1\relax}%
}
\newcommand{\nextitemizecount}{%
  \getrefnumber{countitems@\number\c@nextitemizecount}%
}
\newcommand{\previtemizecount}{%
  \getrefnumber{countitems@\number\numexpr\value{nextitemizecount}-1\relax}%
}
\makeatother    
{\end{itemize}%
\unskip\computecountitems\ifnumcomp{\previtemizecount}{>}{4}{\end{multicols}}{}}

\newcommand{\longthmtitle}[1]{\mbox{}\textit{(#1):}}

% COMMAND FOR THE NEW TEXT IN REVISION

% \renewcommand{\nt}[1]{{#1}}

\newcommand{\comment}[1]{} % multiline comments
\newcolumntype{P}[1]{>{\centering\arraybackslash}p{#1}}
\allowdisplaybreaks

\pdfminorversion=4

\begin{document}
\title{\bf Safe and Dynamically-Feasible Motion Planning using Control
  Lyapunov and Barrier Functions}

\author{Pol Mestres \qquad Carlos Nieto-Granda \qquad Jorge Cort{\'e}s
  \thanks{P. Mestres and J. Cort\'es are with the Department of
    Mechanical and Aerospace Engineering, University of California,
    San Diego, \{pomestre,cortes\}@ucsd.edu. Carlos Nieto-Granda is
    with the DEVCOM US Army Research Laboratory (ARL), Adelphi,
    Maryland, carlos.p.nieto2.civ@army.mil.} %
}

\maketitle

%
% \marginJC{Might be a bit funny to mention motion planning and
% stability in the title -- seem like conflicting objectives. How
% about "Safe and Dynamically-Feasible Sampling-Based Motion
% Planning"? Other additions to the title could have are "efficient",
% "using CBFs and CLFs", and "Probabilistically Complete".}
%

\begin{abstract}
  This paper considers the problem of designing motion planning
  algorithms for control-affine systems that generate collision-free
  paths from an initial to a final destination and can be executed
  using safe and dynamically-feasible controllers.  We introduce the
  \texttt{C-CLF-CBF-RRT} algorithm, which produces paths with such
  properties and leverages rapidly exploring random trees (RRTs),
  control Lyapunov functions (CLFs) and control barrier functions
  (CBFs).  We show that \texttt{C-CLF-CBF-RRT} is computationally
    efficient for linear systems with polytopic and ellipsoidal constraints, and
  establish its probabilistic completeness. We showcase the
  performance of \texttt{C-CLF-CBF-RRT} in different simulation and
  hardware experiments.
\end{abstract}
%
% \marginJC{"Study" probabilistic completeness sounds a bit weak. We
% establish it, right? }
%

%
% \marginJC{Add Carlos! What that the first C in C-CLF-CBF-RRT stand for? In the abstract, what do "such properties" refer to? The way the wording goes, we seem to put the emphasis on the techniques employed, rather than on the desirable properties they produce. SHouldn't we emphasize  that things can be executed "safely" and "in a stable manner" rather than "using CBFs and CLFs"?}
%\marginJC{What that the first C in C-CLF-CBF-RRT stand for?}
%\marginPM{Compatible}
%
%
% \marginJC{Given the distinction "algo is valid for control-affine, guarantees are given for linear systems", shouldn't we have in the 1st sentence that problem we consider is for control-affine systems?}
%

\section{Introduction}\label{sec:introduction}

Motion planning refers to the problem of computing a collision-free trajectory for a mobile agent 
to go from an initial state to a goal state. Motion planning algorithms are the backbone of many 
robotics applications, but their implementation remains challenging for robots with complex dynamics
and environments with irregular obstacles. Even in scenarios where the robot dynamics and the environment obstacles are known, obtaining motion plans is in general a challenging task.
%
% \marginJC{So here, we're saying it's tough even if you know perfectly the environment, right? Might be worth making explicit that this complexity already arises w/o considering uncertainty, which only complicates things further.}
%
Most motion planning algorithms generate high-level 
plans, consisting of sequences of waypoints in the configuration space, and assume the availability of low-level controllers that can follow such waypoints while avoiding collisions with obstacles.
An example of low-level controllers frequently used in applications requiring collision-free navigation
are those based on control barrier functions (CBFs) for safety and control Lyapunov functions (CLFs) for stability.  However, controllers that simultaneously address safety and stability of the different waypoints might in general be not well-defined. This work is motivated by the need to bridge the gap 
between motion planner implementations and low-level CLF-CBF controllers that produce dynamically feasible safe trajectories.
% only enjoy safety and stability guarantees under suitable technical conditions. In particular, there exists a significant gap in the literature 
% regarding the implementation of motion planners that guarantee the existence of a low-level CLF and CBF-based controller with safety and stability guarantees.

% Mention that for Lyapunov functions and linear systems, the problem of verifying whether a function is a Lyapunov function
% can be reduced to a Lyapunov equation. We want to use the same idea for barrier functions and control barrier functions
% for certain types of sets and dynamics.

\subsubsection*{Literature Review}

%
% \marginJC{Wording in start of this subsection is ackward: "bla,
% which is bla". Also, define informally what "sampling-based" means.}
%
\textit{Trajectory optimization} methods in motion
  planning~\cite{MD-HGB-HD-PBW:06,FA-APS-RD:12,XZ-AL-FB:21} seek to
  directly design trajectories from the initial state to the goal
  state that take into account the robot dynamics. These methods
  usually formulate the planning problem as a high-dimensional
  nonconvex problem, which can be difficult to solve efficiently by
  off-the-shelf solvers.  To address this,
  % improve the efficiency of such methods,
  it is common to restrict the problem to
  % attempt to find the optimal trajectory among
  a parametric class:~\cite{JT-JPH:21,JT-JPH:22} uses the so-called
  MINVO basis,~\cite{WD-WG-KW-SS:19} uses B-splines,
  and~\cite{CR-AB-NR:16,NCS-WDC-ADA:23} use polynomial basis. Even
  with the restriction to such parametric classes, the trajectory
  optimization problems remain nonconvex, and their complexity scales
  with the dimension of the parameter space.  One exception is the
  recent work~\cite{TM-MP-DVW-RT:23}, which formulates the trajectory
  optimization problem as a shortest path problem in Graphs of Convex
  Sets~\cite{TM-JU-PP-RT:24}, an optimization framework that allows
  the trajectory optimization problem to be formulated as a
  mixed-integer convex program for trajectories parameterized by
  Bernstein polynomials. Despite the low runtimes that this algorithm
  exhibits in a variety of different robotic systems, it requires a
  partition of the environment in convex sets, which needs to be
  precomputed offline.
% Furthermore, it is only shown to be a mixed-integer program for
% trajectories that are parameterized by Bernstein polynomials.

Regardless of the computational complexity, the restriction of
  trajectory optimization methods to parametric classes means that
  they are only guaranteed to produce dynamically feasible solutions
  for special classes of systems (for
  example~\cite{JT-JPH:21,JT-JPH:22,WD-WG-KW-SS:19,CR-AB-NR:16} work
  for quadrotors, and~\cite{NCS-WDC-ADA:23} for feedback linearizable
  systems).  Furthermore, since the controllers needed to track these
  trajectories are generally open loop, they do not possess the
  inherent robustness properties associated with feedback control (an
  exception is~\cite{XZ-AL-FB:21}, which uses model predictive control
  to generate optimal trajectories).
%We could also add "Spline Trajectory Tracking and Obstacle Avoidance for Mobile Agents via Convex Optimization"
%which uses trajectory optimization in a partitioned environment and uses CLFs/CBFs to track the trajectory

Our work is more closely aligned with 
\textit{sampling-based motion planning} methods~\cite{LEK-PS-JCL-MHO:96}, which
seeks to find a collision-free path from an initial state to a goal state through randomly sampling the state space.
Despite its simplicity, it has been shown to be a practical solution for efficiently finding feasible paths even for high-dimensional problems.
Rapidly-exploring random trees (RRTs)~\cite{SML:98} and its variants~\cite{JK-SL:00,SK-EF:11} are 
a family of sampling-based motion planning algorithms that are simple to implement and are probabilistically
complete, meaning that a feasible path (if it exists) is found with probability one as the number of samples goes to infinity.
RRTs build a tree rooted at a starting configuration and efficiently explore the configuration space by adding more samples.
Despite the widespread use of RRT and the variants outlined above, 
their performance in systems with general differential constraints and dynamics remains limited, 
since they rely on the ability to connect any neighboring
nodes of the tree with a dynamically feasible trajectory. 
This requires solving a \textit{two-point boundary value problem} (BVP)~\cite[Chapter 14]{SML:06}, which in general is challenging.
%
% \marginJC{Make a general pass at the English style: "...solving... which is difficult to solve".}
%
Different 
works~\cite{DJW-JVDB:13,LY-ZL-KEB:16}
address this problem by developing algorithms that achieve optimality guarantees for different classes of systems
without requiring the use of a BVP solver.  On the one hand,~\cite{DJW-JVDB:13} considers controllable linear systems, for which the explicit solution of the BVP can be computed, and~\cite{LY-ZL-KEB:16} focuses on non-holonomic systems where \textit{Chow's condition} holds, whose accessibility properties can also be used to sidestep the use of a BVP solver.
%
% \marginJC{I'm curious, how do they get around solving the BVP?}
%
Alternatively, other works introduce heuristics that approximate the solution of the BVP:~\cite{EG-RT:10,AP-RP-GK-LK-TLP:12} do it using the linear quadratic regulator, and~\cite{AJL-BS-SML:23} leverages bang-bang controllers.
% randomly select inputs and construct an RRT based on such random selections, 
Other works circumvent solving the BVP by using learning-based
approaches. For instance,~\cite{LP-KOA:15,YL-KEB:11} introduces an
offline machine learning phase that learns the solution of the
BVP,~\cite{WJW-MB-TMM-MW:18} refines the generation of the dataset
used in this offline phase, and~\cite{HTLC-JH-MF-LP-AF:19} learns the
solution of the BVP using reinforcement learning techniques.
%
% \marginJC{Maybe put the refs on each category? (as opposed to all at the end of the sentence).}
%
There are also approaches~\cite{RT-IRM-MT-JWR:10} that combine the
  benefits of trajectory optimization methods with RRT, by
  constructing a tree of optimized trajectories along with tubes
  defining their regions of attraction, derived with sum-of-squares
  programming~\cite{PAP:00}.

%
% \marginJC{So what we describe below is connected to the above by saying that it's another way of bypassing the BVP, no? Shouldn't we say it here?}
%
Here we bypass the need to solve the BVP or to optimize over sets of trajectories
by using two sets of well-established tools: control Lyapunov functions (CLFs)~\cite{ZA:83}, for designing stabilizing controllers for nonlinear systems, and control barrier functions (CBFs)~\cite{ADA-SC-ME-GN-KS-PT:19,PW-FA:07}, for rendering safe a desired set.
In applications where safety and stability specifications need to be met simultaneously,
the CLF and CBF conditions can be combined in 
a variety of different formulations including a quadratic program with a relaxation variable~\cite{ADA-XX-JWG-PT:17}, safety filters~\cite{LW-ADA-ME:17} (where the CBF condition acts on top of a stabilizing nominal controller), or designs based on penalty methods~\cite{PM-JC:23-csl}. 
Even though these control designs have shown great success in applications such as adaptive cruise control~\cite{ADA-JWG-PT:14} and bipedal walking~\cite{SCH-XX-ADA:15},
different works have shown that, when combined, they can lead to the existence of
undesired equilibria~\cite{MFR-APA-PT:21,XT-DVD:24,YC-PM-EDA-JC:24-cdc}, which can 
even be asymptotically stable and have large regions of attraction, or the lack of feasibility~\cite{WSC-DVD:22-tac,PM-JC:23-csl,PO-JC:19-cdc} 
between the CBF and CLF conditions.
% Conditions for set invariance for linear systems and polyhedral and ellipsoidal sets:~\cite{FB:99}. However, 
% we are considering the unsafe sets as polyhedra and ellipsoids, instead of the safe set.
% We are also deriving them in terms of the existence of a CBF (which is equivalent to safety if $\Cc$ is compact).
% Also the conditions derived in this paper are slightly different.

There exist a few works in the
literature~\cite{AA-CB-RT:22,KM-SY-TY-BH-DP-GF:21,GY-BV-ZS-CB-RT:19,AM-QN:21}
that combine the effectiveness of RRT-based algorithms with the safety
guarantees and computational efficiency provided by CBFs and CLFs,
hence also bypassing the need to compute the solution of a
BVP. However, these approaches require the simulation of trajectories
derived from a CLF-CBF-based controller in order to determine whether
new candidate nodes should be added to the tree. The repeated
simulation of such trajectories can significantly slow down the search
for a feasible path and compromise the computational efficiency of the
resulting algorithm.  Moreover, these existing works can be prone
  to safety violations as a consequence of the numerical errors
  introduced when simulating these trajectories, and do not formally
ensure that the low-level CLF and CBF-based controller possesses both
safety and stability
guarantees. Finally,~\cite{GY-MC-AA-AP-RT-CB:23} introduces
  \texttt{LQR-CBF-RRT*}, which is asymptotically optimal and also
  leverages CBFs to ensure collision-free trajectories. Moreover, this
  method does not require simulating trajectories obtained with a
  CLF-CBF-based controller. However, the CBF condition is only
  verified at a finite sequence of points along a trajectory, which
  might compromise safety in-between such sampled points. Furthermore,
  the reference trajectory is generated through an LQR-based
  controller of a linearized model, which might also not be
  stabilizing for the original nonlinear system.
%
%\marginJC{``might also not be stabilizing for the  nonlinear system''?}
%

\subsubsection*{Statement of Contributions}

We consider the problem of designing motion planning algorithms that
generate collision-free paths from an initial to a final destination
for systems with control-affine dynamics.
%
%\marginJC{Are we doing "control-affine" or linear dynamics?}
%\marginPM{We present the algorithm for control-affine. For linear systems we can get guarantees that it is efficient.}
%
To ensure that the sequence of waypoints generated by the
  sampling-based algorithm can be tracked by a controller while
  ensuring safety and stability, we leverage the theory of CBFs and
  CLFs. Our contributions are:
\begin{enumerate}
\item We introduce a result of independent interest which shows that
  the problem of verifying whether a CLF and a CBF are compatible in a
  set of interest can be solved by solving an optimization problem;
\item Although in general such optimization problem is non-convex,
    we show that for linear systems and CBFs of polytopic or
    ellipsoidal obstacles, it reduces to a quadratically constrained
    quadratic program (QCQP), and for CBFs of circular obstacles it
    can be solved in closed form;
  %
  % \marginJC{What is a circular CBF?? Also, shouldn't we say something
  %   similar to what we say in the statement of revision, that there
  %   are other cases/we discuss where the opt is tractable?}
  %
\item We leverage the results on compatibility checking of a
    CLF-CBF pair to develop Compatible CLF-CBF-RRT (or
    \texttt{C-CLF-CBF-RRT} for short), a sampling-based motion
    planning algorithm that is a variant of RRT. We show that, by
    construction, \texttt{C-CLF-CBF-RRT} generates collision-free
    paths that can be executed with a CLF-CBF-based controller, and
    formally establish it is probabilistically complete;
  %
  % \marginJC{This last point is too long. Maybe some of it can be taken
  % out of the item and put right after the enumerate finishes.}
  %
\item We show how our proposed approach can be generalized to
    systems where safety constraints have a high relative degree;
\item We illustrate our results in simulation and hardware
    experiments for differential drive robots and compare them with
    the literature, showing that \texttt{C-CLF-CBF-RRT} can generate
    safe and stable paths with a better average execution time.
\end{enumerate}

Noteworthy properties of \texttt{C-CLF-CBF-RRT} as compared to the
  literature are: it does not require generating closed-loop
  trajectories at every sampling step because of the compatibility
  verification of CLF-CBF pairs; it avoids the potential safety
  violations that occur as a consequence of the numerical errors
  introduced when simulating trajectories; its computational
  complexity is tractable provided that the optimization problem
  verifying the compatibility of the CLF and CBF is tractable; and it
  ensures by construction that the sequence of generated waypoints can
  be robustly asymptotically tracked by a safe controller, without
  introducing unwanted dynamical behaviors such as undesired
  equilibria, and while ensuring that the optimization problem
  defining such controller is recursively feasible.
%
% \marginJC{Explan that the C- stands for compatibility.}
%
% like the ones often employed in the
% literature~\cite{ADA-XX-JWG-PT:17,LW-ADA-ME:17,MFR-APA-PT:21,XT-DVD:21}.

\subsubsection*{Notation}
We denote by $\mathbb{Z}_{>0}$, $\mathbb{R}$, and $\mathbb{R}_{\geq0}$
the set of positive integers, real, and nonnegative real numbers,
resp.  For $N\in\mathbb{Z}_{>0}$, we denote $[N]=\{1,2,\hdots,N\}$.
Given $x\in\real^n$, $\norm{x}$ denotes its Euclidean norm. Given
matrix $B\in\real^{n\times m}$, $\text{Im}(B)$ denotes its image.  The
symbols $\mathbb{I}_n$, $\zero_n$ denote the identity and zero
matrices of dimension $n\in\mathbb{Z}_{>0}$, and $\textbf{0}_n$ is the
zero vector of dimension $n$.
% Let $A\in\real^{n\times n}$ be a square matrix, the matrix norm induced by the Euclidean norm is defined as 
% $\norm{A} = \sup\limits_{x\neq\textbf{0}_n} \frac{\norm{Ax}}{\norm{x}}$.
Given a set $\Sc\subset\real^n$, we denote its boundary by
$\partial \Sc$ and its closure by $\text{Cl}(\Sc)$.  Given an
arbitrary set $A$, % (not necessarily a subset of $\real^n$)
we let $\Pc(A)$ be the power set of $A$, i.e., the set of all subsets
of $A$, including the empty set and $A$ itself.  We denote by
$\Bc(x,\delta)$ the Euclidean closed ball of center $x\in\real^n$ and
radius $\delta>0$, i.e.,
$\Bc(x,\delta):=\setdef{y\in\real^n}{\norm{y-x}\leq\delta}$.
%We denote by $S^n(\real)$ the set of real-valued symmetric matrices of dimension $n$.
Given $f:\real^n\to\real^n$, $g:\real^n\to\real^{n\times m}$ and a
smooth $W:\real^n\to\real$, the notation $L_fW:\real^n\to\real$
(resp., $L_gW:\real^n\to\real^m$) denotes the Lie derivative of $W$
with respect to $f$ (resp., $g$), that is $L_fW=\nabla W^T f$ (resp.,
$\nabla W^T g$).  A function $\beta:\real\to\real$ is extended class
$\Kc_{\infty}$ if it is continuous, $\beta(0)=0$, $\beta$ is strictly
increasing and $\lim\limits_{s \to \pm \infty}\beta(s) = \pm\infty$.
%If moreover $\lim \limits_{t\to\infty}\beta(t)=\infty$, $\beta$ is of class $\Kc_{\infty}$.
A function $V:\real^n\to\real$ is positive definite with respect to
$q\in\real^n$ if $V(q)=0$ and $V(x)>0$ for $x\neq q$.
% Given a square matrix $P\in\real^{n\times n}$, we denote by
% $\lambda_{\text{max}}(P)$ and $\lambda_{\text{min}}(P)$ the largest
% and smallest eigenvalues of $P$, respectively.
Given a locally Lipschitz function $f:\real^n\to\real$, its
generalized gradient at $x \in\real^n$
%
% \marginJC{Why $x'$ and not simply $x$?}
%
is
$\partial f( { x} ) = \text{co} \setdef{\lim\limits_{i\to\infty}
  \nabla f(x_i) }{x_i \to { x}, x_i \notin S \cup \Gamma_f}$, where
$\Gamma_f$ is the zero-measure set where $f$ is non-differentiable and
$S$ is any set of measure zero.
%
% \marginJC{We use above $S$ for set, here $R$ and $A$. Also, it is a bit disorganized, as we talked about notation for the boundary, then a bunch of other things, then back to notation about sets.}
%
An undirected graph $\Gc$ is a pair $\Gc = (V, \Ec)$, where
$V=\{1, \hdots, N \}$ is a finite set called the vertex set,
$\Ec\subset V\times V$ is called the edge set where $(i,j)\in\Ec$ if
and only if $(j,i)\in\Ec$.  A path in $\Gc$ is a sequence of vertices
$v_1, \hdots, v_k$, with $k\in\mathbb{Z}_{>0}$, such that for all
$i\in[k-1]$, $(v_i, v_{i+1})\in\Ec$.  A tree is an undirected graph in
which there exists a single path between any pair of vertices.

\section{Preliminaries}\label{sec:prelims}

Here we review control Lyapunov functions, control barrier functions,
and rapidly exploring random trees.
For reference, we provide a
summary table of the symbols used throughout the paper in
Table~\ref{tab:1}.

\subsection{Control Lyapunov and Control Barrier
  Functions}\label{subsec:clf-ncbf}
% This section presents preliminaries on control Lyapunov functions.
Consider a control-affine system
\begin{align}\label{eq:control-affine-sys}
  \dot{x}=f(x)+g(x)u,
\end{align}
where $f:\real^{n}\to\real^{n}$ and $g:\real^{n}\to\real^{n\times m}$
are locally Lipschitz functions, with $x\in\real^{n}$ the state and
$u\in\real^{m}$ the input. Throughout the paper, and without loss of
generality, we assume $f(0)=0$, so that the origin $x=0$ is the
desired equilibrium point of the (unforced) system.

We start by recalling the notion of Control Lyapunov function
(CLF)~\cite{EDS:98,RAF-PVK:96a}.
\begin{definition}\longthmtitle{Control Lyapunov Function}\label{def:clf}
  Given an open set $\mathcal{D}\subseteq\real^{n}$, a point $q\in\real^n$ with
  $q\in\mathcal{D}$, a continuously differentiable function
  $V:\real^{n}\to\real$ is a \textbf{CLF} with respect to $q$ in $\mathcal{D}$ for the
  system~\eqref{eq:control-affine-sys} if
  \begin{itemize}
  \item $V$ is proper in $\mathcal{D}$, i.e.,
    $\setdef{x\in\mathcal{D}}{V(x)\leq c}$ is a compact set for all
    $c>0$,
  \item $V$ is positive definite with respect to $q$,
  \item there exists a continuous positive definite function $W:\real^{n}\to\real$ with respect to $q$ such that, for each $x\in\mathcal{D}$,
    %
    % \marginJC{Can we write $x\in\mathcal{D}$ since $W(q) = 0$? (we should say $W$ is positive definite w.r.t. $q$). That will provide a nicer parallelism w the definition of CBF later.}
    %
    there exists a control
    $u\in\real^{m}$ satisfying
    \begin{align}\label{eq:clf-ineq}
      L_fV(x)+L_gV(x)u \leq -W(x).
  \end{align}
\end{itemize}
\end{definition}
\medskip

CLFs provide a way to guarantee asymptotic stability of the
origin. Namely, if a Lipschitz controller $u_{\text{st}}:\real^n\to\real^m$ is such that, for every $x\in\mathcal{D}$, $u=u_{\text{st}}(x)$
satisfies~\eqref{eq:clf-ineq}, then the origin is asymptotically stable for the closed-loop system~\cite{EDS:98}.  
%{\color{red} If $W(x)$
% in~\eqref{eq:clf-ineq} is replaced by $\gamma(V(x))$, where $\gamma$
% is a class $\mathcal{K}$ function, then such Lipschitz controller
% makes the origin exponentially stable.}
% %
% \marginJC{Are you sure this is true? At most, with this you could ensure exponential decrease of $V$, which in general is different form exponential stability.}
%
Such controllers can be
synthesized by means of the pointwise minimum-norm (PMN) control
optimization~\cite[Chapter 4.2]{RAF-PVK:96a},
\begin{align*}
  u(x)
  & = \argmin{u \in\real^{m}}{\frac{1}{2}\norm{u}^2}
  \\
  \notag
    & \qquad \text{s.t.~\eqref{eq:clf-ineq} holds}.
\end{align*}
Note that, at each $x\in \real^n$, this is a quadratic program in~$u$.

Next we define the notion of Boolean Nonsmooth Control Barrier Function (BNCBF), adapted from~\cite[Definition II.8]{PG-JC-ME:21-tac}.
%for functions defined as the pointwise maximum of a finite set of continuously differentiable functions.
%
% \marginJC{We can combine the 2 def environments into one to make things cleaner.}
%
\begin{definition}\longthmtitle{BNCBF~\cite[Definition II.8]{PG-JC-ME:21-tac}}\label{def:valid-bncbf}
    Given $N\in\mathbb{Z}_{>0}$, let $h_i:\real^n\to\real$, for $i\in[N]$, be continuously differentiable functions.
    Let $h(x)=\max_{i\in[N]}h_i(x)$ and
    \begin{subequations}
    \begin{align}
      \Cc &= \setdef{x\in\real^n}{h(x)\geq0}, \\
      \partial \Cc &= \setdef{x\in\real^n}{h(x)=0}.
    \end{align}
    \label{eq:safe-set}
    \end{subequations}
    Suppose that the set $\Cc$ is nonempty. Then, $h$ is a BNCBF of $\Cc$ for~\eqref{eq:control-affine-sys}
  if there exists a locally Lipschitz extended class $\Kc_{\infty}$ function $\alpha:\real\to\real$ 
  such that for every $x\in\Cc$ there exists $u\in\real^m$
  such that,
  \begin{align*}
    \min_{v\in\partial h(x)} v^T (f(x)+g(x)u) \geq -\alpha(h(x)).
  \end{align*}
\end{definition}
\smallskip
%
% \marginJC{Rather than citing the conf papers, it's always better to refer to the journal versions (when they exist, which in this case, they do :-)).}
%

In case $N=1$, Definition~\ref{def:valid-bncbf} reduces to the standard notion of Control Barrier Function~\cite[Definition 2]{ADA-SC-ME-GN-KS-PT:19}. Given~$x \in \real^n$, let $\Ic(x):=\setdef{i\in[N]}{h(x)=h_i(x)}$ denote the set of \emph{active} functions. The following result is adapted from~\cite[Theorem III.6]{PG-JC-ME:21-tac} 
and provides a sufficient condition for $h$ to be a BNCBF.

%Paragraph before Example IV.2 shows that $\Ic_{\epsilon}$ is an encapsulating function.
\begin{proposition}\longthmtitle{Sufficient Condition for BNCBF}\label{prop:suff-cond-bncbf}
  Suppose there is an extended class $\Kc_{\infty}$ function $\alpha:\real\to\real$ such that, for all $x\in\real^n$, there exists $u\in\real^m$ with
  \begin{align}\label{eq:bncbf-suff-cond}
    L_f h_i(x) + L_g h_i(x)u \geq -\alpha(h(x)),
  \end{align}
  for all $i\in\Ic(x)$. Then, $h$ is a BNCBF of $\Cc$.
\end{proposition}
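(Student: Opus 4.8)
The plan is to reduce the nonsmooth BNCBF condition in Definition~\ref{def:valid-bncbf} to the pointwise linear inequalities assumed in the hypothesis, by exploiting the structure of the generalized gradient of a pointwise maximum of continuously differentiable functions. First, I would recall the classical characterization of the Clarke generalized gradient of $h=\max_{i\in[N]}h_i$: for every $x\in\real^n$,
\[
  \partial h(x) = \operatorname{co}\setdef{\nabla h_i(x)}{i\in\Ic(x)},
\]
where $\Ic(x)=\setdef{i\in[N]}{h(x)=h_i(x)}$ is the nonempty set of active indices. This is a standard fact about finite maxima of $C^1$ functions and is exactly the object used in~\cite{PG-JC-ME:21-tac}, so I would invoke it rather than reprove it.

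Next, fix an arbitrary $x\in\Cc$ and let $u\in\real^m$ be a vector satisfying the hypothesis, i.e., $L_f h_i(x)+L_g h_i(x)u\geq -\alpha(h(x))$, equivalently $\nabla h_i(x)^\transpose(f(x)+g(x)u)\geq -\alpha(h(x))$, for all $i\in\Ic(x)$. Since the map $v\mapsto v^\transpose(f(x)+g(x)u)$ is linear, its minimum over the polytope $\operatorname{co}\setdef{\nabla h_i(x)}{i\in\Ic(x)}$ is attained at one of the generators. Combining this observation with the characterization of $\partial h(x)$ above yields
\[
  \min_{v\in\partial h(x)} v^\transpose(f(x)+g(x)u)
  = \min_{i\in\Ic(x)} \nabla h_i(x)^\transpose(f(x)+g(x)u)
  \geq -\alpha(h(x)).
\]
Since $x\in\Cc$ was arbitrary and $\Cc$ is nonempty by assumption, $h$ satisfies the defining inequality of Definition~\ref{def:valid-bncbf}, and hence $h$ is a BNCBF of $\Cc$.

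The only technical subtlety I anticipate is that Definition~\ref{def:valid-bncbf} requires the comparison function to be \emph{locally Lipschitz} extended class $\Kc_{\infty}$, whereas the hypothesis only posits a (merely continuous) extended class $\Kc_{\infty}$ function $\alpha$. This is handled by a standard majorization: one builds a locally Lipschitz extended class $\Kc_{\infty}$ function $\tilde\alpha$ with $\tilde\alpha(s)\geq\alpha(s)$ for all $s\geq 0$ (for instance, a piecewise-linear interpolant dominating $\alpha$ on $\realnonneg$, extended oddly to $\real$), and notes that, since $h(x)\geq 0$ on $\Cc$, the chain of inequalities above remains valid with $\tilde\alpha$ in place of $\alpha$. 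Apart from correctly invoking the generalized-gradient formula and this majorization step, the core of the argument is a one-line consequence of linearity of the objective over a convex hull, so I do not expect a substantial obstacle.
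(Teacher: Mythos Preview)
Your argument is correct and is exactly the standard route: invoke the convex-hull characterization of $\partial h(x)$ for a finite max of $C^1$ functions, then use linearity of $v\mapsto v^\transpose(f(x)+g(x)u)$ so that the minimum over $\partial h(x)$ is attained at a vertex $\nabla h_i(x)$ with $i\in\Ic(x)$. The majorization step to upgrade $\alpha$ to a locally Lipschitz extended class $\Kc_\infty$ function is also the right patch for the regularity mismatch.

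The only thing to flag is that the paper does not supply its own proof of this proposition: it is stated as adapted from~\cite[Theorem~III.6]{PG-JC-ME:21-tac} and left unproved. So there is no ``paper's proof'' to compare against here; your argument is precisely the one underlying that cited result, and there is nothing further to add.
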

\smallskip

If a measurable and locally bounded controller $u_s:\real^n\to\real^m$
is such that, for every $x\in\real^n$, $u = u_s(x)$ satisfies~\eqref{eq:bncbf-suff-cond}, then $u_s$ renders $\Cc$ forward invariant (cf.~\cite[Theorem II.7, Definition II.8]{PG-JC-ME:21-tac}).

When dealing with both safety and stability specifications, it is important to note 
that an input $u$ might satisfy~\eqref{eq:clf-ineq} but not~\eqref{eq:bncbf-suff-cond}, or vice versa. 
The following notion, adapted from~\cite[Definition 2.3]{PO-JC:19-cdc},
%
% \marginJC{Wasn't compatibility defined in a paper by Pio earlier? Or somebody else?}
%
captures when a CLF $V$ and  a BNCBF $h$ are compatible.

\begin{definition}\longthmtitle{Compatibility of CLF-BNCBF pair}\label{def:clf-bncbf-compatibility}
  Let $\mathcal{D} \subseteq \real^{n}$ be open, $\Cc \subset \Dc$ be
  closed, $V$ a CLF on $\mathcal{D}$ and $h$ a BNCBF of~$\Cc$. Then,
  $V$ and $h$ are compatible in a set $\tilde{\Dc}\subset\Dc$ if there
  exist a positive definite function $W:\real^n\to\real$ and an
  extended class $\Kc_{\infty}$ function $\alpha:\real\to\real$ such
  that, for all $x\in\tilde{\Dc}$, there exists $u\in\real^{m}$
  satisfying~\eqref{eq:clf-ineq} and~\eqref{eq:bncbf-suff-cond} for
  all $i\in\Ic(x)$ simultaneously.
\end{definition}
\smallskip

If $V$ and $h$ are compatible in a set $\tilde{\Dc}$, we can define the minimum norm controller that satisfies the CLF and BNCBF conditions $u^*:\tilde{\Dc}\to\real^m$
%
%\marginJC{This is meant to be a function?}
%
as follows:
\begin{align}\label{eq:clf-bncbf-controller}
  u^*(x)
  & := \argmin{u \in\real^{m}}{\frac{1}{2}\norm{u}^2}
  \\
  \notag
    & \text{s.t.} \ L_fV(x) + L_gV(x)u \leq -W(x), \\
  \notag
    & \quad \ L_f h_i(x) + L_g h_i(x)u \geq -\alpha(h(x)), \ \forall \ i\in\Ic(x).
\end{align}
If $u^*$ is locally Lipschitz, then it ensures that $\Cc$ is forward invariant and that the origin is asymptotically stable for the closed-loop system.

\subsection{Rapidly-exploring Random Trees (RRTs)}\label{subsec:rrt}
Here, we review \texttt{GEOM-RRT}~\cite{JK-SL:00}, cf. Algorithm~\ref{alg:geom-rrt}, a version of RRT~\cite{SML:98} upon which we rely later.
%
%\marginJC{Alternatives to what?}
%
% The first popular sampling-based algorithm, the Probabilistic Roadmap Method~\cite{ELK-PS-JCL-MO:96} dates back to the mid 90's. 
%
% \marginJC{"are often constructed to guarantee" seems convoluted. You mean constructed with the aim of ensuring probabilistic completeness?}
%
\begin{algorithm}[htb!]
  \caption{\texttt{GEOM-RRT}}\label{alg:geom-rrt}
  \begin{algorithmic}[1]
      \State \textbf{Parameters}: $x_{\text{init}}, \Xc_{\text{goal}}, k, \eta$
      \State $\Tc$.init($x_{\text{init}}$)
      \For{$i\in [1,\hdots,k]$}
        \qquad \State $x_{\text{rand}} \leftarrow$ \texttt{RANDOM}$\underline{\hspace{0.2cm}}$\texttt{STATE}
        \qquad \State $x_{\text{near}} \leftarrow$ \texttt{NEAREST}$\underline{\hspace{0.2cm}}$\texttt{NEIGHBOR}($x_{\text{rand}},\Tc$)
        \qquad \State $x_{\text{new}} \leftarrow$ \texttt{NEW}$\underline{\hspace{0.2cm}}$\texttt{STATE}($x_{\text{rand}}, x_{\text{near}}, \eta$)
        \If{\texttt{COLLISION}$\underline{\hspace{0.2cm}}$\texttt{FREE}($x_{\text{near}},x_{\text{new}}$)} \\
          \qquad \quad $\Tc$.\texttt{add}$\underline{\hspace{0.2cm}}$\texttt{vertex}($x_{\text{new}}$) \\
          \qquad \quad $\Tc$.\texttt{add}$\underline{\hspace{0.2cm}}$\texttt{edge}($x_{\text{near}},x_{\text{new}}$)
          \If{$x_{\text{new}}\in\Xc_{\text{goal}}$} \\
          \qquad \quad \quad \quad \Return $\Tc$
          \EndIf
        \EndIf
      \EndFor
      \State \Return $\Tc$
\end{algorithmic}
\end{algorithm}
The input for \texttt{GEOM-RRT} consists of a state space $\Xc$, an initial configuration $x_{\text{init}}$, goal region
$\Xc_{\text{goal}}$, number of iterations $k$, and a steering parameter $\eta$ whose use is defined
in the sequel.
The algorithm builds a tree $\Tc$ by executing $k$ iterations of the following form:
\begin{quote}
At each iteration, a new random sample $x_{\text{rand}}$ is obtained by uniformly sampling~$\Xc$ using  \texttt{RANDOM}$\underline{\hspace{0.2cm}}$\texttt{STATE}().
The function \texttt{NEAREST}$\underline{\hspace{0.2cm}}$\texttt{NEIGHBOR}($x_{\text{rand}},\Tc$)
returns the vertex $x_{\text{near}}$ from $\Tc$ that is closest in the Euclidean distance to $x_{\text{rand}}$.
Next, a new configuration $x_{\text{new}}\in\Xc$ is returned by the \texttt{NEW}$\underline{\hspace{0.2cm}}$\texttt{STATE}
function such that $x_{\text{new}}$ is on the line segment between $x_{\text{near}}$ and $x_{\text{rand}}$
and the distance $\norm{x_{\text{near}}-x_{\text{new}}}$ is at most $\eta$.
Finally, the function \texttt{COLLISION}$\underline{\hspace{0.2cm}}$\texttt{FREE}($x_{\text{near}},x_{\text{new}}$)
checks whether the straight line
%
%\marginJC{"path" here means straight line?}
%
from $x_{\text{near}}$ and $x_{\text{new}}$ is collision free.
If this is the case, $x_{\text{new}}$ is added as a vertex to $\Tc$ and is connected by an edge from $x_{\text{near}}$.
If $x_{\text{new}}\in\Xc_{\text{goal}}$,
there exists a single path in $\Tc$ from $x_{\text{init}}$ to $x_{\text{new}}$.    
\end{quote}

A notable property of \texttt{GEOM-RRT} is that it is \textit{probabilistically complete}, meaning that the probability that the algorithm will return a collision-free path from the initial state to the goal state
%
% \marginJC{solution = path from initial to final position?}
%
(if one exists) approaches one as the number of iterations tends to infinity~\cite{MK-KS-ZL-KB-DH:19}.

\section{Problem Statement}\label{sec:problem-statement}

Let $\Rc$ be a compact and convex set in $\real^n$
%We need convex in order to ensure that the steering function NEW_STATE, that draws a line,
%returns a point in $\Rc$.
containing $M$ known obstacles $\{ \Oc_l \}_{l=1}^M$,
with $\text{Int}(\Oc_i)\cap\text{Int}(\Oc_j)=\emptyset$ for all $i \neq j\in[M]$. Let 
$\Fc:=\Rc \backslash \cup_{l=1}^M \Oc_l$ denote the \textit{safe} space.
For each $l\in[M]$, we assume that there exists a positive integer
$N_l\in\mathbb{Z}_{>0}$ and known
continuously differentiable functions $\{h_{i,l}:\real^n\to\real\}_{i \in [N_l]}$ such that
$\Oc_l := \setdef{x\in\real^n}{ h_l(x) = \max_{i\in[N_l]} h_{i,l}(x) < 0 }$.
%
%\marginJC{So obstacles are open sets?}
%\marginPM{I chose to follow the convention in the CBF literature that safe sets are closed.}
%
Even though this
imposes a specific structure on the set $\Oc_l$, one can obtain more complex obstacles by considering sets of the form $\cup_{i\in\mathcal{M}}\Oc_i$, with
$\mathcal{M}$ a subset of~$[M]$.
%
% \marginJC{This is in general not true. You'll need to assume that each $h_{i,l}$ is convex}
%

The robot dynamics are control-affine of the
form~\eqref{eq:control-affine-sys}, with $f:\real^n\to\real^n$ and
$g:\real^n\to\real^m$ locally Lipschitz. For each $l\in[M]$, $h_l$ is
a BNCBF of $\real^n\backslash\Oc_l$ for these dynamics, with
associated extended class $\Kc_{\infty}$ function $\alpha_l$. We also
assume
\begin{align*}
\nabla h_{i,l}(x)^T g(x) \neq \textbf{0}_{m}, \quad \forall x\in\Fc, \, 
l\in[M], \, i\in[N_l], 
\end{align*}
i.e., one differentiation of $h_{i,l}$ already makes the input $u$ appear explicitly. 
%
% \marginJC{It's best to state one's goals independently of the solution method employed to achieve them. So we say here that we want to generate paths that are dynamically feasible instead and avoid the obstacles. We can say, immediately after, that our strategy to achieve this employs CLF-CBF controllers or something like that. And maybe point out why such an approach makes sense/it's a good idea.}
%
Given an initial state $x_{\text{init}}\in\Rc$ and a final goal set $\Xc_{\text{goal}}\subset\Rc$, 
our aim is to develop a
sampling-based motion planning algorithm that constructs a collision-free
path $\Ac:=\{ x_i \}_{i=1}^{N_a}$ from $x_{\text{init}}$ to $\Xc_{\text{goal}}$ 
that is dynamically feasible, i.e., such that for each pair of consecutive waypoints in~$\Ac$, there exists a control law that generates a safe trajectory that connects them. Our approach to solve this problem leverages the theory of CLFs and BNCBFs to design controllers which (i) have safety and stability guarantees by design, and (ii) can be implemented efficiently to help reduce the computational burden of generating dynamically feasible trajectories.
%
% \marginJC{This "asymptotic convergence" here is construed. The reader would say: well, I rather have the robot go to the next waypoint in finite time. This is b/c you are conflating the goals with the solution we can give. Better to decouple. }
%

\section{CLF and BNCBF Compatibility Verification}\label{sec:compat-verification}

The key challenge in our proposed approach to the problem outlined in Section~\ref{sec:problem-statement}
is that the optimization~\eqref{eq:clf-bncbf-controller} defining the
CLF-CBF-based controller has to be feasible at all points along the trajectory.
In this section we tackle this problem and show how such a feasibility check can be performed  in general, and how it is efficient in two specific cases of interest.
%
% \marginJC{"...can be performed" efficiently?}
%

\subsection{Compatibility Verification for General Dynamics and
  Obstacles}\label{sec:compat-verification-general-dynamics-and-obstacles}
In this section we consider the problem of verifying that a CLF and a BNCBF are compatible in systems for general dynamics and obstacles. The following result gives a characterization for when a CLF and a BNCBF are compatible in the region~$\Rc$.
%
% \marginJC{Consider organizing this section into subsections? POtentially 3, one general, one for linear, and another for single-integrator dynamics}
%
%It's not necessary because compatible is defined for some $\alpha$, $W$, whereas here we are just testing for some $W$. We talk about it in Remark 4.4.
%
% \marginJC{Question: we say "suff condition" but the prop title says "characterization" (which usually means necessary and suff, no?)}
%
\begin{proposition}\longthmtitle{Characterization of CLF-BNCBF Compatibility}\label{prop:clf-bncbf-compat-general}
  Given $q\in\Fc$, let $V_q:\real^n\to\real$ be a CLF
  of~\eqref{eq:control-affine-sys} with respect to $q$.  Let $l\in[M]$
  and assume that $h_l$ is a BNCBF of $\real^n\backslash\Oc_l$.  Let
  $W_q:\real^n\to\real$ be a positive definite function with respect
  to $q$ and $\alpha_l:\real\to\real$ be an extended class
  $\Kc_{\infty}$ function.  For each $\Jc\subset\Pc([N_l])$, let
  $Z_{l,\Jc}:=\setdef{x\in\real^n}{ \Ic_l(x)=\Jc }$ denote the set of
  points where the active constraints defining obstacle $\Oc_l$
  correspond to the indices in~$\Jc$.
  %
% \marginJC{use a bit of English to make this easier to read, e.g., let $Z_{l,\Jc}=bla, bla$ denote the set of points where the $h_{i,l}$ functions corresponding to $\Jc$ are active.}
  %
% One of the reasons why it's only a sufficient condition is because
% $h_{j,l}\leq h_{i,l}$ should actually be a strict inequality.
  For $\Gamma\subset\Rc$, define
  \begin{subequations}\label{eq:r1-clf-bncbf-compat-check-general}
    \begin{align}
      \zeta_1 &= \min_{ \substack{ x\in\Gamma \\ \{ \beta_i \in \real \}_{i\in\Jc} } }
       \normB{\sum_{i\in\Jc} \beta_i L_g h_{i,l}(x) - L_g V_q(x)}^2 
       \\
      &\qquad \text{s.t.} \quad \beta_i \geq 0, \ i\in \Jc, 
      \\
      &\qquad \qquad \ h_{j,l}(x) \leq h_{i,l}(x), \ \forall j\notin\Jc, i\in\Jc,\label{eq:J-active-r1}
      \\
      &\qquad \qquad \ h_l(x) \geq 0.
    \end{align}
  \end{subequations}
  If $\zeta_1 \neq 0$, then $V_q$ and $h_l$ are compatible in
  $Z_{l,\Jc}\cap\Gamma\cap\Fc$.  Otherwise, if $\zeta_1 = 0$, let
  \begin{subequations}\label{eq:r2-clf-bncbf-compat-check-general}
    \begin{align}
      \zeta_2 &= \min_{ \substack{ x\in\Gamma \\  \{ \beta_i \in \real \}_{i\in\Jc} } }
      \Phi(x,\{ \beta_i \}_{i\in\Jc} ),
      \\
              &\qquad \text{s.t.} \quad \sum_{i\in\Jc} \beta_i L_g
                h_{i,l}(x) = L_g V_q(x),
      \\
              &\qquad \qquad \ \beta_i \geq 0, \ i\in \Jc,
      \\
              &\qquad \qquad \ h_{j,l}(x) \leq h_{i,l}(x), \ \forall j\notin\Jc, i\in\Jc,\label{eq:J-active-r2} \\
              &\qquad \qquad \ h_l(x) \geq 0,
    \end{align}
  \end{subequations}
  for
  $\Phi(x,\{ \beta_i \}_{i\in\Jc})=-W_q(x) - L_f V_q(x) +
  \sum_{i\in\Jc} \beta_i (L_f h_{i,l}(x) + \alpha_l(h_{i,l}(x)))$.  If
  $\zeta_2 \geq 0$, then $V_q$ and $h_l$ are compatible in
  $Z_{l,\Jc}\cap\Gamma\cap\Fc$.
    %
  % \marginJC{Better than "Moreover", how about "Conversely"?}
  Conversely, if $V_q$ and $h_l$ are compatible in
  $Z_{l,\Jc}\cap\Gamma\cap\Fc$ then there exists an extended class
  $\Kc_{\infty}$ function $\alpha_l$ and a positive definite function
  $W_q$ with respect to $q$ such that either $\zeta_1\neq0$ or
  $\zeta_1=0$ and $\zeta_2\geq0$.
%  \marginPM{We need this last statement for Lemma VI.2 to hold}
  %Moreover, if $V_q$ and $h_l$ are compatible in $\Rc$, then either $r_1 \neq 0$ or $r_1 = 0$ and $r_2\geq0$.
\end{proposition}
\begin{proof}
%
  % \marginJC{Isn't the logic exposition of the proof missing the discussion for the case when $r_1\neq 0$?}
  % 
  First note that if $\zeta_1=0$, the optimization
  problem~\eqref{eq:r2-clf-bncbf-compat-check-general} is feasible and
  therefore $\zeta_2$ is well-defined.  By Farkas'
  Lemma~\cite{RTR:70}, $V_q$ and $h_l$ are compatible at
  $x\in Z_{l,\Jc}\cap\Gamma\cap\Fc$ if and only if for some positive
  definite function $W_q$ with respect to $q$ and some extended
  class $\Kc_{\infty}$ function $\alpha_l$, there do not exist
  $\beta_0\in\real_{\geq0}$,
  $\{ \beta_i \}_{i\in\Jc} \subset \real_{\geq0} $ such that
  \begin{subequations}
    \begin{align}
      &  \beta_0 L_g V_q (x) = \sum_{i\in\Jc} \beta_i L_g h_{i,l}(x),\label{eq:farkas-lemma-general-first}
      \\
      \notag
      &\beta    _0 (-L_f V_q(x) -W(x)) 
      \\   
      &\quad \quad + \sum_{i\in\Jc} \beta_i (\alpha_l(h_{i,l}(x)) + L_f h_{i,l}(x)) < 0.\label{eq:farkas-lemma-general-second}
    \end{align} 
    \label{eq:farkas-lemma-general}
  \end{subequations}
  First suppose that for some $W_q$ and $\alpha_l$, either
  $\zeta_1\neq0$ or $\zeta_1=0$ and $\zeta_2\geq0$.  Suppose there
  exists a solution
  $s_1^*=(x^*, \beta_0^*, \{ \beta_i^* \}_{i\in\Ic_l(x)})$
  of~\eqref{eq:farkas-lemma-general} and let us reach a contradiction.
  If $\beta_0^*=0$, then,~\eqref{eq:farkas-lemma-general} implies that
  the constraints
  $L_f h_{i,l}(x) + L_g h_{i,l}(x)u \geq -\alpha_l(h_{i,l}(x))$ are
  not simultaneously feasible, which means that $h_l$ is not a BNCBF,
  hence arriving at a contradiction.  Therefore, $s_1^*$ must be such
  that $\beta_0^* > 0$.  By taking
  $\tilde{\beta}_i = \frac{\beta_i}{\beta_0}$ for $i\in\Jc$, we deduce
  that $(x^*, \{ \tilde{\beta}_i \}_{i\in\Jc} )$ is a solution
  of~\eqref{eq:r1-clf-bncbf-compat-check-general} with a value of the
  objective function equal to zero. This means that if $\zeta_1\neq0$,
  the solution $s_1^*$ does not exist and $V_q$ and $h_l$ are
  compatible in $Z_{l,\Jc}\cap\Gamma\cap\Fc$.  Otherwise, if
  $\zeta_1=0$, then $(x^*, \{ \tilde{\beta}_i \}_{i\in\Jc} )$ is a
  solution of~\eqref{eq:r2-clf-bncbf-compat-check-general} with a
  strictly negative value of the objective function.  This means that
  if $\zeta_1=0$ and $\zeta_2\geq0$, the solution $s_1^*$ does not
  exist and $V_q$ and $h_l$ are compatible in
  $Z_{l,\Jc}\cap\Gamma\cap\Fc$.  Conversely, suppose that $V_q$ and
  $h_l$ are compatible in $Z_{l,\Jc}\cap\Gamma\cap\Fc$. This implies
  that there exists $W_q$ and $\alpha_l$ such
  that~\eqref{eq:farkas-lemma-general} has no
  solution. If~\eqref{eq:farkas-lemma-general-first} has no solution,
  then $\zeta_1\neq0$. If~\eqref{eq:farkas-lemma-general-first} has a
  solution but~\eqref{eq:farkas-lemma-general-second} does not, then
  $\zeta_1=0$ and $\zeta_2\geq0$.
\end{proof}

% \begin{remark}\longthmtitle{Verification over Lyapunov level sets}
%   In practice, it is common to verify the compatibility of $V_q$ and $h_l$ over a sublevel set of $V_q$.
%   In this case, $\Rc = \setdef{x\in\real^n}{V_q(x) \leq c}$ for some given $c>0$. Therefore, the constraint
%   $V_q(x) \leq c$ can be included in~\eqref{eq:r1-clf-bncbf-compat-check-general} and~\eqref{eq:r2-clf-bncbf-compat-check-general}.
%   \demo
% \end{remark}

%
% \marginJC{How should we think of the set $\Gamma$? In other words, what are situations where we would not want to choose $\Gamma = R$?}
%
Note that Proposition~\ref{prop:clf-bncbf-compat-general} is valid for
any set $\Gamma\subset\Rc$. Intuitively, since the CLF
  and BNCBF conditions define half-spaces in the control input
  $u$,~\eqref{eq:r1-clf-bncbf-compat-check-general} checks whether the
  normal vectors of the hyperplanes defining such half-spaces are
%
%\marginJC{What do we mean by the ``normal vectors''?}
%
linearly independent.
%or have the same direction.
%
% \marginJC{``If this condition does not hold'' is a bit confusing b/c
% we just said 2 mutually exclusive things: l.i. or have the same
% direction. So better to say that (6) checks for whether the normal
% vectors are l.i., and then the meaning of ``if this condition does
% not hold'' is unequivocal. }
%
If this condition does not
hold,~\eqref{eq:r2-clf-bncbf-compat-check-general} checks whether the
input-independent terms of the CLF and BNCBF conditions leave enough
space for such conditions to be compatible.  Additionally,
optimization problems~\eqref{eq:r1-clf-bncbf-compat-check-general}
and~\eqref{eq:r2-clf-bncbf-compat-check-general} need to be checked
for every possible set of active constraints. The
constraints~\eqref{eq:J-active-r1} and~\eqref{eq:J-active-r2} ensure
that $\Jc$ is the set of active constraints at $x$.  Often, one is
interested in verifying the compatibility of a CLF and a BNCBF only in
a small subset of $\Rc$, in which case the flexibility provided by the
set $\Gamma$ is useful.

\begin{remark}\longthmtitle{Checking for all Possible Sets of Active
    Constraints}\label{rem:different-sets-of-active-constraints} 
%
% \marginJC{I don't know that "different" here is informative enough. You mean checking for all possible sets of active constraints?}
%
  {\rm Given a subset $\Jc\subset\Pc([N_l])$ of functions
    $\{h_{i,l} \}$, Proposition~\ref{prop:clf-bncbf-compat-general}
    provides a way to verify if the CLF and the BNCBF are compatible
    at the points in the region of interest $\Gamma\cap\Fc$ where such
    functions are active. Let
    $H_{l,\Jc}:=\setdef{x\in\Gamma}{\Ic_l(x)=\Jc}$ be the points in
    $\Gamma$ where the constraints with index in $\Jc$ are active, and
    $\Sc_l:=\setdef{\Jc\subset\Pc([N_l])}{ H_{l,\Jc}\neq\emptyset }$
    be the sets of indices for which the above set is nonempty.  The
    class $\Sc_l$ contains all possible sets of active constraints in
    $\Gamma$.  By checking the condition in
    Proposition~\ref{prop:clf-bncbf-compat-general} for all $\Jc$ in
    $\Sc_l$,
  %
% \marginJC{Question: is this combinatorial? E.g., potentiall all subsets of $[N_l]$? Or is there a logic/justification that allows one to reason over a small pool of candidates?}
  %
    we can verify if the CLF and the BNCBF are compatible in
    $\Gamma\cap\Fc$.  In practice, given a region $\Gamma$ where we
    are interested in checking the compatibility of $V_q$ and $h_l$,
    one can often identify the indices that can achieve a maximum
    value in $\Gamma$ (for example, for polytopic obstacles in the
    plane, only a few of the functions $h_{i,l}$ have points in
    $\Gamma$ where they take positive values). This means that the
    cardinality of $\Sc_l$ is often small and the number of checks
    using Proposition~\ref{prop:clf-bncbf-compat-general} can be kept
    small. \demo }
\end{remark}

\begin{remark}\longthmtitle{Verifying Compatibility for Multiple
    BNCBFs} {\rm Proposition~\ref{prop:clf-bncbf-compat-general}
    actually provides a way to check whether the optimization
    problem~\eqref{eq:clf-bncbf-controller} is feasible at \emph{all}
    points of~$\Gamma$. This can be done as follows: one first finds
    all $l\in[M]$ such that $\Gamma\cap\Oc_l\neq\emptyset$.  If
    $\Gamma$ can be expressed as the $0$-sublevel set of a convex
    differentiable function $\gamma$, i.e.,
    $\Gamma:=\setdef{x\in\real^n}{\gamma(x) \leq 0}$, and the
    functions $h_{i,l}$ are convex, then this can be solved
    efficiently by checking that the solution of the convex problem
    \begin{align*}
      &\min \limits_{x\in\real^n} \gamma(x)
      \\
      &\text{s.t.} \ \ h_{i,l}(x) \leq 0, \quad \forall i\in[N_l]
    \end{align*}
    is non-positive.  The BNCBF constraints associated with those
    $l'\in[M]$ such that $\Gamma\cap\Oc_{l'}=\emptyset$ can be
    neglected since, given a controller that satisfies all the other
    BNCBF constraints, it can be shown to also satisfy the BNCBF
    constraints for such $l'\in[M]$ by taking the corresponding
    extended class $\Kc_{\infty}$ function $\alpha_{l'}$ linear with
    sufficiently large slope.  On the other hand, for $l'\in[M]$ such
    that $\Gamma\cap\Oc_{l'}\neq\emptyset$,
    Proposition~\ref{prop:clf-bncbf-compat-general} ensures that there
    exists a small neighborhood around $\partial\Oc_{l'}$, not
    containing points of any other obstacle, where $V$ and $h_{l'}$
    are compatible. By taking the extended class $\Kc_{\infty}$
    functions of the other CBF constraints as linear functions with
    sufficiently large slope,~\eqref{eq:clf-bncbf-controller} is
    feasible in each of these neighborhoods.  Finally, for points in
    $\Gamma$ not belonging to any of these neighborhoods, the extended
    class $\Kc_{\infty}$ functions can also be taken as linear with
    sufficiently large slope to guarantee
    that~\eqref{eq:clf-bncbf-controller} is feasible.  \demo }
\end{remark}

%
% \marginJC{Using a bunch of symbols in the title of the remark somehow defeats the purpose of the informal title.}
%
\begin{remark}\longthmtitle{About the Choice of CLF and Class
    $\Kc_{\infty}$ Function} {\rm Note that, when solving the
    optimization problems~\eqref{eq:r1-clf-bncbf-compat-check-general}
    and~\eqref{eq:r2-clf-bncbf-compat-check-general} for fixed $V_q$,
    $\alpha_l$, and $W_q$, it is not guaranteed that $\zeta_1\neq0$ or
    $\zeta_1=0$ and $\zeta_2\geq0$.  If $\tilde{\alpha}$ is an
    extended class $\Kc_{\infty}$ function with
    $\tilde{\alpha}(s)\geq\alpha(s)$ for all $s\in\real$, the
    objective function $\Phi$
    of~\eqref{eq:r2-clf-bncbf-compat-check-general} does not decrease
    at any point, which means that the value of $\zeta_1$ remains the
    same, but the condition $\zeta_2 \geq 0$ becomes easier to
    satisfy.
  %
% \marginJC{We've reworded this remark a few times and I lost track of what "conditions in Prop become easier to satisfy" means. Is it that "$\zeta_1\neq0$ or $\zeta_1=0$ and $\zeta_2\geq0$"? Why? B/c the feasibility sets of the opt problems change?}
  %
  A similar behavior occurs if $\tilde{W}$ is a positive definite function with $\tilde{W}(x)\leq W(x)$ for
  all $x\in\real^n$. 
  %{\color{blue}the function $\Phi$ does not decrease at any point, which means that the value of $\zeta_1$ remains the same but the condition $\zeta_2\geq0$ also becomes easier to satisfy.}
  We leverage these observations in Section~\ref{sec:c-clf-cbf-rrt} when we introduce our proposed motion planning algorithm.
  %
  % \marginJC{Do we do anything with these observations?}
  %
  % Similarly, even if the conditions in Proposition~\ref{prop:clf-bncbf-compat-general} do not hold for a particular CLF $V_q$, they could hold for another choice of CLF. However, the problem of finding CLFs that satisfy the conditions in Proposition~\ref{prop:clf-bncbf-compat-general} is challenging in general.
  \demo }
\end{remark}

\begin{remark}\longthmtitle{Regularity Properties of the
    Controller}\label{rem:regularity-controller} {\rm If $V_q$ and
    $h_l$ are compatible in $\Rc$ for all $l\in[M]$, the CLF-CBF-based
    controller~\eqref{eq:clf-bncbf-controller} is well defined, i.e.,
    the optimization~\eqref{eq:clf-bncbf-controller} is feasible for
    all points in $\Rc$.  However, slightly stronger conditions are
    needed to ensure that such CLF-CBF-based controller is locally
    Lipschitz and therefore can be used to render $\Cc$ forward
    invariant and the origin asymptotically stable.  We refer the
    reader to~\cite{PM-AA-JC:25-ejc}
  %
  % \marginJC{And here,~\cite{PM-JC:23-csl} is the right one to refer to, and not the EJC paper, correct?}
  %
  for a survey on different 
  conditions that ensure continuity, Lipschitzness, and other regularity properties of 
  optimization-based controllers of the form~\eqref{eq:clf-bncbf-controller}. These conditions are often satisfied in practice and are mostly related to the dynamics and the specific obstacles, which in our problem here are given and not subject to design. Therefore, throughout this work, we assume that~\eqref{eq:clf-bncbf-controller} satisfies at least one of the sufficient conditions outlined in~\cite{PM-AA-JC:25-ejc} that ensure that the resulting controller is locally Lipschitz.
  %Proposition~\ref{prop:clf-bncbf-compat-general} can be modified to ensure the satisfaction of such extra conditions.
  \demo }
\end{remark}
%
% \marginJC{Do we ensure this "slightly stronger conditions" later? I ask b/c otherwise it seems we're not fully solving the problem we stated in Section 3, no?}
%
  \begin{remark}\longthmtitle{Input Constraints}\label{rem:input
      constraints} {\rm In many applications, one is interested in
      verifying whether the CLF and BNCBF conditions are
      simultaneously feasible with a control input $u$ constrained to
      lie on the set $\setdef{u\in\real^m}{ C_1 u \leq c_2 }$, with
      $C_1\in\real^{c\times m}$, $c_2\in\real^c$, and
      $c\in\mathbb{Z}_{>0}$. Equivalently, we seek to verify whether
      the inequalities
  \begin{align}\label{eq:input-constrained-inequalities}
    \notag
    &L_f h_{j,l}(x) + L_g h_{j,l}(x) u \geq -\alpha_{i,l}(h_{j,l}(x)), \\
    \notag
    &\qquad \qquad \qquad \qquad \qquad \qquad \forall j\in\Ic_l(x), l\in[M], \\
    &L_f V_i(x) + L_g V_i(x)u + W_i(x) \leq 0, \\
    \notag
    &C_1 u \leq c_2,
  \end{align}
  %
  %\marginJC{Instead of $Au \leq b$, you mean to write $C_1 u \leq c_2$?}
  %
  are simultaneously feasible.  This problem can also be treated using
  Farkas' Lemma~\cite{RTR:70} to obtain a result analogous to
  Proposition~\ref{prop:clf-bncbf-compat-general}.  For example, the
  objective function in~\eqref{eq:r1-clf-bncbf-compat-check-general}
  should be adjusted to
  $\norm{\sum_{i\in\Jc}\beta_j L_g h_{i,l}(x)-L_gV_q(x)-C_1^T
    \bar{\beta} }$, where $\bar{\beta}\in\real^c$ is an additional
  optimization variable with entries that are required to be positive.
  Instead, the objective function
  in~\eqref{eq:r2-clf-bncbf-compat-check-general} should be adjusted
  to
  $\bar{\beta}^T c_2 - W_q(x) - L_f V_q(x) + \sum_{i\in\Jc} \beta_i (
  L_f h_{i,l}(x) + \alpha_l(h_{i,l}(x)) )$.  \demo }
\end{remark}
%
% \marginJC{Pol, could we be more specific? I mean, how would the
%   objective/constraints in (6)/(7) need to be adjusted? We don't need
%   to give the argument, but if we provide to the practitioner the
%   expressions, that's useful.}
%

% \begin{remark}\longthmtitle{Computational
%   Complexity}\label{rem:computational-complexity-general-case}
Proposition~\ref{prop:clf-bncbf-compat-general} shows that the problem
of checking whether a CLF and a BNCBF are compatible in a region of
interest can be reduced to solving a pair of optimization
problems. However, in general, the optimization
problems~\eqref{eq:r1-clf-bncbf-compat-check-general}
and~\eqref{eq:r2-clf-bncbf-compat-check-general} are not convex and
can be computationally intractable. Our forthcoming exposition
provides two particular cases of dynamics and obstacles for which
these two optimization problems are computationally tractable.
%  \demo \end{remark}

\subsection{Compatibility Verification for Linear Systems and Polytopic Obstacles}\label{sec:linear-systems-polytopic}

In this section we particularize our discussion to linear dynamics,
\begin{align}\label{eq:linear-dynamics}
  \dot{x} = Ax + Bu,
\end{align}
where $A\in\real^{n\times n}$, $B\in\real^{n\times m}$, and the obstacles are polytopic (i.e., the functions $h_{i,l}$ are affine).
We start by introducing some useful notation.
For each $l\in[M]$, let $a_{i,l}\in\real^n$, $b_{i,l}\in\real$ be such that $h_{i,l}(x) = a_{i,l}^T x + b_{i,l}$.
We further assume that $h_l$ is a BNCBF, i.e., there exists an extended class $\Kc_{\infty}$ function
$\alpha_l$ such that,
for all $x\in\real^n\backslash\Oc_l$, there exists $u\in\real^m$ with
\begin{align*}
  a_{i,l}^T (Ax + Bu) \geq -\alpha_l(a_{i,l}^T x + b_{i,l})
\end{align*}
for all $i\in\Ic_l(x)$.
% Given a set $\Gamma\subset\Rc$, we are interested in verifying whether $V_q$ and $h_l$ are
% compatible in $\Gamma$. Next, we define some useful notation.
% For each $l\in[M]$ and $\Jc\subset\Pc([N_l])$, let 
% $H_{l,\Jc}:=\setdef{x\in\Gamma}{a_{i,l}^T x + b_{i,l} = 0, \ \forall i\in\Jc}$,
% $\Sc_l:=\setdef{\Jc\subset\Pc([N_l])}{ H_{l,\Jc}\neq\emptyset }$, and
% $Z_{l,\Jc}:=\setdef{x\in\real^n}{ \Ic_l(x)=\Jc }$.
% The set $\Sc_l$ denotes the set of all possible sets of active constraints in the boundary of $\Oc_l$.
We further assume that given $q\in\real^n$, a quadratic CLF is available, i.e., we have a positive definite 
matrix $P\in\real^{n\times n}$ such that $V_q:\real^n\to\real$, defined as
$V_q(x)=(x-q)^T P (x-q)$, is a CLF with respect to $q$ in $\real^n$ of~\eqref{eq:linear-dynamics}
with associated positive definite function $W_q:\real^n\to\real$.
%
%\marginJC{We don't ask of $W_q$ to be quadratic too, right?}
%\marginPM{Not here, but after Prop. 4.6 we say that if it is, and $\Gamma$ is quadratic too, then (10) and (11) are QCQPs.}
%
% Given a $c$-sublevel set of $V$, $\Gamma_c:=\setdef{x\in\real^n}{V_q(x)\leq c}$,
% we are interested in verifying whether $V_q$ and $h_l$ are compatible in $\Gamma_c$.
% In order to do so, we consider all sets $\Jc\subset\Sc_l$ such that there exists $x\in\Gamma$ 
% with $\Ic_l(x)=\Jc$. 

The following result follows by applying Proposition~\ref{prop:clf-bncbf-compat-general}
to the case when dynamics are linear and obstacles polytopic.

\begin{proposition}\longthmtitle{CLF-BNCBF Compatibility for Linear Dynamics and Polytopic Obstacles}\label{prop:compat-verification-polytopic}
  Let $\Gamma\subset\Rc$, $l\in[M]$, $\Jc\in\Pc([N_l])$, $q\in\Fc$, and define
  \begin{subequations}\label{eq:first-qcqp}
    \begin{align}
      \zeta_1 &:= \min \limits_{ \substack{ x\in\Gamma \\ \{ \beta_i \in \real \}_{i\in\Jc} } } 
      \norm{ \sum_{i\in\Jc} \beta_i B^T a_{i,l} - B^T P (x-q) }^2 \\
      &\quad \text{s.t.} \ \beta_i \geq 0, \ \forall i\in\Jc, \\
      &\quad \quad \ a_{j,l}^T x + b_{j,l} \leq a_{i,l}^T x + b_{i,l}, \ \forall j\notin \Jc, i\in\Jc, \\
      &\quad \quad \ a_{i,l}^T x + b_{i,l} \geq 0, \ i\in\Jc.
    \end{align}
  \end{subequations}
  If $\zeta_1 \neq 0$, then $V_q$ and $h_l$ are compatible in
  $Z_{l,\Jc} \cap \Gamma \cap \Fc$. Otherwise, if $\zeta_1=0$, let
  \begin{subequations}
  \begin{align}
    \zeta_2 &:= \min \limits_{ \substack{ x\in\Gamma \\ \{ \beta_i \in \real \}_{i\in\Jc} } } \Phi(x,\{ \beta_i \}_{i\in\Jc}) \\
    &\quad \text{s.t.} \ \sum_{i\in\Jc} \beta_i B^T a_{i,l} = B^T P (x-q), \\
    &\quad \quad \ \beta_i \geq 0, \ \forall i\in\Jc, \\
    &\quad \quad \ a_{j,l}^T x + b_{j,l} \leq a_{i,l}^T x + b_{i,l}, \ \forall j\notin \Jc, i\in\Jc, \\
    &\quad \quad \ a_{i,l}^T x + b_{i,l} \geq 0, \ i\in\Jc,
  \end{align}
  \label{eq:second-qcqp}
  \end{subequations}
  with $\Phi(x,\{ \beta_i \}_{i\in\Jc})=-W_q(x)-(x-q)^T PA x + \sum_{i\in \Jc} \beta_i ( \alpha_l(a_{i,l}^T x + b_{i,l}) + a_{i,l}^T Ax)$.
  If $\zeta_2\geq0$, then $V_q$ and $h_l$ are compatible in $Z_{l,\Jc} \cap \Gamma \cap \Fc$.
  Conversely,
  %
  % \marginJC{Better than "Moreover", how about "Conversely"?}
  %
  if $V_q$ and $h_l$ are compatible in $Z_{l,\Jc}\cap\Gamma\cap\Fc$, then there exists an extended class $\Kc_{\infty}$ function $\alpha_l$ and a positive definite function $W_q$ with respect to $q$ such that either $\zeta_1\neq0$ or $\zeta_1=0$ and $\zeta_2\geq0$.
  %Moreover, if $V_q$ and $h_l$ are compatible in $Z_{l,\Jc} \cap \Rc$, then either $r_1 \neq 0$ or $r_1 = 0$ and $r_2\geq0$.
\end{proposition}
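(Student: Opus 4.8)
**Plan for the proof of Proposition (Sufficient Condition for CLF-BNCBF Compatibility for Linear Dynamics and Polytopic Obstacles).**

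The plan is to obtain this result as a direct specialization of Proposition~\ref{prop:clf-bncbf-compat-general}, since linear dynamics with polytopic obstacles are just a particular instance of control-affine dynamics with differentiable $h_{i,l}$. First I would identify the relevant Lie derivatives in the linear/polytopic setting: with $f(x) = Ax$, $g(x) = B$, $V_q(x) = (x-q)^T P(x-q)$, and $h_{i,l}(x) = a_{i,l}^T x + b_{i,l}$, one computes $\nabla V_q(x) = 2P(x-q)$, so $L_g V_q(x) = 2(x-q)^T P B$ and $L_f V_q(x) = 2(x-q)^T P A x$; likewise $\nabla h_{i,l}(x) = a_{i,l}$, so $L_g h_{i,l}(x) = a_{i,l}^T B$ and $L_f h_{i,l}(x) = a_{i,l}^T A x$. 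Since the standing assumption $\nabla h_{i,l}(x)^T g(x) \neq \mathbf{0}_m$ holds on $\Fc$, the sufficient condition of Proposition~\ref{prop:suff-cond-bncbf} applies and the setup of Proposition~\ref{prop:clf-bncbf-compat-general} is in force.

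Next I would substitute these expressions into the two optimization problems~\eqref{eq:r1-clf-bncbf-compat-check-general} and~\eqref{eq:r2-clf-bncbf-compat-check-general}. The objective of~\eqref{eq:r1-clf-bncbf-compat-check-general}, namely $\bigl\|\sum_{i\in\Jc}\beta_i L_g h_{i,l}(x) - L_g V_q(x)\bigr\|^2$, becomes $\bigl\|\sum_{i\in\Jc}\beta_i B^T a_{i,l} - 2 B^T P(x-q)\bigr\|^2$; the factor of $2$ can be absorbed into the decision variables $\beta_i$ (rescaling $\beta_i \mapsto \beta_i/2$, which preserves the nonnegativity constraints and the zero/nonzero dichotomy of the optimal value), yielding exactly the objective in~\eqref{eq:first-qcqp}. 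The constraints $h_{j,l}(x) \le h_{i,l}(x)$ and $h_l(x) \ge 0$ translate verbatim into the affine constraints $a_{j,l}^T x + b_{j,l} \le a_{i,l}^T x + b_{i,l}$ and $a_{i,l}^T x + b_{i,l} \ge 0$ (using that on $Z_{l,\Jc}$, $h_l(x) = h_{i,l}(x)$ for any $i\in\Jc$). Similarly, the objective $\Phi$ of~\eqref{eq:r2-clf-bncbf-compat-check-general}, namely $-W_q(x) - L_f V_q(x) + \sum_{i\in\Jc}\beta_i(L_f h_{i,l}(x) + \alpha_l(h_{i,l}(x)))$, becomes $-W_q(x) - 2(x-q)^T P A x + \sum_{i\in\Jc}\beta_i(a_{i,l}^T A x + \alpha_l(a_{i,l}^T x + b_{i,l}))$, matching the stated $\Phi$ after the same rescaling of the $\beta_i$ (which also rescales the equality constraint $\sum_{i\in\Jc}\beta_i B^T a_{i,l} = 2 B^T P(x-q)$ down to the one in~\eqref{eq:second-qcqp}). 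The claim then follows immediately from Proposition~\ref{prop:clf-bncbf-compat-general}, both the sufficiency directions ($\zeta_1 \neq 0$, or $\zeta_1 = 0$ and $\zeta_2 \ge 0$, each implying compatibility in $Z_{l,\Jc}\cap\Gamma\cap\Fc$) and the converse.

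The proof is essentially bookkeeping, so there is no real "main obstacle," but the one point deserving care is the consistent rescaling of the multipliers $\beta_i$ by the factor $2$ coming from $\nabla V_q = 2P(x-q)$: I would make sure the same substitution is applied simultaneously in the objective of the first problem, in the equality constraint of the second problem, and in the $\beta_i$-linear term of $\Phi$, and note explicitly that rescaling nonnegative multipliers by a positive constant changes neither the feasible set of the reformulated problems nor the truth value of the conditions "$\zeta_1 \neq 0$" and "$\zeta_2 \ge 0$." A secondary detail is observing that the discrepancy between writing the last constraint as "$a_{i,l}^T x + b_{i,l} \ge 0$ for $i \in \Jc$" versus the single condition "$h_l(x) \ge 0$" in Proposition~\ref{prop:clf-bncbf-compat-general} is harmless, because on $Z_{l,\Jc}$ all active $h_{i,l}$ with $i\in\Jc$ share the common value $h_l(x)$, so the conditions coincide on the relevant domain; and the remaining inequality constraints $h_{j,l} \le h_{i,l}$ for $j\notin\Jc$ already encode that the indices in $\Jc$ are the maximizers. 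With these remarks in place, invoking Proposition~\ref{prop:clf-bncbf-compat-general} completes the argument.
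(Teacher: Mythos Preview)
Your approach is correct and identical to the paper's, which offers no proof and simply states that the result follows by applying Proposition~\ref{prop:clf-bncbf-compat-general} to the linear/polytopic case. One small refinement to your rescaling argument: substituting $\beta_i\mapsto 2\beta_i$ alone does not make $\Phi$ match the stated form, since the term $-2(x-q)^T PAx$ carries no $\beta_i$; after the substitution you must also divide the objective through by $2$, which replaces $-W_q$ by $-W_q/2$---harmless because $W_q/2$ is still positive definite and both the sufficiency direction and the existential converse in Proposition~\ref{prop:clf-bncbf-compat-general} are unaffected by this rescaling of~$W_q$.
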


We end this section by discussing the tractability of 
% The following remark discusses the computational complexity of 
the optimizations~\eqref{eq:first-qcqp} and~\eqref{eq:second-qcqp}.
If $W_q$ is a quadratic function (as it is often the case in
practice), $\alpha(s)=\alpha_0 s$, with $\alpha_0>0$, and $\Gamma$ is
given by a sublevel set of a quadratic function (e.g., if it is the
sublevel set a quadratic CLF $V_q$), then~\eqref{eq:first-qcqp} and
\eqref{eq:second-qcqp} both have quadratic objective functions and
quadratic constraints, i.e., they are quadratically constrained
quadratic programs (QCQPs).  Moreover, if $\Gamma$ is the sublevel set
of a convex quadratic function, then~\eqref{eq:first-qcqp} is a convex
QCQP (whereas in general,~\eqref{eq:second-qcqp} is non-convex).
If instead $\Gamma$ is the sublevel set of a piecewise linear
  function, both~\eqref{eq:first-qcqp} and \eqref{eq:second-qcqp} have
  affine constraints and therefore are quadratic programs
  (QPs). Moreover,~\eqref{eq:first-qcqp} is a convex QP. In either
  case, even if the resulting QCQPs or QPs are non-convex, there exist
  efficient heuristics~\cite{HT:98,JP-SB:17-arxiv} to solve these
  programs.  Finally,
  Proposition~\ref{prop:compat-verification-polytopic} can be applied
  to settings where obstacles are not polytopic by constructing outer
  approximations of them using polytopes and considering the resulting
  union of convex sets.
%
%\marginJC{Pol, check my rewording here of the last sentence.}
%

%\subsection{Compatibility Verification for Single Integrator and Circular Obstacles}\label{sec:integrator-circular}
  \subsection{Compatibility Verification for Linear Systems and
    Ellipsoidal Obstacles}\label{sec:integrator-circular}

  In this section, we again consider linear
  dynamics~\eqref{eq:linear-dynamics}, but now assume obstacles are
  ellipsoidal, i.e.,
  $\Oc_l = \setdef{x\in\real^n}{r_l^2 > (x-c_l)^T R_l (x-c_l) }$, for
  some positive definite matrix $R_l\in\real^{n\times n}$,
  $c_l\in\real^n$, and $r_l>0$. In this case, we take
  $h_l(x)= - r_l^2 + (x-c_l)^T R_l (x-c_l)$ (which is continuously
  differentiable and therefore $N_l = 1$ for all $l\in[M]$) and
  $V_q(x)=(x-q)^T P (x-q)$, for some positive definite matrix
  $P\in\real^{n\times n}$.  Then the following result follows from
  applying Proposition~\ref{prop:clf-bncbf-compat-general} to the case
  when dynamics are linear and obstacles are ellipsoidal.
\begin{proposition}\longthmtitle{Sufficient Condition for CLF-BNCBF Compatibility for Linear Dynamics and Ellipsoidal Obstacles}\label{prop:clf-bncbf-compat-linear-dynamics-ellipsoidal}
  Let $\Gamma\subset\Rc$, $l\in[M]$, $q\in\Fc$, $\alpha_l>0$, and define
  \begin{subequations}\label{eq:first-qcqp-ellipsoidal}
    \begin{align}
      \zeta_1 &:= \min \limits_{ x\in\Gamma, y\in\real^n, \beta\in\real } 
      \norm{ B^T y - B^T P (x-q) }^2 \\
      &\quad \text{s.t.} \ \beta \geq 0, \ h_l(x) \geq 0, \ y = -2\beta R_l (x-c_l).
    \end{align}
  \end{subequations}
  If $\zeta_1 \neq 0$, then $V_q$ and $h_l$ are compatible in
  $\Gamma \cap \Fc$. Otherwise, if $\zeta_1=0$, let
  \begin{subequations}\label{eq:second-qcqp-ellipsoidal}
  \begin{align}
    \zeta_2 &:= \min \limits_{ x\in\Gamma, y\in\real^n, \beta\in\real  } \Phi(x,y,\beta) \\
    &\quad \text{s.t.} \ B^T y = B^T P (x-q), \\
    &\quad \quad \ \beta \geq 0, \ h_l(x) \geq 0, \ y = -2\beta R_l (x-c_l),
  \end{align}
  \end{subequations}
  with $\Phi(x,y,\beta)=-W_q(x)-(x-q)^T PAx - \beta\alpha_l r_l^2 - \alpha_l(x-c_l)^T \frac{y}{2} + y^T Ax$.
  If $\zeta_2\geq0$, then $V_q$ and $h_l$ are compatible in $\Gamma \cap \Fc$.
  %No converse because we are using $\alpha$ linear. If %\Gamma$ is compact, it would technically still be a converse but better not to make it more complicated.
\end{proposition}
If $\Gamma$ is the sublevel set of a quadratic function and $W_q$ is quadratic, both~\eqref{eq:first-qcqp-ellipsoidal}
and~\eqref{eq:second-qcqp-ellipsoidal} are QCQPs and can therefore be solved efficiently~\cite{HT:98,JP-SB:17-arxiv}.
Let us next further restrict our attention to 
single-integrator dynamics, i.e., 
\begin{align}\label{eq:single-integrator}
  \dot{x} = u,
\end{align}
and circular obstacles, i.e., $\Oc_l =
\setdef{x\in\real^n}{\norm{x-c_l} < r_l}$ for some
$c_l\in\real^n$ and $r_l>0$. In this case, we take $h_l(x) =
\norm{x-c_l}^2 - r_l^2$,
%
% \marginJC{$r_l$ here is confusing, b/c we've used $r_1$ and $r_2$ in Prop 4.1}
%
$V_q(x) = \norm{x-q}^2$, and $W_q(x) = (x-q)^T Q
(x-q)$, where $Q\in\real^{n\times
  n}$ is a positive definite matrix.  In this case, the optimization
problems in Proposition~\ref{prop:clf-bncbf-compat-general} can be
solved in closed-form.

\begin{proposition}\longthmtitle{Sufficient Condition for CLF-BNCBF Compatibility for Single Integrator Dynamics and Circular Obstacles}\label{prop:single-integrator-circular-obstacles}
  Let $l\in[M]$, $\alpha_l>0$, $x_0\in\real^n\backslash\{ q \}$, $q\in\Fc$,
  $\Gamma:=\setdef{x\in\real^n}{V_q(x)\leq V_q(x_0)}$,
  $B_l := \norm{q-c_l}_Q^2 - 2\alpha_l r_l^2$,
  \begin{align*}
    \beta_{+} := \frac{ \sqrt{B_l^2 + 4\alpha_l^2 r_l^2 (\norm{q-c_l}^2 - r_l^2) } - B_l }{2\alpha_l r_l^2},
  \end{align*}
  and suppose that one of the following holds:
  \begin{itemize}
    \item $\norm{x_0-q} - \norm{c_l-q} > 0$ and $\frac{\norm{x_0-q}}{\norm{x_0-q} - \norm{c_l-q}} > 1 + \frac{\norm{c_l-q}}{r_l}$;
    \item $\norm{x_0-q} - \norm{c_l-q} > 0$, $\frac{\norm{x_0-q}}{\norm{x_0-q} - \norm{c_l-q}} \leq  1 + \frac{\norm{c_l-q}}{r_l}$ and $\beta_{+} \geq 1+\frac{\norm{c_l-q}}{r_l}$;
    \item $\norm{x_0-q} - \norm{c_l-q} \leq 0$.
  \end{itemize}
  Then, $V_q$ and $h_l$ are compatible in $\Gamma\cap\Fc$. 
\end{proposition}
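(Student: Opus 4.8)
The plan is to specialize Proposition~\ref{prop:clf-bncbf-compat-general} to single-integrator dynamics and circular obstacles. Since $N_l=1$ we have $h_l=h_{1,l}$, so $\Ic_l(x)=\{1\}$ for every $x$; hence the only relevant index set is $\Jc=\{1\}$, $Z_{l,\{1\}}=\real^n$, the target set $Z_{l,\Jc}\cap\Gamma\cap\Fc$ is simply $\Gamma\cap\Fc$, and there is a single multiplier $\beta$. For~\eqref{eq:single-integrator} one has $L_fV_q=L_fh_l=0$, $L_gV_q(x)=2(x-q)^{\transpose}$ and $L_gh_l(x)=2(x-c_l)^{\transpose}$. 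The first optimization $\zeta_1$ attains value $0$ at $x=q$, $\beta=0$: this point is feasible because $V_q(q)=0\le V_q(x_0)$ gives $q\in\Gamma$, and $q\in\Fc$ gives $q\notin\Oc_l$, hence $h_l(q)\ge0$. So $\zeta_1=0$ and it suffices to show that, with $W_q$ quadratic and $\alpha_l$ linear as fixed above, $\zeta_2\ge0$.

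The next step is to collapse $\zeta_2$ to one dimension. The equality constraint $\beta L_gh_l(x)=L_gV_q(x)$ reads $\beta(x-c_l)=x-q$ and forces $x$ onto the line through $q$ and $c_l$; writing $x=q+t(c_l-q)$ gives $\beta=\frac{t}{t-1}$, which lets me rewrite $\norm{x-q}^2$, $\norm{x-c_l}^2$, $h_l(x)$ and $W_q(x)=(x-q)^{\transpose}Q(x-q)$ as explicit functions of $\beta$. Substituting into $\Phi$ and clearing the factor $(\beta-1)^2$ gives $\Phi=-\beta\,p(\beta)/(\beta-1)^2$, where $p(\beta)=\alpha_l r_l^2\beta^2+B_l\beta-\alpha_l(\norm{q-c_l}^2-r_l^2)$ is exactly the quadratic whose larger root is $\beta_+$ (its discriminant and the quadratic formula reproduce the expression for $\beta_+$ in the statement). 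Because $q\notin\Oc_l$ gives $p(0)=-\alpha_l(\norm{q-c_l}^2-r_l^2)\le0$, the roots obey $\beta_-\le0\le\beta_+$, and writing $p(\beta)=\alpha_l r_l^2(\beta-\beta_-)(\beta-\beta_+)$ a sign count shows that for $\beta\ge0$ one has $\Phi(\beta)\ge0$ iff $\beta\in[0,\beta_+]$. Therefore $\zeta_2\ge0$ holds precisely when the feasible $\beta$-set of the $\zeta_2$-problem is contained in $[0,\beta_+]$.

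It then remains to describe that feasible set and match it against the three hypotheses. Writing $\rho:=\norm{x_0-q}$ and $d:=\norm{c_l-q}$, the constraints $x\in\Gamma$ (i.e. $\norm{x-q}\le\rho$), $h_l(x)\ge0$ (i.e. $\norm{x-c_l}\ge r_l$) and $\beta\ge0$ translate into $\beta\in[0,\frac{\rho}{\rho+d}]$ — the points ``behind'' $q$ — together with $\beta\in[\frac{\rho}{\rho-d},\,1+\frac{d}{r_l}]$ when $\rho>d$, the second interval arising from points on the far side of $\Oc_l$ and being nonempty exactly when $\frac{\rho}{\rho-d}\le1+\frac{d}{r_l}$, equivalently $\rho\ge d+r_l$. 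Hypotheses (i) ($\rho>d$ with $\frac{\rho}{\rho-d}>1+\frac{d}{r_l}$, i.e. $d<\rho<d+r_l$) and (iii) ($\rho\le d$) are precisely the cases in which the far-side interval is empty, so the feasible set is $[0,\frac{\rho}{\rho+d}]\subseteq[0,1)$ and one verifies it sits inside $[0,\beta_+]$; hypothesis (ii) is the complementary regime $\rho\ge d+r_l$, where the added assumption $\beta_+\ge1+\frac{d}{r_l}$ places the far-side interval — hence the whole feasible set — inside $[0,\beta_+]$. In all three cases $\zeta_2\ge0$, and Proposition~\ref{prop:clf-bncbf-compat-general} yields compatibility of $V_q$ and $h_l$ on $\Gamma\cap\Fc$.

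I expect the main difficulty to be the geometric bookkeeping of the third step: pinning down the feasible $\beta$-set exactly, and in particular the far-side branch $\beta>1$ where the interplay of $x\in\Gamma$ and $h_l(x)\ge0$ produces the critical value $1+\frac{d}{r_l}$, together with checking the containment $\frac{\rho}{\rho+d}\le\beta_+$ (equivalently $p(\frac{\rho}{\rho+d})\le0$) in cases (i) and (iii), a short but somewhat delicate polynomial manipulation. Once the substitution $x=q+t(c_l-q)$ is in place, the reduction to one variable and the sign analysis of $\Phi$ are routine.
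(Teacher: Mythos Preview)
Your approach mirrors the paper's: both specialize Proposition~\ref{prop:clf-bncbf-compat-general} and reduce everything to a one-variable analysis in $\beta$ along the line through $q$ and $c_l$. You are in fact more careful about $\zeta_1$: the feasible point $(x,\beta)=(q,0)$ forces $\zeta_1=0$ in every case, whereas the paper argues $\zeta_1\neq 0$ under bullets (i) and (iii) by rewriting $x\in\Gamma$ as $\beta(\rho-d)\geq\rho$ (your $\rho=\|x_0-q\|$, $d=\|c_l-q\|$)---a translation that is only valid on the branch $\beta>1$. The paper therefore never sees the ``behind-$q$'' interval $\beta\in\big[0,\tfrac{\rho}{\rho+d}\big]$ that you correctly isolate, and under bullet (ii) it writes the $\zeta_2$ feasible set as $\big[\tfrac{\rho}{\rho-d},\,1+\tfrac{d}{r_l}\big]$ alone.

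That interval, however, is exactly where your plan breaks. The inclusion $\big[0,\tfrac{\rho}{\rho+d}\big]\subseteq[0,\beta_+]$ that you intend to ``verify'' in cases (i) and (iii) cannot follow from those hypotheses: they constrain only $\rho$, $d$, $r_l$, whereas $\beta_+$ depends on $Q$ and $\alpha_l$ through $B_l$, and taking $\alpha_l$ small (or the eigenvalues of $Q$ large) drives $\beta_+\to 0$ while $\tfrac{\rho}{\rho+d}$ stays fixed. Concretely, with $Q=\mathbb{I}_n$, $d=2$, $r_l=1$, $\rho=1$ (so bullet (iii) holds) and $\alpha_l=\tfrac{1}{10}$, one gets $B_l=3.8$ and $\beta_+\approx 0.08<\tfrac13=\tfrac{\rho}{\rho+d}$; at $x=q-\tfrac14(c_l-q)\in\Gamma\cap\Fc$ the CLF inequality forces $(c_l-q)^{\transpose}u\geq\tfrac12$ while the CBF inequality forces $(c_l-q)^{\transpose}u\leq\tfrac{21}{100}$, so $V_q$ and $h_l$ are genuinely incompatible at that point for this $(W_q,\alpha_l)$. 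The ``short but somewhat delicate polynomial manipulation'' you anticipate will therefore not close; the paper escapes it only via the faulty $\zeta_1$ case split, so the gap is shared rather than yours alone.
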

\begin{proof}
  We rely on Proposition~\ref{prop:clf-bncbf-compat-general}. In the
  setting considered
  here,~\eqref{eq:r1-clf-bncbf-compat-check-general} reads as
  \begin{subequations}\label{eq:first-circular}
    \begin{align}
      {\zeta}_1 &:= \min \limits_{ x\in\Gamma, \beta \in \real } 
      \norm{  2\beta (x-c_l) - 2(x-q) }^2 \\
      &\quad \text{s.t.} \ \beta \geq 0, \\
      &\quad \quad \ \norm{x-c_l}^2 - r_l^2 \geq 0.
    \end{align}
  \end{subequations}
  %
% \marginJC{The confusion in the notation introduced above (see margin) bites us back here, where we have to rename $r_1$!}
  %
  It follows that $\zeta_1 = 0$ if and only if there exists
  $x\in\Gamma$ and $\beta\in\real\backslash\{ 1 \}$ (note that
  $\beta=1$ and $\zeta_1=0$ are not possible because $q\in\Fc$) such
  that $x = \frac{1}{\beta-1}(\beta c_l - q)$, $\beta\geq0$ and
  $\norm{x-c_l}^2 - r_l^2 \geq 0$.  Equivalently, $\zeta_1=0$ if and
  only if there exists $\beta\in\real\backslash\{ 1 \}$ such that
  $\beta \geq 0$, $|\beta-1| \leq \frac{\norm{c_l-q}}{r_l}$
  %
  % \marginJC{shouldn't this be $|\beta-1| \le \frac{\norm{c_l-q}}{r_l}$? Or do we know that $\beta >1$?}
  %
  and $\beta( \norm{x_0-q}-\norm{c_l-q} ) \geq \norm{x_0-q}$.  Note
  that since $q\in\Fc$, $\norm{c_l-q}\geq r_l$, and therefore the
  condition $\beta \geq 1-\frac{\norm{c_l-q}}{r_l}$ trivially holds if
  $\beta\geq0$. Hence, $\zeta_1=0$ if and only if there exists
  $\beta\in\real\backslash\{ 1 \}$ such that $\beta \geq 0$,
  $\beta \leq 1+\frac{\norm{c_l-q}}{r_l}$, and
  $\beta( \norm{x_0-q}-\norm{c_l-q} ) \geq \norm{x_0-q}$.  We
  distinguish two cases: (i) suppose that
  $\norm{x_0-q} - \norm{c_l-q} \leq 0$.  Then, since $x_0\neq q$, it
  follows that $\beta( \norm{x_0-q}-\norm{c_l-q} ) \geq \norm{x_0-q}$
  can not hold.  Therefore, $\zeta_1\neq0$ and $V_q$ and $h_l$ are
  compatible in $\Gamma$; (ii) suppose instead that
  $\norm{x_0-q} - \norm{c_l-q} > 0$. Then, $\zeta_1=0$ if and only if
  $\frac{\norm{x_0-q}}{\norm{x_0-q} - \norm{c_l-q}} \leq 1 +
  \frac{\norm{c_l-q}}{r_l}$.  Consequently, if
  $\frac{\norm{x_0-q}}{\norm{x_0-q} - \norm{c_l-q}} > 1 +
  \frac{\norm{c_l-q}}{r_l}$, then $V_q$ and $h_l$ are compatible
  in~$\Gamma$. Consider then the case when
  $\frac{\norm{x_0-q}}{\norm{x_0-q} - \norm{c_l-q}} \leq 1 +
  \frac{\norm{c_l-q}}{r_l}$ so that $\zeta_1=0$.
  Then,~\eqref{eq:r2-clf-bncbf-compat-check-general} reads
  \begin{subequations}
    \begin{align}
      \zeta_2 &:= \min \limits_{ \beta\in\real\backslash\{1\} } \frac{1}{(\beta-1)^2}\hat{\Phi}(\beta) \\
      &\quad \text{s.t.} \ \frac{\norm{x_0-q}}{\norm{x_0-q} - \norm{c_l-q}} \leq \beta \leq 1 + \frac{\norm{c_l-q}}{r_l},
    \end{align}
    \label{eq:second-circular}
  \end{subequations}
  where
  $\hat{\Phi}(\beta)=\beta(\alpha_l \norm{q-c_l}^2-\alpha_l
  r_l^2(1-\beta)^2 - \beta (q-c_l)^T Q (q-c_l))$.  By computing the
  roots of $\hat{\Phi}(\beta)=0$, it follows that if
  $\beta_{+} \geq 1+\frac{\norm{c_l-q}}{r_l}$, then
  $\hat{\Phi}(\beta) \geq 0$ for all $\beta \in [0,\beta_{+}]$, which
  implies that $\hat{\Phi}(\beta) \geq 0$ for all
  $\beta \in [ \frac{\norm{x_0-q}}{\norm{x_0-q} - \norm{c_l-q}}, 1 +
  \frac{\norm{c_l-q}}{r_l} ]$, from which it follows that
  $\zeta_2\geq0$ and $V_q$ and $h_l$ are compatible in $\Gamma$.
\end{proof}

% \begin{remark}\longthmtitle{Computational Complexity of Compatibility Check for Single Integrator with Circular Obstacles}\label{rem:comput-complexity-single-int-circular-obstacles}

Proposition~\ref{prop:single-integrator-circular-obstacles} provides a test for compatibility  over a Lyapunov level set that only requires checking a set of algebraic conditions. Therefore, checking the  compatibility of $V_q=\norm{x-q}^2$ and $h_l(x) = \norm{x-c_l}^2-r_l^2$ over a Lyapunov sublevel set for a single integrator system can be done very efficiently.
%
% \marginJC{What does "constant complexity" mean here? What is the variable/s with respect to which it is constant?}
%
%   \demo \end{remark}
\subsection{Compatibility Verification for Higher Relative Degree Systems}\label{sec:compat-verif-high-rel-degree}
%
% \marginJC{Pol, have you reworded this subsection? We said not to start with "relative degree" bla bla, since we haven't defined it properly...}
%
Here we extend the results of Section~\ref{sec:compat-verification-general-dynamics-and-obstacles} to 
a larger class of system dynamics and barrier functions, specifically 
High-Order Control Barrier
Functions (HOCBFs)~\cite{WX-CB:19}. 
Let $h: \real^n \to \real$ be a continuously differentiable function  defining a
safe set of the form~\eqref{eq:safe-set}. Consider the situation where $h$ has to be
differentiated $\bar{m} \in \integerspos$ times along the dynamics \eqref{eq:control-affine-sys} until the control $u$  appears explicitly (this is referred to as $m$ being the relative degree of $h$ under system~\eqref{eq:control-affine-sys}, cf.~\cite{HKK:02}).

%\marginPM{Is it OK if we define the relative degree here like this? Or do we want to have a more formal definition of relative degree, maybe included in Definition IV.8?}
This means that, in order to ensure that the value of $h$ remains
positive at all times (i.e., $\Cc$ is positively invariant), we need
to reason with its higher-order derivatives.  To do so, given
differentiable extended class $\Kc_{\infty}$ functions
$\alpha^{(1)}, \alpha^{(2)}, \hdots, \alpha^{(\bar{m}-1)}$, define a
series of functions
$\phi_0,\dots,\phi_{\bar{m}-1}: \mathbb{R}^n \to \mathbb{R}$ as
follows: $\phi_0 = h$ and
\begin{align*}
  \phi_i(x)= L_f \phi_{i-1}(x)+\alpha^{(i)}(\phi_{i-1}(x)) , \quad i \in \{
  1,\hdots,\bar{m}-1 \} .
\end{align*}
We further define sets $\Cc_1, \hdots,\Cc_{\bar{m}}$ as $ \Cc_1 = \Cc$ and
\begin{align*}
  \Cc_i=\{ x \in \real^n : \phi_{i-1}(x) \geq 0 \}, \quad i \in \{
  2,\hdots, \bar{m} \} .
\end{align*}
% Now we are ready to define the notion of HOCBF.
% \begin{definition}\longthmtitle{High-order control barrier function (HOCBF)}\label{def:hocbf}
The function $h$ is a high-order control barrier function (HOCBF) of $\Cc$ if
  one can find differentiable, extended class $\mathcal{K}_{\infty}$ functions $\alpha^{(1)}, \alpha^{(2)}, \hdots, \alpha^{(m)}$ such that, 
  for all $x \in \Cc \cap \Cc_2 \cap \hdots \cap
  \Cc_{\bar{m}}$, there exists $u \in \real^m$ satisfying
  \begin{align}\label{eq:hocbf-condition}
    L_f \phi_{\bar{m}-1}(x)+L_g \phi_{\bar{m}-1}(x) u + \alpha^{(\bar{m})} (\phi_{\bar{m}-1}(x)) \geq 0 .
  \end{align}
%\end{definition}
%\smallskip
If $\bar{m}=1$, this definition corresponds to the notion of CBF. According to~\cite[Theorem 5]{WX-CB:19}, any locally Lipschitz controller that satisfies \eqref{eq:hocbf-condition} at
each
$x \in \mathcal{C}\cap \mathcal{C}_2 \cap \hdots \cap \mathcal{C}_{\bar{m}}$
renders the set
$\mathcal{C} \cap \mathcal{C}_2 \cap \hdots \cap \mathcal{C}_{\bar{m}}$
positively invariant for system~\eqref{eq:control-affine-sys}.

We next give an analogue of Definition~\ref{def:clf-bncbf-compatibility} for HOCBFs.

\begin{definition}\longthmtitle{Compatibility of CLF-HOCBF pair}\label{def:clf-hocbf-compat}
  Let $\Dc\subset\real^n$ be open, $\Cc\subset\Dc$ be closed, $V$ a CLF on $\Dc$ and $h$ a HOCBF of $\Cc$.
  Then, $V$ and $h$ are compatible at $x\in\Cc\cap\Cc_2\cap\hdots\cap\Cc_{\bar{m}}$ if there exists $u\in\real^m$ satisfying
  ~\eqref{eq:clf-ineq} and~\eqref{eq:hocbf-condition} simultaneously. We refer to both functions as compatible
  in a set $\tilde{\Dc}$ if they are compatible at every point in~$\tilde{\Dc}$.
\end{definition}

% Due to the technical difficulties of adapting the theory of HOCBFs to non-smooth safe sets (i.e.,  unifying HOCBFs and BNCBFs), here we assume for simplicity that $h$ is differentiable, as in Definition~\ref{def:hocbf}.

The following result is an analogue of Proposition~\ref{prop:clf-bncbf-compat-general} for the case when $h$ is a HOCBF. Its proof follows an analogous argument and we omit it for space reasons.

%
% \marginJC{Better to have parallel title to Prop. IV.1}
\begin{proposition}\longthmtitle{Characterization of CLF-HOCBF
    Compatibility}\label{prop:clf-hocbf-compat-characterization}
  Given $q\in\Fc$, let $V_q:\real^n\to\real$ be a CLF
  of~\eqref{eq:control-affine-sys} with respect to $q$. Let $h$ be a
  HOCBF of $\Cc$ with relative degree $\bar{m}\in\mathbb{Z}_{>0}$.
  Let $W_q:\real^n\to\real$ be a positive definite function with
  respect to $q$ and
  $\alpha^{(1)}, \alpha^{(2)}, \hdots, \alpha^{(\bar{m})}$ be
  differentiable extended class $\Kc_{\infty}$ functions.  For
  $\Gamma\subset\Rc$, let
  \begin{subequations}
  \begin{align}
    \zeta_1 &= \min\limits_{ x\in\Gamma, \beta\in\real } \norm{\beta
              L_g \phi_{\bar{m}-1}(x) - L_g V_q(x)}^2,
    \\ 
    & \quad \quad \text{s.t.} \quad \beta \geq 0, \ \phi_i(x) \geq 0,
      \ i\in[\bar{m}-1]. 
  \end{align}
  \label{eq:r1-clf-hocbf-compatibility}
  \end{subequations}
  If $\zeta_1 \neq 0$, then $V_q$ and $h$ are compatible in
  $\Gamma\cap\Cc\cap\Cc_2\cap\hdots\Cc_{\bar{m}}$. Otherwise, if
  $\zeta_1 = 0$, let 
  \begin{subequations}
  \begin{align}
    \zeta_2 &= \min\limits_{ x\in\Gamma, \beta\in\real }
              \tilde{\Phi}(x,\beta)
    \\
    & \quad \quad \text{s.t.} \quad \beta \geq 0, \ \phi_i(x) \geq 0, \ i\in[\bar{m}-1],
  \end{align}
  \label{eq:r2-clf-hocbf-compatibility}
  \end{subequations}
  where
  $\tilde{\Phi}(x,\beta)=-W_q(x) - L_f V_q(x) + \beta
  (L_f\phi_{\bar{m}-1}(x)+\alpha^{(\bar{m})}(\phi_{\bar{m}-1}(x)) )$.
  If $\zeta_2\geq0$, then $V_q$ and $h$ are compatible in
  $\Gamma\cap\Cc\cap\Cc_2\cap\hdots\Cc_{\bar{m}}$. Conversely, if
  $V_q$ and $h$ are compatible in
  $\Gamma\cap\Cc\cap\Cc_2\cap\hdots\Cc_m$, then there exists a set of
  differentiable extended class $\Kc_{\infty}$ functions
  $\alpha^{(1)}, \alpha^{(2)}, \hdots, \alpha^{(\bar{m})}$ and a
  positive definite function $W_q$ with respect to $q$
  %
  % \marginJC{And a positive definite function $W_q$ with respect to
  % $q$?}
  such that
  either $\zeta_1\neq0$ or $\zeta_1=0$ and $\zeta_2\geq0$.
\end{proposition}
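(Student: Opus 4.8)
The plan is to transcribe the proof of Proposition~\ref{prop:clf-bncbf-compat-general} to the present setting, the only structural change being that the family of active barrier constraints $\{L_f h_{i,l}+L_g h_{i,l}u\ge -\alpha_l(h_{i,l})\}_{i\in\Ic_l(x)}$ is replaced by the single HOCBF inequality~\eqref{eq:hocbf-condition}. Fix $x\in\Gamma\cap\Cc\cap\Cc_2\cap\hdots\cap\Cc_m$. By Definition~\ref{def:clf-hocbf-compat}, $V_q$ and $h$ are compatible at $x$ exactly when the linear (in $u$) system $L_gV_q(x)u\le -W_q(x)-L_fV_q(x)$ and $-L_g\phi_{m-1}(x)u\le \alpha^{(m)}(\phi_{m-1}(x))+L_f\phi_{m-1}(x)$ admits a solution $u\in\real^m$. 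By Farkas' Lemma~\cite{RTR:70} (theorem of the alternative), this system is \emph{infeasible} iff there exist $\beta_0,\beta\in\real_{\ge0}$ with $\beta_0 L_gV_q(x)=\beta L_g\phi_{m-1}(x)$ and $\beta_0\big(-L_fV_q(x)-W_q(x)\big)+\beta\big(L_f\phi_{m-1}(x)+\alpha^{(m)}(\phi_{m-1}(x))\big)<0$; the strict inequality forces $(\beta_0,\beta)\neq(0,0)$. I would also note at the outset that if $\zeta_1=0$ then the feasible set of~\eqref{eq:r2-clf-hocbf-compatibility} is nonempty, so $\zeta_2$ is well defined, and that $\tilde\Phi$ plays the role of $\Phi$ in Proposition~\ref{prop:clf-bncbf-compat-general}.

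For the forward implication, suppose that for the chosen $W_q$ and $\alpha^{(i)}$'s either $\zeta_1\neq0$, or $\zeta_1=0$ and $\zeta_2\ge0$, and assume towards a contradiction that there is a Farkas solution $(x^*,\beta_0^*,\beta^*)$ with $x^*\in\Gamma\cap\Cc\cap\Cc_2\cap\hdots\cap\Cc_m$. If $\beta_0^*=0$, then $\beta^*>0$, and dividing by $\beta^*$ gives $L_g\phi_{m-1}(x^*)=0$ together with $L_f\phi_{m-1}(x^*)+\alpha^{(m)}(\phi_{m-1}(x^*))<0$; but since $x^*\in\Cc\cap\Cc_2\cap\hdots\cap\Cc_m$ and $h$ is a HOCBF, there is $u$ satisfying~\eqref{eq:hocbf-condition}, which with $L_g\phi_{m-1}(x^*)=0$ yields $L_f\phi_{m-1}(x^*)+\alpha^{(m)}(\phi_{m-1}(x^*))\ge0$, a contradiction. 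Hence $\beta_0^*>0$, and setting $\tilde\beta:=\beta^*/\beta_0^*\ge0$ makes $(x^*,\tilde\beta)$ feasible for~\eqref{eq:r1-clf-hocbf-compatibility} with objective value $\norm{\tilde\beta L_g\phi_{m-1}(x^*)-L_gV_q(x^*)}^2=0$, so $\zeta_1=0$. This rules out the case $\zeta_1\neq0$. In the case $\zeta_1=0$, the same $(x^*,\tilde\beta)$ is feasible for~\eqref{eq:r2-clf-hocbf-compatibility} and dividing the second Farkas inequality by $\beta_0^*$ gives $\tilde\Phi(x^*,\tilde\beta)<0$, contradicting $\zeta_2\ge0$. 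Thus no Farkas solution exists, so $V_q$ and $h$ are compatible in $\Gamma\cap\Cc\cap\Cc_2\cap\hdots\cap\Cc_m$.

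For the converse, if $V_q$ and $h$ are compatible in $\Gamma\cap\Cc\cap\Cc_2\cap\hdots\cap\Cc_m$, then by Definition~\ref{def:clf-hocbf-compat} there exist $W_q$ and differentiable extended class $\Kc_\infty$ functions $\alpha^{(1)},\hdots,\alpha^{(m)}$ for which the Farkas system has no solution at any point of that set. If already the equality constraint $\beta L_g\phi_{m-1}(x)=L_gV_q(x)$ has no solution over the feasible set of~\eqref{eq:r1-clf-hocbf-compatibility}, then $\zeta_1\neq0$; otherwise $\zeta_1=0$, and infeasibility of the full Farkas system forces $\tilde\Phi(x,\beta)\ge0$ at every feasible point of~\eqref{eq:r2-clf-hocbf-compatibility}, i.e.\ $\zeta_2\ge0$.

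The main obstacle, as in Proposition~\ref{prop:clf-bncbf-compat-general}, is the degenerate-multiplier case $\beta_0^*=0$: this is precisely where the HOCBF hypothesis on $h$ is invoked, and it relies on the relative degree being exactly $m$ so that $L_g\phi_{m-1}$ is not identically zero on the set of interest and~\eqref{eq:hocbf-condition} genuinely constrains $u$. A secondary subtlety, specific to the HOCBF case, is bookkeeping of which objects are given and which are chosen in the converse: the functions $\alpha^{(1)},\hdots,\alpha^{(m-1)}$ also define the sets $\Cc_2,\hdots,\Cc_m$ on which compatibility is asserted, so the statement must be read as asserting the existence of a HOCBF structure (the $\alpha^{(i)}$'s) together with $W_q$ compatible with that structure; once this is fixed, the argument is a verbatim adaptation of the proof of Proposition~\ref{prop:clf-bncbf-compat-general}.
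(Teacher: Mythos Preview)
Your proof is correct and takes exactly the approach the paper indicates: the paper omits the proof of this proposition, stating only that it ``follows an analogous argument'' to Proposition~\ref{prop:clf-bncbf-compat-general}, and your write-up is precisely that analogous argument, with the family of active BNCBF constraints collapsed to the single HOCBF inequality~\eqref{eq:hocbf-condition} and the degenerate case $\beta_0^*=0$ handled via the HOCBF hypothesis just as the BNCBF hypothesis is used in the original. Your closing remark about the bookkeeping of the $\alpha^{(i)}$'s (which simultaneously define the sets $\Cc_2,\dots,\Cc_m$) is a genuine subtlety of the HOCBF setting that the paper does not spell out.
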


To conclude this section, we consider the case of double-integrator
dynamics and circular obstacles.  The double-integrator dynamics are
given by
\begin{align}\label{eq:double-integrator-dynamics}
  \begin{pmatrix}
    \dot{x} \\ \dot{v}
  \end{pmatrix} = 
  \begin{pmatrix}
    \zero_k & \mathbb{I}_k \\
    \zero_k & \zero_k
  \end{pmatrix}
  \begin{pmatrix}
    x \\ v
  \end{pmatrix}
  + \begin{pmatrix}
    \zero_k \\ \mathbb{I}_k
  \end{pmatrix} u,
\end{align}
with $k\in\mathbb{Z}_{>0}$ such that $n = 2k$, states $x\in\real^k$
and $v\in\real^k$, and input $u\in\real^k$.  As pointed out
in~\cite{GY-CB-RT:19}, only states of the form
$(x_f,\textbf{0}_k)\in\real^{n}$ are stabilizable
for~\eqref{eq:double-integrator-dynamics}, and for any
$x_f\in\real^k$, if we let $q = (x_f,\textbf{0}_n)$, then
$V_q:\real^{n}\to\real$ defined as
$V_q(x,v) = \norm{x-x_f}^2 + \norm{v}^2 + (x-x_f)^T v$ is a CLF with
respect to $q$.  Next, consider $h:\real^{n}\to\real$ given by
$h(x,v) = \norm{x-x_c}^2 - r^2$, for some $x_c\in\real^k$ and $r>0$.
The following result shows that for this choice of $V$ and
$h$,~\eqref{eq:r1-clf-hocbf-compatibility}
and~\eqref{eq:r2-clf-hocbf-compatibility} take a tractable form.

\begin{corollary}\longthmtitle{CLF-HOCBF Compatibility for Circular
    Obstacles and Double
    Integrator}\label{prop:clf-hocbf-compat-circular-obstacles} 
  Consider the double integrator
  dynamics~\eqref{eq:double-integrator-dynamics}. Let
  $q=(x_f,\textbf{0}_k)\in\real^{n}$, and let
  $V_q(x,v) = \norm{x-x_f}^2 + \norm{v}^2 + (x-x_f)^T v$ be a CLF with
  respect to $q$, $W_q:\real^{n}\to\real$ a positive definite function
  with respect to $q$, and $h(x,v)=\norm{x-x_c}^2 - r^2$ for some
  $x_c\in\real^k$, $r>0$ a HOCBF.  Let $\alpha_1>0$, $\alpha_2>0$, and
  $\phi_0:\real^{n}\to\real$, $\phi_1:\real^{n}\to\real$ defined as:
  \begin{align*}
      \phi_0(x,v) &= h(x), \\
      \phi_1(x,v) &= 2(x-x_c)^T v + \alpha_1 ( \norm{x-x_c}^2 - r^2 ),
  \end{align*}
  and $\Cc_1=\setdef{(x,v)\in\real^{2n}}{\phi_1(x,v)\geq0}$.
  For $\Gamma\subset\Rc$, let
  \begin{subequations}
  \begin{align}
    \hat{\zeta}_1 &= \min\limits_{ x\in\Gamma, \beta\in\real,
                    \tilde{x}\in\real^k } \norm{2\tilde{x}-2v-(x-x_f)
                    }^2,
    \\ 
    & \quad \quad \text{s.t.} \quad \beta \geq 0, \ \phi_i(x) \geq 0,
      \ i\in\{0,1\},
    \\
    & \quad \quad \qquad \beta(x-x_c) - \tilde{x}\leq 0, \
      \tilde{x}-\beta(x-x_c) \leq 0. 
  \end{align}
  \label{eq:r1-double-int-clf-hocbf-compatibility}
  \end{subequations}
  If $\hat{\zeta}_1\neq0$, then $V_q$ and $h$ are compatible in
  $\Gamma\cap\Cc\cap\Cc_1$.  Otherwise, if $\hat{\zeta}_1 = 0$, let
  \begin{subequations}
    \begin{align}
      \hat{\zeta}_2 &= \min\limits_{ \substack{ (x,v)\in\Gamma, \beta\in\real, \\ \tilde{x}\in\real^k, \tilde{v}\in\real^k } }
      \hat{\Phi}(x,v,\tilde{x},\tilde{v})
      \\
      & \quad \quad \text{s.t.} \quad \quad \beta \geq 0, \ \phi_i(x)
        \geq 0, \ i\in\{0,1\},
      \\
                    & \quad \quad \qquad \quad 2\tilde{x} - 2v + x-x_f \leq 0,
      \\
                    & \quad \quad \qquad \quad -2\tilde{x} + 2v - (x-x_f) \leq 0,
      \\
      & \quad \quad \qquad \quad \beta (x-x_c) - \tilde{x} \leq 0, \
        \tilde{x} - \beta(x-x_c) \leq 0,
      \\
      & \quad \quad \qquad \quad \beta v - \tilde{v} \leq 0, \ -\beta
        v + \tilde{v} \leq 0, 
    \end{align}
    \label{eq:r2-double-int-clf-hocbf-compatibility}
    \end{subequations}
    where
    $\hat{\Phi}(x,v,\tilde{x},\tilde{v}) = 2\tilde{v}^T v + \alpha_1
    \tilde{x}^T v + 2\alpha_2 \tilde{x}^T v + \alpha_2 \alpha_1
    \tilde{x}^T(x-x_c)-\alpha_1\alpha_2 r^2 \beta - 2(x-x_f)^T v
    -\norm{v}^2 - W_q(x,v)$.  If $\hat{\zeta}_2\geq0$, then $V_q$ and
    $h$ are compatible in $\Gamma\cap\Cc\cap\Cc_1$.
\end{corollary}
\begin{proof}
  The result follows from
  Proposition~\ref{prop:clf-hocbf-compat-characterization} and by
  introducing the new variables $\tilde{x} = \beta(x-x_c)$,
  $\tilde{v}=\beta v$.
\end{proof}

Note that~\eqref{eq:r1-double-int-clf-hocbf-compatibility} is a QCQP,
and if $W_q$ is
quadratic,~\eqref{eq:r2-double-int-clf-hocbf-compatibility} is also a
QCQP and can therefore be solved efficiently~\cite{JP-SB:17-arxiv}.

\section{\texttt{C-CLF-CBF-RRT}}\label{sec:c-clf-cbf-rrt}
In this section, we introduce a novel motion planning algorithm,
termed \texttt{Compatible-CLF-CBF-RRT} (\texttt{C-CLF-CBF-RRT}), that
leverages the compatibility results from
Section~\ref{sec:compat-verification} to generate collision-free paths
that can be tracked using CLF-CBF based controllers.

\subsection{CLF-CBF Compatible Paths}\label{sec:clf-cbf-compatible-paths}

We start by defining formally the type of paths that we seek to find
using our motion planning algorithm. Intuitively, a path is
\textit{CLF-CBF compatible} if the CLF-CBF
controller~\eqref{eq:clf-bncbf-controller} successfully connects pairs
of consecutive waypoints in the path.
%
% \marginJC{To ease the reader into the math, and also connect back with our exposition in Section III, how about an intuitive English def of what this notion of path is? Something like a path such that, for each pair of consecutive waypoints, there exists
% a CLF-CBF controller that generates a safe trajectory that connects them or something like that?}
%

\begin{definition}\longthmtitle{CLF-CBF Compatible Path}\label{def:clf-cbf-compatible-path}
  Let $\Ac = \{ x_i \}_{i=1}^{N_a} \subset \Fc$ be a sequence of
  points, with $N_a\in\mathbb{Z}_{>0}$, $x_1=x_{\text{init}}$ and
  $x_{N_a}\in\Xc_{\text{goal}}:=\Bc(x_{\text{goal}},\delta_{\text{goal}})$,
  where $x_{\text{goal}}\in\real^n$ and
  $\delta_{\text{goal}}>0$. $\Ac$ is a \textit{CLF-CBF compatible
    path} if for each $i\in[N_a-1]$,
  \begin{enumerate}
  \item there exists a CLF $V_i:\real^n\to\real_{\geq0}$ with respect
    to $x_{i+1}$ in an open set containing
    $\Gamma_i:=\setdef{x\in\real^n}{V_i(x)\leq V_i(x_i)}$ for
    system~\eqref{eq:control-affine-sys};
  \item there exist extended class $\Kc_{\infty}$ functions
    $\{\alpha_{i,l}:\real\to\real\}_{l\in[M]}$ and positive definite
    functions $W_i:\real^n\to\real_{\geq0}$ with respect to $x_{i+1}$
    such that the optimization problem
    \begin{align}\label{eq:opt-pb}
      &\min_{u\in\real^m} \frac{1}{2}\norm{u}^2
      \\
      \notag
      &\text{s.t.} \ L_f h_{j,l}(x) + L_g h_{j,l}(x) u \geq
        -\alpha_{i,l}(h_{j,l}(x)),
      \\ 
      \notag
      &\qquad \qquad \qquad \qquad \qquad \qquad \forall j\in\Ic_l(x), l\in[M], \\
      \notag
      &\quad \ L_f V_i(x) + L_g V_i(x)u + W_i(x) \leq 0.
    \end{align}
    is feasible for all $x\in\Gamma_i\cap\Fc$.
  \end{enumerate}
  %
% \marginJC{We could say $\Ac = \{ x_i \}_{i=1}^{N_a} \subset \Fc$ above, eliminate (ii), and then have "for each $i\in[N_a-1]$" out of the itemize environment, just before it starts.}
  %
\end{definition}
\smallskip

For each $i\in[N_a-1]$, let $u_i^*:\Gamma_i\cap\Fc\to\real^m$ be a
function mapping each $x\in \Gamma_i\cap\Fc$ to the solution
of~\eqref{eq:opt-pb}.
%
% \marginJC{Isn't it a bit odd to say the domain of $u_i^*$ is $\real^n$ but only specify how it is defined for points in $\Gamma_i\cap\Fc$?} 
%
Under the assumption that $u_i^*$ is locally Lipschitz,
cf. Remark~\ref{rem:regularity-controller}, the feasibility
  of~\eqref{eq:opt-pb} ensures that the solution of the closed-loop
  system $\dot{x} = f(x) + g(x)u_i^*(x)$ with initial condition $x_i$
  (which we denote as $x(\cdot;x_i)$) is collision-free and
  asymptotically converges to $x_{i+1}$.  Indeed,
\begin{enumerate}
\item the satisfaction of the CLF constraint
  $L_f V_i(x) + L_g V_i(x)u + W_i(x) \leq 0$ at time $t\geq0$ ensures
  that $\frac{d}{dt}V(x(t;x_i))<0$, and $x(t;x_i)$ asymptotically
  converges to $x_{i+1}$;
\item the satisfaction of the BNCBF constraint
  $L_f h_{j,l}(x) + L_g h_{j,l}(x) u \geq -\alpha_{i,l}(h_{j,l}(x))$
  for all $j\in\Ic_l(x)$, $l\in[M]$ at time $t\geq0$ ensures that
  $\frac{d}{dt}h_{j,l}(x(t;x_i)) \geq -\alpha_l(h_{j,l}(x(t;x_i)))$
  for all $j\in\Ic_l(x)$, $l\in[M]$, and $x(t;x_i)$ is collision-free.
\end{enumerate}
Because $x_i\in\Gamma_i\cap\Fc$, this ensures that as long as the CLF
and BNCBF constraints are satisfied, $x(t;x_i)\in\Gamma_i\cap\Fc$. In
turn, since the definition of CLF-CBF compatible path ensures
that~\eqref{eq:opt-pb} is feasible for all $x\in\Gamma_i\cap\Fc$, this
implies that the controller $u_i^*(x(t;x_i))$ is well-defined for all
$t\geq0$, and $x(\cdot;x_i)$ is collision-free and asymptotically
converges to $x_{i+1}$. Therefore, CLF-CBF compatible paths
guarantee that the controller obtained by solving~\eqref{eq:opt-pb}
for each waypoint steers an agent obeying the
dynamics~\eqref{eq:control-affine-sys} towards the next waypoint while
remaining collision-free. Even though the convergence to the waypoint
$x_{i+1}$ is only achieved in infinite time, one can execute the
controller $u_i^*$ until the agent is sufficiently close to $x_{i+1}$
and then switch to the next controller~$u_{i+1}^*$. We elaborate more
on this point in Section~\ref{sec:analysis}, where we identify
conditions on the CLF-CBF compatible path under which the controllers
$\{ u_i^* \}_{i=1}^{N_a-1}$ can steer the agent from a neighborhood of
each waypoint to a neighborhood of the next one, hence
  ensuring that~\eqref{eq:opt-pb} is feasible at all times if we
  switch to the next controller~$u_{i+1}^*$ when the agent is
  sufficiently close to~$x_{i+1}$.
%
%\marginJC{Could this induce some feasibility problem later? Or can we rule it out? I mean, there is a feasible path from $x_{i+1}$ to the next, but is (can we ensure) there exists a feasible path from near $x_{i+1}$?}
% \marginPM{I have added Remark VI.5 to address this}
% %
% \marginJC{Right, but somehow not saying anything here leaves everything hanging. I don't know what the best solution is, presentation-wise. At the very least we could say that we show later how this actually does not pose a problem since we can connect any point in a neighborhood of one waypoint to any point in a neighborhgood of the next one, or something like that}

%
% \marginJC{This remark starts well, but then gets a bit too long and technically detailed. I wonder if structuring it with some description or itemize environment would help. Also, check the English: we use "indeed" two sentences in a row, the same with "here".}
%
\begin{remark}\longthmtitle{Controllability Requirements for CLF-CBF Compatible Paths}\label{rem:controllability-requirements-clf-cbf-compatible-paths}
{\rm
Definition~\ref{def:clf-cbf-compatible-path} requires each of the points in the path $\Ac$ to be
asymptotically stabilizable. 
This condition imposes some structural properties on the class of systems that admit such paths, which we examine next:
\begin{LaTeXdescription}
\item[Same number of inputs and state variables:] In the case when $m=n$ and $g(x)$ is invertible for all $x\in\real^n$, CLF-CBF compatible paths exist because any point $q\in\real^n$ is asymptotically stabilizable. Indeed, in this setting the function $V_q:\real^n\to\real$ defined by $V_q(x)=\frac{1}{2}\norm{x-q}^2$  is a CLF with respect to~$q$; 
\item[Fewer inputs than state variables:] In the case when $m<n$, the set of stabilizable points is limited. For instance,
for linear systems with $f(x)=Ax$ and $g(x)=B$, with $A\in\real^{n\times n}$ and
$B\in\real^{n\times m}$, only the points $q\in\real^n$ such that $Aq\in\text{Im}(B)$ are stabilizable. 
This is not a major restriction in a lot of cases of interest. 
For example, for a double-integrator system,
where $m=k$ and $n = 2k$, with $k\in\mathbb{Z}_{>0}$, and
\begin{align*}
A = \begin{pmatrix}
  \zero_k & \mathbb{I}_k \\
  \zero_k & \zero_k
\end{pmatrix}, 
\quad
B = \begin{pmatrix}
  \zero_k \\
  \mathbb{I}_k
\end{pmatrix},
\end{align*}
this condition restricts the set of stabilizable points to those that have a zero velocity, but arbitrary position, as pointed out in Section~\ref{sec:compat-verif-high-rel-degree}. In general, if $m < n$, there often exists a smooth change of coordinates $\psi:\real^n\to\real^m$ 
that transforms the dynamics into a single integrator in $\real^m$.
In~\cite[Section IV.A]{PG-IB-ME:19} and~\cite{PM-CNG-JC:24-ral}, for instance, 
this is achieved for unicycle dynamics, by taking the transformation 
$\psi(x_1,x_2,\theta)=[x_1 + l_0 \cos(\theta), x_2 + l_0 \sin(\theta)]$ (where $l_0>0$ is a positive design parameter).
Then, for any $q\in\text{Im}(\psi)$, the set $M_q=\setdef{x\in\real^n}{\psi(x)=q}$ can be asymptotically stabilized. 
Therefore, if $m<n$ but such a transformation $\psi$ exists, Definition~\ref{def:clf-cbf-compatible-path}
can be adapted so that the points in $\Ac$ are in sets of the form~$M_q$. \demo
\end{LaTeXdescription}
}
\end{remark}

\subsection{Algorithm Description}
In this section we introduce the \texttt{C-CLF-CBF-RRT} algorithm, which builds upon RRT, cf. Section~\ref{subsec:rrt}, 
and generates CLF-CBF compatible paths. Algorithm~\ref{alg:compat-clf-cbf-rrt} presents the pseudocode description.

\begin{algorithm}
  \caption{ \texttt{C-CLF-CBF-RRT} }
  \label{alg:compat-clf-cbf-rrt}
  \begin{algorithmic}[1]
      \State \textbf{Parameters}: $\Rc$, $x_{\text{init}}$, $\Xc_{\text{goal}}$, $k$, $\tau$, $\eta, \{ h_l, \alpha_l \}_{l=1}^{M}$
      \State $\Tc$.init($x_{\text{init}}$)
      \For{$i\in [1,\hdots,k]$}
        \qquad \State $x_{\text{rand}} \leftarrow$ \texttt{RANDOM}$\underline{\hspace{0.2cm}}$\texttt{STATE}()
        \qquad \State $x_{\text{near}} \leftarrow$ \texttt{NEAREST}$\underline{\hspace{0.2cm}}$\texttt{NEIGHBOR}($x_{\text{rand}},\Tc$)
        \qquad \State $x_{\text{new}} \leftarrow$ 
        \texttt{NEW}$\underline{\hspace{0.2cm}}$\texttt{STATE}($x_{\text{rand}}, x_{\text{near}},\eta$)
        \If{{\text{not} \texttt{FREE}$\underline{\hspace{0.2cm}}$\texttt{SPACE}($x_{\text{new}}$)}} \\
        \qquad \textbf{skip to next iteration}
        \EndIf
        \qquad \State $V, W \leftarrow$ \texttt{FIND}$\underline{\hspace{0.2cm}}$\texttt{CLF}($x_{\text{new}}$)
        \If{\texttt{COMPATIBILITY}($x_{\text{near}},x_{\text{new}},\tau,\{ h_l, \alpha_l \}_{l=1}^M$,$V$,$W$)} \\
          \qquad \quad $\Tc$.\texttt{add}$\underline{\hspace{0.2cm}}$\texttt{vertex}($x_{\text{new}}$) \\
          \qquad \quad $\Tc$.\texttt{add}$\underline{\hspace{0.2cm}}$\texttt{edge}($x_{\text{near}},x_{\text{new}}$)
          \If{$x_{\text{new}}\in\Xc_{\text{goal}}$} \\
          \qquad \quad \quad \quad \Return $\Tc$
          \EndIf
        \EndIf
      \EndFor
      \State \Return $\Tc$
  \end{algorithmic}
\end{algorithm}
%
% \marginJC{To avoid confusion between different iteration processes, better to label $\hat{k}$ with a different letter, say $\ell$}
%

The input for \texttt{C-CLF-CBF-RRT} consists of a compact, convex set $\Rc\subset\real^n$,
an initial configuration $x_{\text{init}}\in\real^n$, a goal
region $\Xc_{\text{goal}}\subset\real^n$, the number of iterations $k\in\mathbb{Z}_{>0}$ of the algorithm, 
the number of iterations $\tau\in\mathbb{Z}_{>0}$
%
%\marginJC{Why not simply $\ell$, instead of $\hat{\ell}$?}
% \marginPM{To make sure it doesn't get confused with $l$.}
% \marginJC{Then choose a symbol other than $\ell$!}
%
for the compatibility check,
a set of extended class $\Kc_{\infty}$ functions $\{ \alpha_l \}_{l=1}^M$,
the steering parameter $\eta>0$, and a set of obstacles $\{ \Oc_l \}_{l=1}^M$ defined by functions
$h_l:\real^n\to\real$ for $l\in[M]$. At the beginning, a tree $\Tc$ is initialized with a single
node at $x_{\text{init}}$ and no edges.

The  \texttt{C-CLF-CBF-RRT} algorithm operates similarly to the \texttt{GEOM-RRT} algorithm described in Section~\ref{subsec:rrt}.
\begin{quote}
    At each iteration, steps 4:-6: are the same as in Algorithm~\ref{alg:geom-rrt}. 
    In general, \texttt{RANDOM}$\underline{\hspace{0.2cm}}$\texttt{STATE} samples $\Rc$ uniformly, but if we know that only a subset of the points in $\Rc$ is stabilizable, one can choose to sample uniformly only over such points.
    The functions \texttt{NEAREST}$\underline{\hspace{0.2cm}}$\texttt{NEIGHBOR} and \texttt{NEW}$\underline{\hspace{0.2cm}}$\texttt{STATE} operate identically to how they do in \texttt{GEOM-RRT}.
    We note that, 
    % a new random sample is obtained by uniformly sampling  $\Rc$ uniformly using \texttt{RANDOM}$\underline{\hspace{0.2cm}}$\texttt{STATE}.  The function \texttt{NEAREST}$\underline{\hspace{0.2cm}}$\texttt{NEIGHBOR}($x_{\text{rand}},\Tc$) returns the vertex $x_{\text{near}}$ from $\Tc$ that is closest in the Euclidean distance to $x_{\text{rand}}$. A new configuration $x_{\text{new}}\in\Rc$ is then returned by the function \texttt{NEW}$\underline{\hspace{0.2cm}}$\texttt{STATE}, in such a way that $x_{\text{new}}$ is on the line segment between $x_{\text{near}}$ and $x_{\text{rand}}$ and the distance $\norm{x_{\text{near}}-x_{\text{rand}}}$ is at most $\eta$.
    %
% \marginJC{There is no discussion of nearest-neighbor or new-state anymore?? At the very least, we can say they function like in the previous algorithm}
    %
   since $\Rc$ is convex, $x_{\text{new}}$ is guaranteed to belong to it. 
    Next, the function \texttt{FREE}$\underline{\hspace{0.2cm}}$\texttt{SPACE} checks whether $x_{\text{new}}\in\Fc$. If $x_{\text{new}}\notin\Fc$, it skips to the next iteration. Otherwise, \texttt{FIND}$\underline{\hspace{0.2cm}}$\texttt{CLF} finds a CLF $V$ 
   and associated positive definite function $W$ with respect to $x_{\text{new}}$. 
   Then, the \texttt{COMPATIBILITY} function checks whether there exists a CLF-CBF based controller that steers the system from $x_{\text{near}}$ to $x_{\text{new}}$.
   % , that only checks whether the straight line from $x_{\text{near}}$ to $x_{\text{new}}$ is collision-free. 
 If 
   % \texttt{COMPATIBILITY}$(x_{\text{near}},x_{\text{new}},\hat{k},\{ h_l, \alpha_l \}_{l=1}^M,V,W)=$ \texttt{True},
   the \texttt{COMPATIBILITY} function returns a value of \texttt{True}, then
   $x_{\text{new}}$ is added as a vertex to $\Tc$ and is connected by an edge from $x_{\text{near}}$. If $x_{\text{new}}\in\Xc_{\text{goal}}$, there exists a single path in $\Tc$ from $x_{\text{init}}$ to $x_{\text{new}}$.
\end{quote}
%
% \marginJC{Question: as presented in the pseudocode, the function $V$
% and $W$ are set throughout the exeuction of the algorithm, but this
% does not bode well with the fact that we will have different $V$'s
% and $W$'s for different waypoints, no? That part is confusing to
% me.}
%
In Section~\ref{sec:compat-def}, we discuss in detail the definition
of the function \texttt{COMPATIBILITY}.
  % Regarding \texttt{FIND}$\underline{\hspace{0.2cm}}$\texttt{CLF},
% \begin{remark}\longthmtitle{Operation of \texttt{FIND}$\underline{\hspace{0.2cm}}$\texttt{CLF}}\label{rem:operation-find-clf}
%   {\rm 
  The function \texttt{FIND}$\underline{\hspace{0.2cm}}$\texttt{CLF}
  aims to find a control Lyapunov function, which is a challenging
  problem for general control systems.  Beyond what we noted in
  Remark~\ref{rem:controllability-requirements-clf-cbf-compatible-paths},
  one can use for this a variety of tools, such as sum-of-squares
  techniques~\cite{WT:06,HD-CJ-HZ-AC:24},
  neural networks~\cite{CD-ZQ-SG-CF:21,YCC-NR-SG:19},
  %
  %\marginJC{Did you mean to cite~\cite{HD-CJ-HZ-AC:24} here too?}
  %
  or the learner-falsifier
  framework~\cite{HR-SS:19}.
%     \demo }
% \end{remark}

\begin{remark}\longthmtitle{Sampling in Systems with Fewer Inputs than State Variables}\label{rem:sampling-underactuated}
{\rm
  A requirement for step 7: of Algorithm~\ref{alg:compat-clf-cbf-rrt} 
  % \texttt{COMPATIBILITY}$(x_{\text{near}},x_{\text{new}},\hat{k},\{ h_l, \alpha_l \}_{l=1}^M, V, W)$
  to return a value of \texttt{True} is that $x_{\text{new}}$ is stabilizable.
  Since this point is obtained through random sampling, in general this might not be the case.
  However, if we know the set of points that are stabilizable (for instance,
  an $m$-dimensional manifold $\Mc$ in the case of systems with $m<n$ controls, cf. Remark~\ref{rem:controllability-requirements-clf-cbf-compatible-paths}),
  then we can project $x_{\text{new}}$ onto such set.
  % in order to run the \texttt{COMPATIBILITY} with a stabilizable point.
  \demo
  }
\end{remark}

\subsection{The \texttt{COMPATIBILITY} function}\label{sec:compat-def}
Here we define the operation of the 
\texttt{COMPATIBILITY} function.
%
% \marginJC{Didn't we say to move this check outside compatibility?}
%
%
% \marginJC{Side note, seems like a waste to throw away the CLF if compatibility returns false, which is what we do now.}
%
%
%\marginJC{This is confusing: in step 7 above, $V$ is an argument to the function, but here we're saying that it searches for $V$?}
%
%
% \marginJC{This is repetitive wrt to Remark V.2 and also a bit meandering. We could erase it and instead start the next sentence with "In generla, beyond what we already noted in Remark V.2, one can use..." or something like that}
%
%
% \marginJC{So you mean to have a dedicated computational procedure to find $V$? And this needs to run every time the compatibility function is called? Wouldn't it make more sense to have this outside the compatibility function, as it mainly depends on the dynamics, and simply adjust the point in "with respect to point bla" in the compatibility function?}
%
Given the CLF $V$ and the positive definite function $W$ with respect to $x_{\text{new}}$ found by \texttt{FIND}$\underline{\hspace{0.2cm}}$\texttt{CLF}, it checks whether the
optimization problem
\begin{align}\label{eq:opt-pb-xnear-compat-check}
  &\min_{u\in\real^m} \frac{1}{2}\norm{u}^2, \\
  \notag
  &\quad \text{s.t.} \ L_fh_{j,l}(x) + L_gh_{j,l}(x)u \geq -\alpha_l(h_{j,l}(x)), \\
  \notag
  &\qquad \qquad \qquad \qquad \qquad \qquad \forall j\in\Ic_l(x), l\in[M], \\
  \notag
  &\quad \quad \ L_fV(x) + L_gV(x)u + W(x) \leq 0.
\end{align}
is feasible for all $x\in\Theta\cap\Fc$, 
where $\Theta=\setdef{x\in\real^n}{V(x)\leq V(x_{\text{near}})}$ and $\alpha_l$ is the class $\Kc_{\infty}$ function associated with $h_l$.
%
% \marginJC{Same problem as before: isn't $W$ an argument of the function? When we found $V$ above, do we not use the corresponding $W$?} 
%\marginJC{Also, how does one implement computationally this search for $\alpha$'s and $W$?}
%\marginPM{In general this is difficult. In Proposition VI.6 we give conditions under which the updates in Remark V.4 work.}
%
% If this is the case, it returns a value of \texttt{True}. 

\emph{1. Find obstacles that intersect domain of interest:}
To check whether~\eqref{eq:opt-pb-xnear-compat-check} is feasible, we first find the obstacles that intersect~$\Theta$,
i.e., we find $l\in[M]$ such that $\Cl(\Oc_l)\cap\Theta\neq\emptyset$.
This can be done by solving the following optimization problem for every $l\in[M]$:
\begin{align}\label{eq:opt-pb-check-collision-gamma}
  &\min_{x\in\real^n} V(x) \\
  \notag
  &\quad \text{s.t.} \ h_{i,l}(x) \leq 0, \quad \forall i\in[N_l].
\end{align}
Then, $\Cl(\mathcal{O}_l)\cap\Theta\neq\emptyset$ iff the optimal
value of~\eqref{eq:opt-pb-check-collision-gamma} is smaller than or
equal to $V(x_{\text{near}})$.
Problem~\eqref{eq:opt-pb-check-collision-gamma} is tractable under the
settings considered in Section~\ref{sec:compat-verification}, where
$V$ is quadratic and the constraints are affine (in which
case~\eqref{eq:opt-pb-check-collision-gamma} is a quadratic program)
or ellipsoidal (in which case~\eqref{eq:opt-pb-check-collision-gamma}
is a~QCQP).

\emph{2. Reduce number of constraints and check for compatibility:}
Next, we check the compatibility of the CLF with each of the CBFs
  associated with the obstacles in
  $\Lc:=\setdef{l\in[M]}{ \Theta\cap\Cl(\Oc_l) \neq \emptyset }$
  (Lemma~\ref{lem:reduction-cbf-set} ensures this step retains
  consistency). Then, \texttt{COMPATIBILITY} uses
Proposition~\ref{prop:clf-bncbf-compat-general} for each
$l\in\Lc$. First, for each $l\in\Lc$, it solves the optimization
problem~\eqref{eq:r1-clf-bncbf-compat-check-general} with
$\Gamma=\Theta$ and obtains the value $\zeta_{1,l}$. If
$\zeta_{1,l}=0$, it
solves~\eqref{eq:r2-clf-bncbf-compat-check-general} with
$\Gamma=\Theta$ and obtains the value $\zeta_{2,l}$.  If for all
$l\in\Lc$, the obtained values of $\zeta_{1,l}$ and $\zeta_{2,l}$ are
such that $\zeta_{1,l}\neq0$ or $\zeta_{1,l}=0$ and
$\zeta_{2,l}\geq0$, then $V$ and $h_l$ are compatible in
$\Theta\cap\Fc$ for all $l\in\Lc$ and \texttt{COMPATIBILITY} returns
\texttt{True}.
%
% \marginJC{Why do we solve both (6) and (7)? If the sol to (6) is different from zero, we don't need to do (7) no?}
%

\emph{3. If unsuccessful, increase feasibility set and recheck:}
Otherwise, it updates the set of extended class $\Kc_{\infty}$
functions and the function $W$ in a way that increases the feasible
set of~\eqref{eq:opt-pb-xnear-compat-check}, and performs again the
same check about its feasibility.
% Remark~\ref{rem:on-the-choice-clf-class-K} below provides  a practical way to perform such updates, which is the one we use by default.
% If the procedure described above to check for compatibility is not successful,
  %% \texttt{COMPATIBILITY}($x_{\text{near}},x_{\text{new}},\hat{k},\{ h_l, \alpha_l \}_{l=1}^M,V,W$) returning a value of \texttt{True}
  %for a given set of extended class $\Kc$ functions $\{ \alpha_l \}_{l=1}^M$ and positive definite function $W$,
  % we retry it for $\hat{k}$ iterations.
In every subsequent iteration, we use a new $W$ obtained by
multiplying the previous one by a constant factor $\sigma\in(0,1)$,
and use linear extended class $\Kc_{\infty}$ functions
$\alpha_l(s) = \alpha_{0,l}s$ with the parameter $\alpha_{0,l}$ being
multiplied by a constant factor $\bar{\sigma} > 1$ at every
iteration. With this choice, the objective function $\Phi$
of~\eqref{eq:r2-clf-bncbf-compat-check-general} does not decrease at
any point, which means that the value of $\zeta_1$ remains the same
but the condition $\zeta_2 \geq 0$ becomes easier to satisfy, which
makes it easier for \texttt{COMPATIBILITY} to return a value of
\texttt{True}.
%  The choice of using linear extended class $\Kc_{\infty}$ functions is relevant to prove some of the results in Section~\ref{sec:analysis} but these updates can also be performed in other ways.
%  Theoretically one could also update the CLF~$V$ and the BNCBFs $\{ h_l \}_{l\in[M]}$ but in practice we choose to keep them fixed.
%  \demo } \end{remark}
If after $\tau$ of those updates the function still has not returned a
value of \texttt{True}, it returns a value of \texttt{False}.
We can also employ other heuristics to make it even
  easier for \texttt{COMPATIBILITY} to return a value of
  \texttt{True}. For example, instead of using constant factors
  $\sigma$, $\bar{\sigma}$, one can increase such factors at every
  iteration.

  % The following remark shows that without loss of generality, the
  % class $\Kc_{\infty}$ function $\alpha_l$ can be taken as linear.
  \begin{remark}\longthmtitle{No Loss of Generality in Assuming Linear
      Class $\Kc_{\infty}$ Function}\label{rem:no-loss-of-generality-linear-class-Kinfty}
    \rm{Since the set $\Theta$ is compact (because $V$ is proper), for
      each $l\in[M]$ and $j\in[N_l]$, the function $h_{j,l}$ is
      bounded in $\Theta$,
      % 
      % \marginJC{This is b/c $\Gamma = \Theta$? If so, then maybe say
      %   ``bounded in $\Gamma = \Theta$...''}
      % 
      i.e., there exists $M_{j,l}>0$ such that
      $h_{j,l}(x) < M_{j,l}$ for all $x\in\Theta$.  Now suppose that
      $V_q$ and $h_l$ are compatible in $\Theta$, i.e., there exists a
      controller $u_{\text{com}}:\real^n\to\real^m$ such that
      \begin{align*}
        &L_f h_{j,l}(x) + L_g h_{j,l}(x) u_{\text{com}}(x) +
          \alpha_l(h_{j,l}(x)) \geq 0, \ \forall j\in\Ic_l(x),
        \\ 
        &L_f V_q(x) + L_g V_q(x) u_{\text{com}}(x) + W(x) \leq 0,
      \end{align*}
      for all $x\in\Theta$.  Note that  there exists $M_{\text{com}} > 0$
      sufficiently large such that $M_{\text{com}} z > \alpha_l(z)$
      for all $z\in[0,M_{j,l}]$. Using that  $h_{j,l}(x) < M_{j,l}$ for all
      $x\in\Theta$,
      %
      % \marginJC{We keep changing between $\Theta$ and $\Gamma$, and I
      %   think that is confusing. }
      %
      we deduce
  \begin{align*}
    &L_f h_{j,l}(x) + L_g h_{j,l}(x) u_{\text{com}}(x) +
      M_{\text{com}} h_{j,l}(x) \geq 0, \ \forall j\in\Ic_l(x),
    \\
    &L_f V_q(x) + L_g V_q(x) u_{\text{com}}(x) + W(x) \leq 0,
  \end{align*}
  for all $x\in\Theta$.  Therefore, $V_q$ and $h_l$ are also
  compatible in $\Theta$ using a linear class $\Kc_{\infty}$ function
  $\alpha(z) = M_{\text{comp}} z$.  Therefore, without loss of
  generality, we can assume that the class $\Kc_{\infty}$ function
  used in the \texttt{COMPATIBILITY} function is linear.  \demo }
\end{remark}

% Note that by following the procedure described here, the CLFs, extended class $\Kc_{\infty}$ functions and $W$ functions associated to different nodes of the tree might differ, which is consistent with the notion of CLF-CBF compatible path.

% \begin{remark}\longthmtitle{On the Choice of CLF and Class $\Kc$ Functions}\label{rem:on-the-choice-clf-class-K} {\rm

%
% \marginJC{As it stands, the remark could be significantly reduced (beyond the red text). Depending on how it pans out after the improvements, we might be able to be way more concise and actually say what we want to say in the main text of Section V.C.}
%

\section{Analysis of \texttt{C-CLF-CBF-RRT}}\label{sec:analysis}
In this section we establish the probabilistic completeness of
\texttt{C-CLF-CBF-RRT}. We do this by first showing that if
\texttt{C-CLF-CBF-RRT} returns a tree with a vertex in
$\Xc_{\text{goal}}$, then this tree contains a CLF-CBF compatible
path; and then showing that, under suitable conditions,
\texttt{C-CLF-CBF-RRT} in fact returns a tree with a vertex in
$\Xc_{\text{goal}}$ with high probability.

\begin{proposition}\longthmtitle{\texttt{C-CLF-CBF-RRT} and CLF-CBF Compatible Path}\label{prop:c-clf-cbf-rrt-compatible-path}
  Suppose that \texttt{C-CLF-CBF-RRT} returns a tree $\Tc$ that contains a vertex
  $q_{\text{goal}}\in\Xc_{\text{goal}}$. Then, the single path in $\Tc$ from
  $x_{\text{init}}$ to $q_{\text{goal}}$ is CLF-CBF compatible.
\end{proposition}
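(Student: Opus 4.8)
The plan is to verify directly that the path $\Ac=\{x_i\}_{i=1}^{N_a}$ extracted from $\Tc$, with $x_1=x_{\text{init}}$ and $x_{N_a}=q_{\text{goal}}$, satisfies both requirements of Definition~\ref{def:clf-cbf-compatible-path} at every index $i\in[N_a-1]$. The key bookkeeping observation is that $\Tc$ is a tree rooted at $x_{\text{init}}$ in which each non-root vertex is joined by a single edge to the vertex that was its $x_{\text{near}}$ at insertion time; therefore the unique path from $x_{\text{init}}$ to $q_{\text{goal}}$ visits vertices in an order $x_1,\dots,x_{N_a}$ for which, for every $i\in[N_a-1]$, the edge $(x_i,x_{i+1})$ was created at an iteration of Algorithm~\ref{alg:compat-clf-cbf-rrt} with $x_{\text{near}}=x_i$ and $x_{\text{new}}=x_{i+1}$, at which \texttt{FREE}$\underline{\hspace{0.2cm}}$\texttt{SPACE}($x_{i+1}$) returned \texttt{True} (so $x_{i+1}\in\Fc$), \texttt{FIND}$\underline{\hspace{0.2cm}}$\texttt{CLF}($x_{i+1}$) returned a CLF $V_i$ with respect to $x_{i+1}$ and an associated positive definite function $W_i$ with respect to $x_{i+1}$, and \texttt{COMPATIBILITY}($x_i,x_{i+1},\tau,\{h_l,\alpha_l\}_{l=1}^M,V_i,W_i$) returned \texttt{True}. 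With the standing assumption $x_{\text{init}}\in\Fc$ this gives $\Ac\subset\Fc$, and $q_{\text{goal}}\in\Xc_{\text{goal}}$ gives $x_{N_a}\in\Xc_{\text{goal}}$, so the endpoint conditions of Definition~\ref{def:clf-cbf-compatible-path} hold.

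Fix $i\in[N_a-1]$ and take $V_i$, $W_i$ as above. Since \texttt{FIND}$\underline{\hspace{0.2cm}}$\texttt{CLF} returns a CLF on an open set containing the sublevel set $\Gamma_i:=\setdef{x\in\real^n}{V_i(x)\leq V_i(x_i)}$ — as in the representative settings of the paper, where the CLF is quadratic and defined on all of $\real^n$, cf. Sections~\ref{sec:linear-systems-polytopic} and~\ref{sec:integrator-circular} — requirement~(i) of Definition~\ref{def:clf-cbf-compatible-path} is immediate. Moreover, because $x_{\text{near}}=x_i$, the domain $\Theta$ on which \texttt{COMPATIBILITY} operates equals exactly $\Gamma_i$, and the optimization~\eqref{eq:opt-pb-xnear-compat-check} whose feasibility it tests has, constraint by constraint, the form of~\eqref{eq:opt-pb}.

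For requirement~(ii) I would unwind the \texttt{COMPATIBILITY} routine of Section~\ref{sec:compat-def}. That it returned \texttt{True} means that, after at most $\tau$ of the prescribed updates, there are a positive definite function $\widehat{W}_i$ with respect to $x_{i+1}$ (a positive multiple of $W_i$) and extended class $\Kc_{\infty}$ functions $\{\alpha_{i,l}\}_{l\in[M]}$ — with $\alpha_{i,l}$ for $l\in\Lc:=\setdef{l\in[M]}{\Theta\cap\Cl(\Oc_l)\neq\emptyset}$ the functions on which the successful check was run, and $\alpha_{i,l}$ for $l\notin\Lc$ chosen linear with sufficiently large slope — such that, via Proposition~\ref{prop:clf-bncbf-compat-general} applied with $\Gamma=\Theta$ over all active-index patterns (cf. Remark~\ref{rem:different-sets-of-active-constraints}), $V_i$ and $h_l$ are compatible in $\Gamma_i\cap\Fc$ for every $l\in\Lc$. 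Invoking Lemma~\ref{lem:reduction-cbf-set} — which is what licenses restricting the check to the obstacles in $\Lc$ — this pairwise compatibility implies that for all $x\in\Gamma_i\cap\Fc$ there exists a single $u\in\real^m$ satisfying $L_fV_i(x)+L_gV_i(x)u+\widehat{W}_i(x)\leq0$ together with $L_fh_{j,l}(x)+L_gh_{j,l}(x)u\geq-\alpha_{i,l}(h_{j,l}(x))$ for all $j\in\Ic_l(x)$ and all $l\in[M]$. This is exactly the feasibility of~\eqref{eq:opt-pb} over $\Gamma_i\cap\Fc$ with the choices $W_i\leftarrow\widehat{W}_i$ and $\{\alpha_{i,l}\}_{l\in[M]}$, so requirement~(ii) holds; as $i$ was arbitrary, $\Ac$ is a CLF-CBF compatible path.

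The step I expect to be the crux is this last implication: passing from what \texttt{COMPATIBILITY} literally certifies — pairwise compatibility of the CLF with each obstacle $h_l$, $l\in\Lc$, through the two auxiliary programs of Proposition~\ref{prop:clf-bncbf-compat-general} — to the joint feasibility of~\eqref{eq:opt-pb}, where one input must simultaneously meet the CLF inequality and the active constraints of \emph{all} obstacles. This is the content of Lemma~\ref{lem:reduction-cbf-set}, whose argument (sketched in the remark on verifying compatibility for multiple BNCBFs) rests on two facts: obstacles not meeting $\Theta$ can be dropped by assigning their $\Kc_{\infty}$ functions a sufficiently large linear slope; and, by the pairwise disjointness of the obstacles together with compactness of $\Gamma_i\cap\Fc$, in a neighborhood of each $\partial\Oc_l$ only the $l$-th family of constraints can be binding while the others remain uniformly slack, hence are satisfied by the same $u$ once their $\Kc_{\infty}$ functions are scaled up. Everything else is the bookkeeping of the tree structure and the term-by-term identification of~\eqref{eq:opt-pb-xnear-compat-check} with~\eqref{eq:opt-pb}.
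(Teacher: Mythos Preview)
Your proof is correct and follows the same approach as the paper's: trace the path back through the tree and observe that the subroutines \texttt{FREE}$\underline{\hspace{0.2cm}}$\texttt{SPACE}, \texttt{FIND}$\underline{\hspace{0.2cm}}$\texttt{CLF}, and \texttt{COMPATIBILITY} enforce exactly the conditions of Definition~\ref{def:clf-cbf-compatible-path} at each edge. You are in fact more careful than the paper's own proof, which simply asserts that \texttt{COMPATIBILITY} guarantees feasibility of~\eqref{eq:opt-pb}; you correctly flag that the routine only certifies \emph{pairwise} compatibility of $V_i$ with each $h_l$, $l\in\Lc$, and you fill the gap to \emph{joint} feasibility by invoking Lemma~\ref{lem:reduction-cbf-set} (for $l\notin\Lc$) together with the disjoint-interiors argument of Remark~4.3 (for multiple $l\in\Lc$)---a point the paper leaves implicit.
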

\begin{proof}
  Let $N_a\in\mathbb{Z}_{>0}$ and $\Ac=\{x_i\}_{i=1}^{N_a}$ be the path obtained from \texttt{C-CLF-CBF-RRT}, 
  with $x_{1}=x_{\text{init}}$ and 
  $x_{N_a}\in\Xc_{\text{goal}}$. First,  \texttt{FREE}$\underline{\hspace{0.2cm}}$\texttt{SPACE}
  ensures that $x_i\in\Fc$ for all $i\in[N_a]$.
  %
  %\marginJC{This might need adjustment if we move the check outside compatibility function}
  %
  Moreover, \texttt{FIND}$\underline{\hspace{0.2cm}}$\texttt{CLF}
 ensures that, for all $i\in[N_a-1]$, there exists a CLF $V_i$ with respect to
  $x_{i+1}$, and \texttt{COMPATIBILITY} ensures that there exists 
  a set of class $\Kc_{\infty}$ functions
  $\{ \alpha_{i,l} \}_{l=1}^M$ and a positive definite function $W_i$ with respect to $x_{i+1}$ such that the optimization problem~\eqref{eq:opt-pb}
  is feasible for all points in the set $\setdef{x\in\real^n}{V_i(x)\leq V_i(x_i)}\cap\Fc$.
  This ensures that $\Ac$ is CLF-CBF compatible.
\end{proof}

We next show that, under
some extra assumptions, \texttt{C-CLF-CBF-RRT} returns a tree with 
a vertex in $\Xc_{\text{goal}}$ with probability one as the number of iterations $k$ goes to infinity. In doing so, our next result is critical as it provides conditions under which there exist neighborhoods around a CLF-CBF compatible path for which points of two consecutive neighborhoods can be connected with a CLF-CBF-based controller.
%
% \marginJC{I think the intro "First we introduce a technical result" does not make justice to the result. We're actually resolving a major feasibility problem of our whole approach thanks to this result. It's best to try to make the reader appreciate the value of the result by interpreting it appropriately.}
%

\begin{lemma}\longthmtitle{Compatibility in Neighboring Vertices}\label{lem:compat-neighboring-vertices-general}
    Let $\Ac=\{ x_i \}_{i=1}^{N_a}$, $N_a\in\mathbb{Z}_{>0}$,  be a CLF-CBF compatible path
    such that there exists $\delta_{\text{clear}}>0$ with $\Bc(x_i,\delta_{\text{clear}})\subset\Fc$ for all $i\in\{2,\hdots,N_a\}$.
    Let $\Nc_1 = \{ x_{\text{init}} \}$.
    For each $i\in\{ 2,\hdots,N_a \}$, assume that there exist sets $\Nc_i$, with $x_{i}\in\Nc_i$, and $\hat{\Gamma}_i$, with $\Gamma_i\subset\hat{\Gamma}_i$ (and $\Gamma_i$ defined as in Definition~\ref{def:clf-cbf-compatible-path}), satisfying the following properties:
    \begin{enumerate}
        \item\label{it:prob-comp-second} for each $y\in\Nc_i$, there exists a CLF $V_y:\hat{\Gamma}_i\to\real$ with respect to $y$ in $\hat{\Gamma}_i$ (with associated positive definite function $W_{y}$) and a bounded controller $\hat{u}_y:\hat{\Gamma}_i\to\real^m$ satisfying the corresponding CLF condition in $\hat{\Gamma}_i$;
        %this is not guaranteed by the existence of a CLF as the small control property might not hold: e.g. \dot{x} = x + x^2 u.
        \item\label{it:prob-comp-fourth} there exists a bounded controller $u_i^*:\hat{\Gamma}_i\cap\Fc\to\real^m$ that satisfies the constraints in~\eqref{eq:opt-pb} for all points in $\hat{\Gamma}_i$
        and, for each $y\in\Nc_i$,
        \begin{multline}\label{eq:prob-comp-CLF-gradient-condition}
            |(\nabla V_{y}(x)\!-\!\nabla V_i(x))^T(f(x)\!+\!g(x)u_i^*(x))|
            \\
            < W_i(x),
        \end{multline}
        for all $x\in\Zc=\setdef{z\in\Fc}{\exists l\in[M] \ \text{s.t.} \ d(z,\Oc_l)\leq \tfrac{\delta_{\text{clear}}}{2} }$;
        %
% \marginJC{Why do we need $z\in\Rc$ instead of just $z\in\Fc$?}
        %
        \item\label{it:prob-comp-fifth} for each $y_2\in\Nc_i$ and $y_1\in\Nc_{i-1}$, $\Gamma_{y_1,y_2}:=\setdef{x\in\real^n}{V_{y_2}(x)\leq V_{y_2}(y_1)} \subset \hat{\Gamma}_i$;
        \item\label{it:prob-comp-sixth} whenever $x_{\text{new}}\in\Nc_i$, global solutions to the optimization problems~\eqref{eq:r1-clf-bncbf-compat-check-general} and~\eqref{eq:r2-clf-bncbf-compat-check-general} in \texttt{COMPATIBILITY} 
        %
% \marginJC{No need for (17) any more? If so, need to correct this in later instances.} 
%\marginJC{Also, why (10) and (11) instead of (6) and (7)?}
        %
        %and~\eqref{eq:opt-pb-check-collision-gamma}
        %we don't need the exact solution of~\eqref{eq:opt-pb-check-collision-gamma} under the other assumptions considered here
        are found.
    \end{enumerate}
     %
    % \marginJC{This is for each $i$, right? Is the $\bar{\sigma}$ common to all $i$?}
    %
    Then, for each $i\in\{ 2,\hdots,N_a\}$, $y_2\in\Nc_i$, and $y_1\in\Nc_{i-1}$,
    there exists a set of extended class $\Kc_{\infty}$ functions $\{ \bar{\alpha}_{i,l} \}_{l=1}^M$ 
    and $\bar{\sigma}>0$ (both dependent on $y_1$, $y_2$) such that, by taking $W_{y_2}^{\bar{\sigma}}(x) = \bar{\sigma}W_{y_2}(x)$, it holds 
    that \texttt{COMPATIBILITY}$(y_1,y_2,1,\{ h_l, \bar{\alpha}_{i,l} \}_{l=1}^M,V_{y_2},W_{y_2}^{\bar{\sigma}})$ = \texttt{True}.
\end{lemma}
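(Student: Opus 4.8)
\medskip

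The plan is to pick $\{\bar\alpha_{i,l}\}_{l=1}^M$ and $\bar\sigma>0$ so that every per-obstacle test carried out by \texttt{COMPATIBILITY} already succeeds on its first pass (so the single update allowed by $\tau=1$ is not needed). By condition~\ref{it:prob-comp-fifth}, the domain $\Theta$ used internally by \texttt{COMPATIBILITY} equals $\Gamma_{y_1,y_2}\subseteq\hat\Gamma_i$, which is compact since $V_{y_2}$ is a proper CLF on $\hat\Gamma_i$ (condition~\ref{it:prob-comp-second}). Let $\Lc:=\setdef{l\in[M]}{\Cl(\Oc_l)\cap\Gamma_{y_1,y_2}\neq\emptyset}$ be the set found in Step~1 of \texttt{COMPATIBILITY}; for $l\notin\Lc$ the function $\bar\alpha_{i,l}$ is never used and we set $\bar\alpha_{i,l}:=\alpha_l$, and by Lemma~\ref{lem:reduction-cbf-set} only $l\in\Lc$ matter. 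The routine returns \texttt{True} exactly when, for every $l\in\Lc$ and every relevant set of active indices $\Jc$ (see Remark~\ref{rem:different-sets-of-active-constraints}), either $\zeta_{1,l}\neq0$ or $\zeta_{1,l}=0$ and $\zeta_{2,l}\ge0$; moreover condition~\ref{it:prob-comp-sixth} ensures these are computed as the true optima of~\eqref{eq:r1-clf-bncbf-compat-check-general}--\eqref{eq:r2-clf-bncbf-compat-check-general} with $\Gamma=\Gamma_{y_1,y_2}$. Hence it suffices to establish these inequalities for suitable $\bar\alpha_{i,l}$, $\bar\sigma$; by Proposition~\ref{prop:clf-bncbf-compat-general} they also imply that $V_{y_2}$ and $h_l$ are compatible on $\Gamma_{y_1,y_2}\cap\Fc$.

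The core step is to produce, at each $x\in\Gamma_{y_1,y_2}\cap\Fc$ and for the obstacle $l$ being tested, a control meeting both the CLF inequality for $V_{y_2}$ with $\bar\sigma W_{y_2}$ and the CBF inequalities of obstacle $l$ with $\bar\alpha_{i,l}$, splitting on the distance to $\Oc_l$. If $d(x,\Oc_l)\le\tfrac{\delta_{\text{clear}}}{2}$, then $x\in\Zc$ and I take $u=u_i^*(x)$ (condition~\ref{it:prob-comp-fourth}): writing $\nabla V_{y_2}^{\transpose}(f+gu_i^*)=\nabla V_i^{\transpose}(f+gu_i^*)+(\nabla V_{y_2}-\nabla V_i)^{\transpose}(f+gu_i^*)$, the CLF inequality for $V_i$ satisfied by $u_i^*$ and the bound~\eqref{eq:prob-comp-CLF-gradient-condition} yield $L_fV_{y_2}(x)+L_gV_{y_2}(x)u_i^*(x)\le-\gamma(x)$ with $\gamma(x):=W_i(x)-\lvert(\nabla V_{y_2}(x)-\nabla V_i(x))^{\transpose}(f(x)+g(x)u_i^*(x))\rvert>0$ on $\Zc$, while $u_i^*$ satisfies the CBF inequalities for the active indices with the original $\alpha_{i,l}$. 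If instead $d(x,\Oc_l)>\tfrac{\delta_{\text{clear}}}{2}$, I take $u=\hat u_{y_2}(x)$ (condition~\ref{it:prob-comp-second}), for which $L_fV_{y_2}(x)+L_gV_{y_2}(x)\hat u_{y_2}(x)\le-W_{y_2}(x)$; here $L_fh_{j,l}(x)+L_gh_{j,l}(x)\hat u_{y_2}(x)\ge-C_l$ for a finite $C_l$ (compactness of $\Gamma_{y_1,y_2}$, boundedness of $\hat u_{y_2}$), and the active values satisfy $h_{j,l}(x)=h_l(x)\ge h_{\min}(l):=\min\setdef{h_l(x')}{x'\in\Gamma_{y_1,y_2},\ d(x',\Oc_l)\ge\tfrac{\delta_{\text{clear}}}{2}}>0$ (positive because $h_l(x')=0$ forces $x'\in\partial\Oc_l$).

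Accordingly I set $\bar\alpha_{i,l}(s):=\alpha_{i,l}(s)+\tfrac{C_l}{h_{\min}(l)}s$ (an extended class $\Kc_\infty$ function that dominates $\alpha_{i,l}$ on $\realnonneg$ and has $\bar\alpha_{i,l}(h_{\min}(l))\ge C_l$) and $\bar\sigma:=\min\{1,\gamma_0/M_W\}$, where $M_W:=\max_{x\in\Gamma_{y_1,y_2}}W_{y_2}(x)$ and $\gamma_0:=\inf_{x\in\Zc\cap\Gamma_{y_1,y_2}}\gamma(x)$. With these choices the chosen control satisfies all the inequalities at $x$ in each case, so at any feasible point $(x,\{\beta_i\}_{i\in\Jc})$ of~\eqref{eq:r2-clf-bncbf-compat-check-general} — using the equality constraint to rewrite $L_gV_{y_2}(x)u=\sum_{i\in\Jc}\beta_iL_gh_{i,l}(x)u$ and then invoking $\beta_i\ge0$ together with the CBF inequalities of the chosen control — one obtains $\Phi(x,\{\beta_i\})\ge-\bar\sigma W_{y_2}(x)-\bigl(L_fV_{y_2}(x)+L_gV_{y_2}(x)u\bigr)\ge0$, the last step by $\bar\sigma W_{y_2}(x)\le\gamma(x)$ in the near case and $\bar\sigma\le1$ in the far case. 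Hence $\zeta_{2,l}\ge0$ whenever $\zeta_{1,l}=0$, which closes the argument via the reduction in the first paragraph.

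I expect the main difficulty to be the positivity $\gamma_0>0$ needed to make $\bar\sigma$ well defined: this is a uniform version of the strict inequality~\eqref{eq:prob-comp-CLF-gradient-condition} over the compact set $\Zc\cap\Gamma_{y_1,y_2}$ and relies on some regularity of $u_i^*$ (continuity suffices, which holds under the standing assumption of Remark~\ref{rem:regularity-controller}). A secondary point requiring care is the bookkeeping of active-index sets and of feasible points that lie inside other obstacles: Proposition~\ref{prop:clf-bncbf-compat-general} is applied on $Z_{l,\Jc}$, where every index in $\Jc$ is active and $h_{i,l}(x)=h_l(x)\ge0$, which is exactly what makes the control constructions above legitimate, while for feasible points outside $\Fc$ one uses that $\hat u_{y_2}$ is defined on all of $\hat\Gamma_i$.
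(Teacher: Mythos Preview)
Your proposal is correct and follows essentially the same two-region argument as the paper: use $u_i^*$ near the obstacle (exploiting condition~\ref{it:prob-comp-fourth} and the gradient bound~\eqref{eq:prob-comp-CLF-gradient-condition}) and $\hat u_{y_2}$ away from it (exploiting condition~\ref{it:prob-comp-second} and boundedness), then pick $\bar\alpha_{i,l}$ large and $\bar\sigma$ small via compactness. The only cosmetic difference is that you split per obstacle and verify $\Phi\ge0$ directly, whereas the paper splits globally via $T_{y_1,y_2}=\Gamma_{y_1,y_2}\cap\Zc$ and first establishes feasibility of the joint problem~\eqref{eq:opt-pb-y1-y2-general} before invoking Proposition~\ref{prop:clf-bncbf-compat-general}.
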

%
% \marginJC{It's a bit tough to parse through 6 assumptions before getting to the result. Is there a possibility to split the lemma in two? E.g., assuming (2), (4), and (5), one can probably say that one can connect neighbrohood to neighborhood (and that result could even be somewhere else, call it lemma 1). Assuming (1), (3), and (6) seems to concern more that the algo does hit the right points so that we can invoke lemma 1. Does that make sense? }
%
\begin{proof}
%
% \marginJC{I think it's best to refer to (i)-(iv) in the proof explicitly, so that the reader can see where each hypotheses plays a role.}
%
  Given $i\in\{ 2,\hdots,N_a\}$, $y_2\in\Nc_i$, and $y_1\in\Nc_{i-1}$, our goal is to show that there exists a set of extended class $\Kc_{\infty}$ functions $\{ \bar{\alpha}_{i,l} \}_{l=1}^M$
  and a sufficiently small $\bar{\sigma}>0$ such that
  \begin{align}\label{eq:opt-pb-y1-y2-general}
    &\min_{u\in\real^m} \frac{1}{2}\norm{u}^2, \\
    \notag
    &\text{s.t.} \ L_fh_{j,l}(x)+L_gh_{j,l}(x)u \geq - \bar{\alpha}_{i,l}(h_{j,l}(x)), \\
    \notag
    & \qquad \qquad \qquad \qquad \qquad \qquad \forall j\in\Ic_l(x), l\in[M], \\
    \notag
    &\quad \ \nabla V_{y_2}(x)^T ( f(x) + g(x)u ) + \bar{\sigma} W_{y_2}(x) \leq 0,
  \end{align}
  is feasible for all $x\in \Gamma_{y_1,y_2}\cap\Fc$. Figure~\ref{fig:probabilistic-completeness-proof-help} provides a visual aid for the argument that follows. The set $\Gamma_{y_1,y_2}$ is depicted in red, the sets $\Nc_i$ in blue, $\Zc$ in light purple, and the obstacles $\{ \Oc_l \}_{l=1}^M$ in green. For convenience, we let $T_{y_1,y_2} = \Gamma_{y_1,y_2}\cap\Zc$ (depicted in dark purple).
 
  %
% \marginJC{Depending on how we define $\Zc$ above, we could simply say $T_{y_1,y_2} = \Gamma_{y_1,y_2} \cap \Zc$}
  %
  \textbf{Feasibility on $(\Gamma_{y_1,y_2}\backslash T_{y_1,y_2})\cap\Fc$}:
  %
  % \marginJC{Isn't it possible that some points of $\Gamma_{y_1,y_2}\backslash T_{y_1,y_2}$ are not in $\Fc$? That'd be problematic for the CBF}
  %
  Since $T_{y_1,y_2}$ contains all points that are closer than $\frac{\delta_{\text{clear}}}{2}$ from the boundary, there exists $h_0>0$ such that $h_{j,l}(x)>h_0$ for all $x\in(\Gamma_{y_1,y_2}\backslash T_{y_1,y_2})\cap\Fc$, $l\in[M]$ and $j\in\Ic_l(x)$.
  Therefore, by taking $\alpha_{i,l}^*>0$, with 
  \begin{align*}
    \alpha_{i,l}^* > \frac{\sup\limits_{\substack{x\in{(\Gamma_{y_1,y_2}\backslash T_{y_1,y_2})\cap\Fc}, \\ j\in\Ic_l(x) 
    %\substack{x\in\Gamma_{y_1,y_2} \\ j\in\Ic_l(x)} 
    }} | L_f h_{j,l}(x)+L_gh_{j,l}(x)\hat{u}_{y_2}(x) | }{h_0},
  \end{align*}
  for each $l\in[M]$ (which exists because $\hat{u}_{y_2}$ is bounded on $\hat{\Gamma}_i$ by~\ref{it:prob-comp-second}), it holds that
  \begin{align*}
    &L_f h_{j,l}(x)+L_gh_{j,l}(x)\hat{u}_{y_2}(x) +\alpha_{i,l}^* h_{j,l}(x) \geq 0, \\
    &\qquad \qquad \qquad \qquad \qquad \qquad \qquad \qquad \forall j\in\Ic_l(x), l\in[M], \\
    &\nabla V_{y_2}(x)^T (f(x) + g(x)\hat{u}_{y_2}(x) ) + \sigma W_{y_2}(x) \leq 0,
  \end{align*}
  for all $x\in(\Gamma_{y_1,y_2}\backslash T_{y_1,y_2})\cap\Fc$ and $\sigma\in(0,1)$, where we have used that $\hat{u}_{y_2}$ satisfies the CLF condition for $V_{y_2}$ by~\ref{it:prob-comp-second}.  

 \textbf{Feasibility on $T_{y_1,y_2}$:}
  From (ii), there exists a bounded controller $u_i^*$ satisfying the constraints in~\eqref{eq:opt-pb} for all $x\in\hat{\Gamma}_i$.
  Since $\Gamma_{y_1,y_2}\subset\hat{\Gamma}_i$, cf.~\ref{it:prob-comp-fifth}, $u_i^*$ satisfies the constraints in~\eqref{eq:opt-pb} for all $x\in\Gamma_{y_1,y_2}$.
  Moreover, since~\eqref{eq:prob-comp-CLF-gradient-condition} holds
  for all $x\in\Zc$ (note that this is only possible because $\Bc(x_i,\delta_{\text{clear}}) \subset \Fc$ and therefore $x_i\notin\Zc$, which means that the right-hand side of~\eqref{eq:prob-comp-CLF-gradient-condition} is strictly positive),
  %
  % \marginJC{I'm not sure what's the point we want to make with the parentheses and why it is relevant to the argument. Let's discuss when we meet}
  %
  by~\ref{it:prob-comp-fourth}
  it follows that 
  \begin{align*}
      \nabla V_{y_2}(x)^T (f(x) + g(x)u_i^*(x)) < 0,
  \end{align*}
  %
% \marginJC{I played with (15) and (18), but wasn't able to get this inequality.}
  %
  for all $x\in T_{y_1,y_2}$.
  Since $\Zc$ is compact, this implies that there exists $\bar{\sigma}\in(0,1)$ sufficiently small such that 
  %
% \marginJC{This is using that $\Zc$ is compact, no? Worth explicitly invoking it.}
  %
  \begin{align*}
    &L_f h_{j,l}(x)+L_gh_{j,l}(x)u_i^*(x) +\alpha_{i,l}(h_{j,l}(x)) \geq 0, \\
    &\qquad \qquad \qquad \qquad \qquad \qquad \qquad \qquad \forall j\in\Ic_l(x), l\in[M], \\
    &\nabla V_{y_2}(x)^T (f(x) + g(x)u_i^*(x) ) + \bar{\sigma} W_{y_2}(x) \leq 0.
  \end{align*}
  for all $x\in T_{y_1,y_2}$.
  
  Hence, by taking $\bar{\alpha}_{i,l}$ as an extended class $\Kc_{\infty}$ function such that $\bar{\alpha}_{i,l}(s) > \max \{ \alpha_{i,l}(s), \alpha_{i,l}^* s \}$ for all $s\geq0$, and $\bar{\sigma}\in(0,1)$ sufficiently small as described above,~\eqref{eq:opt-pb-y1-y2-general} is feasible for all $x\in \Gamma_{y_1,y_2}\cap\Fc$.
  Since \texttt{COMPATIBILITY} finds the global solutions of the optimization problems~\eqref{eq:r1-clf-bncbf-compat-check-general} and~\eqref{eq:r2-clf-bncbf-compat-check-general}, cf.~\ref{it:prob-comp-sixth}, it follows that \texttt{COMPATIBILITY}$(y_1,y_2,1,\{ h_l, \bar{\alpha}_{i,l} \}_{l=1}^M,V_{y_2},W_{y_2}^{\bar{\sigma}})$ = \texttt{True} (note that since~\eqref{eq:opt-pb-y1-y2-general} includes CBF constraints for $l\in[M]$, this argument is valid independently of the set $\Lc$ found by solving~\eqref{eq:opt-pb-check-collision-gamma}).
\end{proof}

\begin{figure}[htb]
  \centering
  \includegraphics[width=0.99\linewidth]{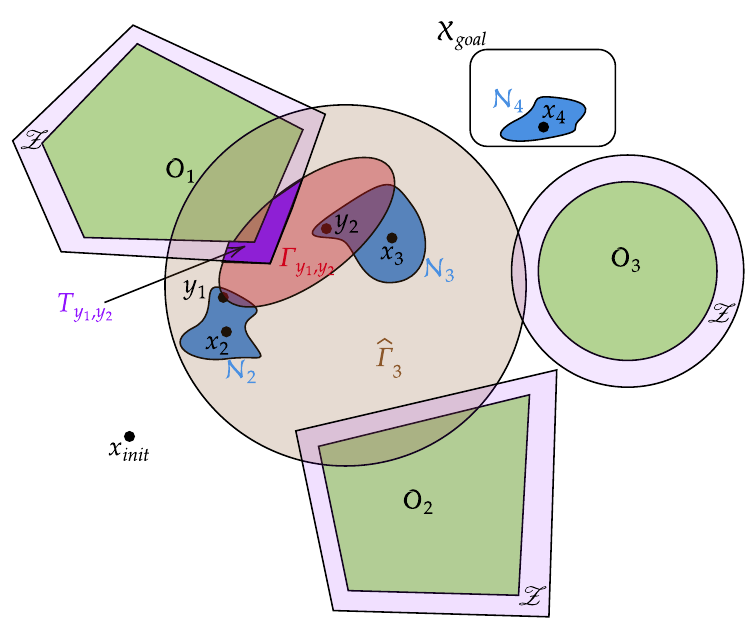}
  \caption{Visual aid for the arguments described in the proof of Lemma~\ref{lem:compat-neighboring-vertices-general}.}
  \label{fig:probabilistic-completeness-proof-help}
\end{figure}

%
% \marginJC{This should actually be inside the proof, where it could be useful (not post-proof). In fact, to help the reader, one can make comments like "cf., the orange set in Figure bla" or the "purple set in Figure bla", or something like that.}
%
% We next comment on the assumptions of Lemma~\ref{lem:compat-neighboring-vertices-general} for two specific systems of interest.

\begin{remark}\longthmtitle{Verification of Assumptions of Lemma~\ref{lem:compat-neighboring-vertices-general} for Specific Classes of Systems}\label{rem:lemma-prob-com-assumptions-fully-actuated}
{\rm
    For systems with the same number of inputs as state
      variables}, the set $\Nc_i$ in
    Lemma~\ref{lem:compat-neighboring-vertices-general} can be taken
    as a ball centered at the waypoint~$x_i$. 
    As mentioned in Remark~\ref{rem:controllability-requirements-clf-cbf-compatible-paths}, for such  systems, $V_y(x) = \frac{1}{2}\norm{x-y}^2$ is a CLF for any $y\in\real^n$. Moreover, we can take $W_y(x) = \norm{x-y}^2$ and the controller $\hat{u}:\real^n\to\real^n$ defined as 
    $\hat{u}(x) = -\frac{ (x-y_2)^T f(x) + \norm{x-y_2}^2 }{ \norm{g(x)^T (x-y_2)}^2 } g(x)^T (x-y_2)$ is such that
    $(x-y_2)^T (f(x) + g(x)\hat{u}(x))+ \norm{x-y_2}^2 \leq 0$
    for all $x\in\Gamma_{y_1,y_2}$ and is bounded, since
    \begin{align*}
        \norm{\hat{u}(x)} &\leq \frac{ \norm{x-y_2}(\norm{f(x)} + \norm{x-y_2}) }{\norm{g(x)^T (x-y_2)}} \\
        &\quad \frac{\norm{ g(x)^{-1} g(x) (x-y_2) }(\norm{f(x)} + \norm{x-y_2}) }{\norm{g(x)^T (x-y_2)}} \\
        &\quad \leq \norm{g(x)^{-1}}\norm{x-y_2}.
    \end{align*}
    Given that an explicit expression for the CLF is available, the conditions~\ref{it:prob-comp-fourth},~\ref{it:prob-comp-fifth} in Lemma~\ref{lem:compat-neighboring-vertices-general} can be verified directly and one can choose the radius of the balls defining $\Nc_i$ to satisfy them.
    Furthermore, Propositions~\ref{prop:compat-verification-polytopic} and~\ref{prop:single-integrator-circular-obstacles} provide two settings where condition~\ref{it:prob-comp-sixth} holds.
    %We can not take W_i arbitrarily because that might mess up the compatibility and might affect $u_i^*$.
    %Details on the choice of such parameters are in previous versions of the manuscript.

    A similar argument can be made for the \emph{double integrator} in dimension $2k\in\mathbb{Z}_{>0}$. As mentioned in Remark~\ref{rem:controllability-requirements-clf-cbf-compatible-paths}, in that case only the points of the form $(x_f,\textbf{0}_k)\in\real^{2k}$ are stabilizable. Hence, the sets $\Nc_i$ in Lemma~\ref{lem:compat-neighboring-vertices-general} can be taken in the form $\Nc_i:=\setdef{(x,\textbf{0}_k)\in\real^{2k}}{\norm{x-x_f}<\nu_i}$ for some $\nu_i>0$.
    Furthermore, one can use the explicit expression of the CLF provided in Section~\ref{sec:compat-verif-high-rel-degree} and choose the parameters $\nu_i$ in order to verify the rest of the assumptions in Lemma~\ref{lem:compat-neighboring-vertices-general}.
    %For (vi) maybe refer to next section?
    \demo
\end{remark}

In general, if the neighborhood $\Nc_i$ around $x_i$ in Lemma~\ref{lem:compat-neighboring-vertices-general} is sufficiently small and $\nabla V_y$ is continuous in $y$ (with the assumption that $V_{x_i} = V_i$),
%
% \marginJC{Is it $\nabla V_y$ continuous in $y$?}
%
the left-hand side of~\eqref{eq:prob-comp-CLF-gradient-condition} can be made sufficiently small so that the inequality holds. Note that Assumptions~\ref{it:prob-comp-second},~\ref{it:prob-comp-fifth}, and~\ref{it:prob-comp-sixth} are not restrictive and hold in several cases of interest, as outlined in Remark~\ref{rem:lemma-prob-com-assumptions-fully-actuated}.
%
% \marginJC{Is this true b/c you're thinking that $V_{x_i}=V_i$?}
%
% \marginJC{We comment on how reasonable (ii) is. We should also point out that the other assumptions are reasonable, no?}
%
Overall, the assumptions in Lemma~\ref{lem:compat-neighboring-vertices-general} ensure that there exist neighborhoods around every waypoint of a CLF-CBF compatible path such that the controller obtained as the solution of~\eqref{eq:opt-pb} can connect a point from each neighborhood to any point in the neighborhood of the next waypoint.
We next leverage this property to show the probabilistic completeness of \texttt{C-CLF-CBF-RRT}.

\begin{proposition}\longthmtitle{Probabilistic Completeness of \texttt{C-CLF-CBF-RRT}}\label{prop:prob-completeness-general}
    Suppose that there exists a CLF-CBF compatible path $\Ac=\{ x_i \}_{i=1}^{N_a}$, $N_a\in\mathbb{Z}_{>0}$, and suppose that all the assumptions 
    in Lemma~\ref{lem:compat-neighboring-vertices-general} regarding $\Ac$ hold.
    %
%    \marginJC{Is this assumption about the availability of a CLF-CBF compatible path to cover the "if a solution exists" clause of our previous statements?}
%    \marginPM{Yes}
    %
    Further suppose 
    that 
    \begin{enumerate}
        \item\label{it:random-state-assumption} there exists a positive probabiliy $p_i$ that \texttt{RANDOM}$\underline{\hspace{0.2cm}}$\texttt{STATE} returns a point from $\Nc_i$;
        \item for each $y\in\Nc_i$, \texttt{FIND}$\underline{\hspace{0.2cm}}$\texttt{CLF} returns $V_y$ and $W_y$ (as defined in item~\ref{it:prob-comp-second} of Lemma~\ref{lem:compat-neighboring-vertices-general});
        \item\label{it:prob-comp-class-K-linear-bound} the extended class $\Kc_{\infty}$ functions $\{ \alpha_{i,l} \}_{i\in[N_a],l\in[M]}$ in~\eqref{eq:opt-pb} are upper bounded by linear extended class $\Kc_{\infty}$ functions, i.e., there exist $\hat{\alpha}_{i,l}>0$ for $i\in[N_a]$ and $l\in[M]$ such that $\alpha_{i,l}(s)\leq \hat{\alpha}_{i,l}s$ for all $s\geq0$;
        \item\label{it:prob-comp-steering-parameter} the steering parameter $\eta$ in 
         \texttt{NEW}$\underline{\hspace{0.2cm}}$\texttt{STATE}
        is such that $\eta>\max\limits_{i\in[N_a-1]} \max\limits_{\substack{y_2\in\Nc_{i+1}, y_1\in\Nc_i} } \norm{y_2-y_1}$.
    \end{enumerate}
    Then, there exists $\tau^*\in\mathbb{Z}_{>0}$ such that if  $\tau > \tau^*$, the probability of \texttt{C-CLF-CBF-RRT} (executed with parameters $\tau$, $\eta$, and any set of extended class $\Kc_{\infty}$ functions $\{ \alpha_l \}_{l\in[M]}$)
    %
% \marginJC{This C-CLF-CBF-RRT is run with what parameters? I ask b/c we have $\alpha_{i,l}$ and $\hat{\alpha}_{i,l}$, for instance, so there is some ambiguity.}
    %
    returning a tree without a vertex in $\Xc_{\text{goal}}$ tends to zero as the number of iterations $k$ goes to infinity.
\end{proposition}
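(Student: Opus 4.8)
The plan is to follow the template of the standard probabilistic-completeness proof for RRT (cf.~\cite{MK-KS-ZL-KB-DH:19}), with the covering balls along a feasible path replaced by the neighborhoods $\{\Nc_i\}_{i=1}^{N_a}$ of the waypoints of $\Ac$ supplied by Lemma~\ref{lem:compat-neighboring-vertices-general}, and the collision-free check replaced by the pair \texttt{FREE\_SPACE}/\texttt{COMPATIBILITY}. Since \texttt{C-CLF-CBF-RRT} returns as soon as a vertex is added to $\Xc_{\text{goal}}$, and since we may assume $\Nc_{N_a}\subseteq\Xc_{\text{goal}}$ (shrinking $\Nc_{N_a}$, or replacing $x_{N_a}$ by a nearby point of $\interior(\Xc_{\text{goal}})$, which remains the endpoint of a CLF-CBF compatible path), it suffices to prove that the probability that the tree has no vertex in $\Nc_{N_a}$ after $k$ iterations tends to $0$ as $k\to\infty$.

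First I would fix the constant $\tau^*$. For each $i\in\{2,\dots,N_a\}$, $l\in[M]$ and each $(y_1,y_2)\in\Nc_{i-1}\times\Nc_i$, Lemma~\ref{lem:compat-neighboring-vertices-general} (whose hypotheses~\ref{it:prob-comp-second}--\ref{it:prob-comp-sixth} are assumed) produces an extended class $\Kc_\infty$ function $\bar\alpha_{i,l}$ and a scalar $\bar\sigma>0$ for which \texttt{COMPATIBILITY}$(y_1,y_2,1,\{h_l,\bar\alpha_{i,l}\}_{l=1}^M,V_{y_2},\bar\sigma W_{y_2})$ returns \texttt{True}. By assumption~\ref{it:prob-comp-class-K-linear-bound}, the $\bar\alpha_{i,l}$ built in that proof can be taken linear (it is only required to dominate $\alpha_{i,l}(s)$ and a linear term), with slope bounded above uniformly over $(y_1,y_2)$, and $\bar\sigma$ bounded below uniformly, since the $\Nc_i$ are relatively compact and the quantities entering the proof of Lemma~\ref{lem:compat-neighboring-vertices-general} (the bound on $\hat u_{y_2}$, the clearance-induced lower bound $h_0$, the ratio $-\nabla V_{y_2}^T(f+g u_i^*)/W_{y_2}$ on the compact set $T_{y_1,y_2}$, which avoids $y_2$) depend continuously on $(y_1,y_2)$. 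Because \texttt{COMPATIBILITY} multiplies the slope of its linear class $\Kc_\infty$ functions by a fixed factor $>1$ and $W$ by a fixed factor $\sigma\in(0,1)$ at each of its up to $\tau$ update rounds, there is a finite $\tau^*$ after which the effective class $\Kc_\infty$ functions dominate every $\bar\alpha_{i,l}$ and the effective multiplier of $W$ falls below every $\bar\sigma$; and since enlarging the class $\Kc_\infty$ functions and shrinking $W$ never shrinks the feasible set of~\eqref{eq:opt-pb-xnear-compat-check} (equivalently never decreases $\Phi$ in~\eqref{eq:r2-clf-bncbf-compat-check-general}, the monotonicity used in the proof of Lemma~\ref{lem:compat-neighboring-vertices-general}), for every $\tau>\tau^*$ we obtain: whenever $x_{\text{near}}\in\Nc_{i-1}$, $x_{\text{new}}\in\Nc_i$, and \texttt{FIND\_CLF} returned $(V_{x_{\text{new}}},W_{x_{\text{new}}})$ as in Lemma~\ref{lem:compat-neighboring-vertices-general}, the call \texttt{COMPATIBILITY}$(x_{\text{near}},x_{\text{new}},\tau,\{h_l,\alpha_l\}_{l=1}^M,V_{x_{\text{new}}},W_{x_{\text{new}}})$ returns \texttt{True}.

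Next I would establish a uniform per-iteration progress bound: there is $\gamma>0$ such that, at any iteration whose starting tree contains a vertex in $\Nc_{i-1}$, the tree contains a vertex in $\Nc_i$ after the iteration with probability at least $\gamma$. Condition on \texttt{RANDOM\_STATE} returning $x_{\text{rand}}\in\Nc_i$, which by assumption~\ref{it:random-state-assumption} has probability at least $\min_j p_j>0$. Letting $v$ be a tree vertex in $\Nc_{i-1}$, assumption~\ref{it:prob-comp-steering-parameter} gives $\|x_{\text{near}}-x_{\text{rand}}\|\le\|v-x_{\text{rand}}\|<\eta$, so \texttt{NEW\_STATE} returns $x_{\text{new}}=x_{\text{rand}}\in\Nc_i\subseteq\Fc$ and \texttt{FREE\_SPACE} passes. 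Moreover $x_{\text{near}}$ is no farther from $x_{\text{rand}}$ than $v$ is, so the sublevel set $\Theta=\setdef{x}{V_{x_{\text{rand}}}(x)\le V_{x_{\text{rand}}}(x_{\text{near}})}$ on which \texttt{COMPATIBILITY} operates is contained in $\Gamma_{v,x_{\text{rand}}}=\setdef{x}{V_{x_{\text{rand}}}(x)\le V_{x_{\text{rand}}}(v)}\subseteq\hat\Gamma_i$, using item~\ref{it:prob-comp-fifth} of Lemma~\ref{lem:compat-neighboring-vertices-general} and taking the $\Nc_i$ small enough (with $\hat\Gamma_i$ as afforded by that lemma's hypotheses; alternatively, one works on the positive-probability sub-event $x_{\text{near}}\in\Nc_{i-1}$). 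Then \texttt{FIND\_CLF} returns $(V_{x_{\text{new}}},W_{x_{\text{new}}})$ by hypothesis, and by the previous paragraph \texttt{COMPATIBILITY} returns \texttt{True}, so $x_{\text{new}}\in\Nc_i$ is added, which proves the bound.

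Finally, the conclusion follows from a routine argument. Let $\xi_t$ denote the largest $i$ such that each of $\Nc_1,\dots,\Nc_i$ contains a tree vertex after iteration $t$; then $\xi_t$ is nondecreasing (vertices are never deleted), $\xi_0\ge1$ since $\Nc_1=\{x_{\text{init}}\}$, and the previous paragraph shows $\Pr[\xi_{t+1}>\xi_t\mid\xi_t=i-1<N_a,\ \text{history}]\ge\gamma$. Hence the number of iterations required for $\xi$ to reach $N_a$ is stochastically dominated by a sum of $N_a-1$ independent geometric random variables of parameter $\gamma$, so $\Pr[\xi_k<N_a]\to0$ as $k\to\infty$; i.e., the tree contains a vertex in $\Nc_{N_a}\subseteq\Xc_{\text{goal}}$ with probability tending to $1$. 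The main obstacle is the geometric step in the third paragraph---ensuring, uniformly over the trees that can actually arise, that the feasibility region $\Theta$ stays inside $\hat\Gamma_i$ whatever vertex is returned as $x_{\text{near}}$ (equivalently, lower-bounding the probability that $x_{\text{near}}\in\Nc_{i-1}$)---together with the uniformity of $\tau^*$ over the continuum of pairs $(y_1,y_2)$; both hinge on choosing the $\Nc_i$ small and well separated and on the uniform continuity/compactness bounds invoked when applying Lemma~\ref{lem:compat-neighboring-vertices-general}.
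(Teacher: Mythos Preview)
Your proposal is correct and follows essentially the same approach as the paper: reduce to a sequence of Bernoulli-type successes along the neighborhoods $\{\Nc_i\}$, use Lemma~\ref{lem:compat-neighboring-vertices-general} together with the monotone updates in \texttt{COMPATIBILITY} to guarantee a positive per-step success probability once $\tau>\tau^*$, and conclude via a standard tail bound. You are in fact more careful than the paper about the subtlety of whether the returned $x_{\text{near}}$ actually lies in $\Nc_{i-1}$ (the paper simply identifies $x_{\text{near}}$ with the assumed tree vertex in $\Nc_i$ without justification), and your final geometric-dominance argument is equivalent to the paper's explicit Bernoulli tail bound $\frac{(N_a-1)!}{(N_a-2)!}k^{N_a-1}e^{-pk}$.
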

%
% \marginJC{At the beginning of the section, we said this result will show "that, under suitable conditions, C-CLF-CBF-RRT in fact returns a tree with a vertex in Xgoal with high probability". But this statement is not quite worded like that, no?}
%
\begin{proof}
  The proof follows a similar reasoning to~\cite[Theorem 1]{MK-KS-ZL-KB-DH:19} that proves probabilistic completeness for \texttt{GEOM-RRT}.
  Let $i\in[N_a-1]$.
  First, we show that if $\Nc_i$ contains a vertex $x_{\text{near}}$ from the tree $\Tc$ in \texttt{C-CLF-CBF-RRT},
  then with probability $p_i>0$ in the next iteration a vertex will be added from $\Nc_{i+1}$.
  To see this, note that by assumption there exists a probability $p_i>0$ that the function \texttt{RANDOM}$\underline{\hspace{0.2cm}}$\texttt{STATE}
  returns a point  $x_{\text{rand}}$ from $\Nc_{i+1}$.
  % Let $x_{\text{rand}}$ be the output of \texttt{RANDOM}$\underline{\hspace{0.2cm}}$\texttt{STATE} and suppose that $x_{\text{rand}}\in\Nc_{i+1}$.
  Given (iv),
  % $\eta>\max\limits_{ \substack{y_2\in\Nc_{i+1}, y_1\in\Nc_i} } \norm{y_2-y_1}$,
  the distance between $x_{\text{near}} \in \Nc_{i}$ and $x_{\text{rand}} \in \Nc_{i+1}$ is less than $\eta$, and therefore
  $x_{\text{new}}=x_{\text{rand}}$.
%  if $x_{\text{near}}$ is the output of \texttt{NEAREST}$\underline{\hspace{0.2cm}}$\texttt{NEIGHBOR}$(x_{\text{rand}},\Tc)$, it follows that $x_{\text{rand}}$ is the output of \texttt{NEW}$\underline{\hspace{0.2cm}}$\texttt{STATE}$(x_{\text{rand}},x_{\text{near}},\Tc)$.
   Now, Lemma~\ref{lem:compat-neighboring-vertices-general} ensures that there exists a set of extended class $\Kc_{\infty}$ functions
  $\{ \bar{\alpha}_{i,l} \}_{l=1}^M$, a CLF $V_{x_{\text{rand}}}$ with respect to $x_{\text{rand}}$ and a positive definite function $W_{x_{\text{rand}}}^{\bar{\sigma}}$ with respect to $x_{\text{rand}}$ 
  such that \texttt{COMPATIBILITY}$(x_{\text{near}}, x_{\text{rand}},\tau,\{ h_l, \bar{\alpha}_{i,l} \}_{l=1}^M,V_{x_{\text{rand}}},W_{x_{\text{rand}}}^{\bar{\sigma}})$ returns \texttt{True}.
  Moreover, since the functions $\{ \alpha_{i,l} \}_{l=1}^M$ are upper bounded by linear extended class $\Kc_{\infty}$ functions with slopes $\{ \hat{\alpha}_{i,l} \}_{l=1}^M$, by performing the updates in the extended class $\Kc_{\infty}$ functions described in 
  Section~\ref{sec:compat-def}.3,
%  Remark~\ref{rem:on-the-choice-clf-class-K},
  it follows that there exists $\tau^*$ sufficiently large such that if $\tau>\tau^*$, the
  updated linear extended class $\Kc_{\infty}$ functions used in \texttt{COMPATIBILITY} have slopes larger than $\{ \hat{\alpha}_{i,l} \}_{l=1}^M$ respectively and the coefficient multiplying $W_{x_{\text{rand}}}$ is smaller than $\bar{\sigma}$, which makes the \texttt{COMPATIBILITY} function return \texttt{True}.
  This means that $x_{\text{rand}}$ is added to $\Tc$ with the corresponding edge from $x_{\text{near}}$ to $x_{\text{rand}}$, as stated.
  
%  Now we follow an argument identical to that of~\cite[Theorem 1]{MK-KS-ZL-KB-DH:19}.
  Next, in order for \texttt{C-CLF-CBF-RRT} to reach $\Xc_{\text{goal}}$ from $x_{\text{init}}$,
  the algorithm needs to successively select points from $\Nc_{i+1}$ as described
  previously for $i\in[N_a-1]$. For $k$ iterations of \texttt{C-CLF-CBF-RRT}, 
  this stochastic process can be described as $k$ Bernouilli
  trials~\cite[Definition 2.5]{RDY-DJG:04}
  %
  % \marginJC{Reference to statistics book for Bernouilli processes/distributions.}
  %
  with success probabilities $\{ p_i \}_{i=1}^{N_a-1}$. 
  The algorithm reaches $\Xc_{\text{goal}}$ from $x_{\text{init}}$
  after $N_a-1$ successful outcomes.
  Let $p:=\min\limits_{i\in[N_a-1]}p_i$.
  Using the same argument as in~\cite[Theorem 1]{MK-KS-ZL-KB-DH:19}, the probability that this stochastic process does not have $N_a-1$ successful outcomes after $k$ iterations is smaller than $\frac{(N_a-1)!}{(N_a-2)!}k^{N_a-1} e^{-p k}$.
  This means that the probability of \texttt{C-CLF-CBF-RRT} returning a tree without a vertex in $\Xc_{\text{goal}}$ tends to zero as the
  number of iterations $k$ goes to infinity.
\end{proof}

%Proposition~\ref{prop:prob-completeness-general} ensures that under the technical assumptions in Lemma~\ref{lem:compat-neighboring-vertices-general}, the extra assumptions on the output of \texttt{RANDOM}$\underline{\hspace{0.2cm}}$\texttt{STATE} and \texttt{FIND}$\underline{\hspace{0.2cm}}$\texttt{CLF}, and the boundedness of the extended class $\Kc_{\infty}$ functions,  by taking the parameters $\eta$ and $\hat{k}$ sufficiently large, the probability that \texttt{C-CLF-CBF-RRT} does not return a CLF-CBF compatible path tends to zero as $k$ increases. Next, we comment on the assumptions of Proposition~\ref{prop:prob-completeness-general}.

%
% \marginJC{Most of the "verification of conditions" here refers to Lemma VI.2, rather than Prop VI.3. Maybe we should split this remark in two, and move the main bulk of the discussion to after Lemma VI.2?}
%
\begin{remark}\longthmtitle{Verification of Assumptions of Proposition~\ref{prop:prob-completeness-general}}\label{rem:prob-com-assumptions-fully-actuated}
{\rm
    % Here we comment on the assumptions specific of Proposition~\ref{prop:prob-completeness-general}.
    As mentioned in Remark~\ref{rem:lemma-prob-com-assumptions-fully-actuated}, 
    for systems with the same number of inputs as state
      variables}, the set $\Nc_i$ in
    Lemma~\ref{lem:compat-neighboring-vertices-general} can be taken
    as a ball centered at the waypoint~$x_i$.  
    If \texttt{RANDOM}$\underline{\hspace{0.2cm}}$\texttt{STATE} samples $\Rc$ uniformly, it returns a point in such ball with probability equal to its relative volume in~$\Rc$.
    Furthermore, in this case \texttt{FIND}$\underline{\hspace{0.2cm}}$\texttt{CLF} can simply return $V_y(x) = \frac{1}{2}\norm{x-y}^2$ and $W_y(x) = \norm{x-y}^2$ for any $y\in\Nc_i$.
    For the \emph{double integrator}
    in dimension $2k\in\mathbb{Z}_{>0}$, as mentioned in Remark~\ref{rem:lemma-prob-com-assumptions-fully-actuated}, the sets $\Nc_i$ in Lemma~\ref{lem:compat-neighboring-vertices-general} can be taken in the form $\Nc_i:=\setdef{(x,\textbf{0}_k)\in\real^{2k}}{\norm{x-x_f}<\nu_i}$ for some $\nu_i>0$
    and if \texttt{RANDOM}$\underline{\hspace{0.2cm}}$\texttt{STATE} samples uniformly points of the form $(x_f,\textbf{0}_k)\in\real^{2k}$, then~\ref{it:random-state-assumption} in Proposition~\ref{prop:prob-completeness-general} holds.
    Furthermore, \texttt{FIND}$\underline{\hspace{0.2cm}}$\texttt{CLF} can return the explicit expression of the CLF provided in~\cite[Section V.A]{GY-CB-RT:19}.
    %For (vi) maybe refer to next section?
    We note also that Assumption~\ref{it:prob-comp-class-K-linear-bound} is not restrictive, and Assumption~\ref{it:prob-comp-steering-parameter} holds by taking the parameter $\eta$ sufficiently large.
    \demo
\end{remark}

\begin{remark}\longthmtitle{Computational Complexity of \texttt{C-CLF-CBF-RRT}}
{\rm
  The computational complexity of \texttt{C-CLF-CBF-RRT} is the same as \texttt{GEOM-RRT} except for the 
  added complexity of the \texttt{COMPATIBILITY} function.
  In general, the optimization problems~\eqref{eq:r1-clf-bncbf-compat-check-general},~\eqref{eq:r2-clf-bncbf-compat-check-general}, and~\eqref{eq:opt-pb-check-collision-gamma}
  %
  % \marginJC{Shouldn't we go in increasing order? (10), (11), (17)}
  %
  required by \texttt{COMPATIBILITY} can be non-convex, which makes them not computationally tractable.
  However, in the setting considered in Proposition~\ref{prop:compat-verification-polytopic}, 
  the worst-case complexity of \texttt{COMPATIBILITY}
  is that of solving $\tau$ QCQPs, %which in general are NP-hard but 
  for which efficient heuristics
  exist~\cite{JP-SB:17-arxiv}.
  In the setting considered in Proposition~\ref{prop:single-integrator-circular-obstacles},~\eqref{eq:r1-clf-bncbf-compat-check-general},~\eqref{eq:r2-clf-bncbf-compat-check-general}, and~\eqref{eq:opt-pb-check-collision-gamma}
  can be solved in closed form, which means that \texttt{C-CLF-CBF-RRT} has the same computational complexity 
  as \texttt{GEOM-RRT}.
  \demo
  }
\end{remark}

  \begin{remark}\longthmtitle{\texttt{C-CLF-CBF-RRT} for 
      Differentially Flat
      Systems}\label{rem:differentially-flat-system} {\rm Here we 
      explain how \texttt{C-CLF-CBF-RRT} is applicable to
      differentially flat systems.  Differentially flat
      systems~\cite{MF-JL-PM-PR:92} are control systems for which the
      states and inputs can be written as algebraic functions of
      carefully selected \textit{flat outputs} and their derivatives.
      Many robotic systems of interest, such as the
      unicycle~\cite{DM-VK:11} or the quadrotor~\cite{LEB-AAM:24} are
      differentially flat.  This property facilitates the generation
      of smooth trajectories.  Differentially flat systems are
      equivalent to dynamic feedback linearizable
      systems~\cite{JL:07-ifac} (i.e., systems that can be feedback
      linearized after adding an appropriate number of dynamic
      inputs).  This means that differentially flat systems can be
      transformed into linear systems after an appropriate change of
      coordinates and control inputs (the same also applies to static
      feedback linearizable systems, for which no dynamic inputs need
      to be added).  Furthermore, by constructing an outer
      approximation of the obstacles using polytopes, and expressing
      it as a union of convex polytopes, the results in
      Proposition~\ref{prop:compat-verification-polytopic} apply, and
      the optimization problems (6) and (7) are easier to solve,
      cf. Section~\ref{sec:linear-systems-polytopic}.  \demo }
\end{remark}

%
% \marginJC{Aren't we missing a sentence saying the result Prop VI.4 is
%   applicable to differentiable flat systems? This could go at the
%   beginning of the remark.}
%

\begin{remark}\longthmtitle{Controller
    Execution}\label{rem:execution-controller-finite-time}
  {\rm Given a CLF-CBF compatible path $\Ac$, executing the
    controller~\eqref{eq:opt-pb} has the agent converge from one
    waypoint to the next asymptotically. However, under the
    assumptions of Proposition~\ref{prop:prob-completeness-general},
    there exist neighborhoods around the waypoints of $\Ac$ such that
    any two points of two consecutive neighborhoods can be connected
    with a CLF-CBF controller (possibly, with adjusted CLF, and
    extended class $\Kc_{\infty}$ functions,
    cf. Lemma~\ref{lem:compat-neighboring-vertices-general}). Therefore,
    by executing the controller~\eqref{eq:opt-pb} for a sufficiently
    large but finite time, the agent can visit these different
    neighborhoods and trace a path whose waypoints are close to those
    of~$\Ac$.  \demo }
\end{remark}

\begin{remark}\longthmtitle{\texttt{C-CLF-CBF-RRT} for Higher-Relative Degree Systems}\label{rem:c-clf-cbf-rrt-high-relative-degree-systems}
{\rm
    \texttt{C-CLF-CBF-RRT} can be adapted to the setting where $h$ is a HOCBF, cf. Section~\ref{sec:compat-verif-high-rel-degree}, with the following modifications:
    \begin{enumerate}
      \item $x_{\text{init}}$ and $\Xc_{\text{goal}}$ lie in $\Cc\cap\Cc_2\cap\hdots\cap\Cc_{\bar{m}}$;
      \item \texttt{RANDOM}$\underline{\hspace{0.2cm}}$\texttt{STATE} returns states from $\Cc\cap\Cc_2\cap\hdots\cap\Cc_{\bar{m}}$ (or a subset of it consisting of stabilizable points);
      \item \texttt{COMPATIBILITY} employs the conditions described in Proposition~\ref{prop:clf-hocbf-compat-characterization} instead of those in Proposition~\ref{prop:clf-bncbf-compat-general}
      to check the compatibility of CLFs and HOCBFs.     \demo
    \end{enumerate}
    }
\end{remark}

% \marginJC{Overall, the writing of this section looks old. Probably you wrote the material a while back and have not checked it more recently. Beyond several typos, problems include references to names that do not exist in the paper anymore; confusing order of presentation (first prop for compatibility check, then adjustment to algorithm, then back to compatibility check)}

% \marginJC{Here is another way in which we could organize the material (let me know if you like it): create new subsection IV.D dedicated to HOCBFs, where we introduce the notion and collect the 2 propositions of this section + remark at the end of Section VI where we explain the mods in the algorithm required to deal with the HOCBF case.}

\section{Simulation and Experimental Validation}\label{sec:experimental-validation}

Here we illustrate the performance of \texttt{C-CLF-CBF-RRT} in simulation and hardware experiments.
Throughout the section, we deal with a differential-drive robot following the unicycle dynamics:
\begin{subequations}\label{eq:unicycle}
\begin{align}
  \dot{x} &= v \cos(\theta), \\
  \dot{y} &= v \sin(\theta), \\
  \dot{\theta} &= \omega,
\end{align}
\end{subequations}
where $s=[x,y]\in\real^2$ is the position of the robot, $\theta$ its heading,
and $v$ and $\omega$ are its linear and angular velocity control inputs, respectively.
Following~\cite[Section IV]{PG-IB-ME:19}, we set
\begin{align*}
    R(\theta)=\begin{bmatrix}
        \cos{\theta} & -\sin{\theta} \\
        \sin{\theta} & \cos{\theta}
    \end{bmatrix}, \quad p=\begin{bmatrix}
        x \\
        y
    \end{bmatrix} + l_0 R(\theta) e_1,
\end{align*}
where $e_1=[1, 0]^T$ and $l_0>0$ is a design parameter. 
This defines $p$ as a point orthogonal to the wheel axis of the robot.
Moreover, let
\begin{align*}
    L = \begin{bmatrix}
        1 & 0 \\
        0 & 1/l_0
        \end{bmatrix}.
\end{align*}
Even though the dynamics~\eqref{eq:unicycle} are nonlinear,
it follows that $\dot{p}=R(\theta)L^{-1}u$, where $u=[v,w]^T$.
By defining the new control input $\tilde{u}=R(\theta) L^{-1} u$, the state $p$ follows
single integrator dynamics. The original angular and linear
velocity inputs can be easily obtained from $\tilde{u}$ as $u = L R(\theta)^{-1} \tilde{u}$.
Since $p$ can be made arbitrarily
close to $[x,y]$ by taking $l_0$ sufficiently small, in what follows we consider $p$ as our state variable.
Throughout the experiments, 
we use $\alpha_l(s) = 5s$ for all $l\in[M]$ and $\eta=2m$. 
We also use $\tau = 5$ and constants $\sigma = 0.5$, $\bar{\sigma}=2$ as defined in 
Section~\ref{sec:c-clf-cbf-rrt}. The results we present in this section have been obtained without 
the need to resort to increase the value of $\sigma$ or $\bar{\sigma}$ at every iteration,
or perform other similar heuristics.
Once the robot is within $0.5m$ of a given waypoint, we switch the controller so that it 
steers the robot towards the next waypoint.

\subsection{Computer Simulations}\label{sec:computer-simulations}

We have tested \texttt{C-CLF-CBF-RRT} in different simulation environments in a high-fidelity Unity simulator on an Ubuntu PC with
Intel Core i9-13900K 3 GHz 24-Core processor.
We utilize the function \texttt{minimize} from the library \texttt{SCIPY}~\cite{PV-RG-TEO:20}
to solve the optimization problems in the \texttt{COMPATIBILITY} function.
The robots used in the simulation are Clearpath Husky\footnote{Spec. sheets for the Husky and Jackal robots can be found at https://clearpathrobotics.com} robots, which have the same LIDAR and sensor capabilities as the
real ones, and these are used to run a SLAM system that
allows each robot to localize itself in the environment and
obtain its current state, which is needed to implement the controller from~\eqref{eq:opt-pb}.
The first simulation environment consists of a series of red obstacles whose projection on the navigation
plane is either a circle or a polytope.
%
% \marginJC{This reference to efficiency is also valid for the second simulation environment, no? If so, then worth making the observation later for both environments.}
%
%
% \marginJC{What is "it"? The whole environment? Since both have the same dimensions, how about saying it once, before describing each environment, instead of repeating.}
%
%
The second simulation consists of an environment with different rooms. The different walls are modeled as obstacles 
using nonsmooth CBFs, given that their projection on the navigation plane are quadrilaterals. To ensure that the whole physical body of the robot remains safe, we add a slack term to the CBF that takes into account the robot dimensions. For example, for a circular obstacle with center at $x_c\in\real^2$ and radius $r>0$, and a circular robot with radius $r_0>0$, the CBF can be taken as $h(x) = \norm{x-x_c}^2-(r+r_0)^2$.
Both simulation environments have dimensions $20m\times50m$, and in each of them the projection of the obstacles in the navigation plane is either a circle or a polytope, so the \texttt{COMPATIBILITY} function runs efficiently (cf. Section~\ref{sec:compat-verification}).
Figure~\ref{fig:nonconvex-env} shows the tree generated by \texttt{C-CLF-CBF-RRT} in both simulation experiments, as well as the corresponding trajectory
executed by the robot using the controller obtained as the solution of~\eqref{eq:opt-pb},
%
% \marginJC{Earlier, we have referred to (15).}
%
which successfully reaches the end goal while remaining collision-free.

\begin{figure}[htb]
  \centering
  \subfigure[Environment with obstacles]{\includegraphics[width=0.85\linewidth]{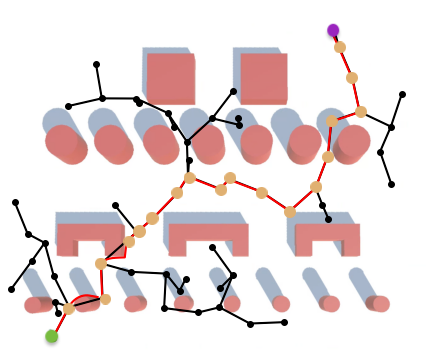}}
  \\
  \subfigure[Environment with rooms]{
  \includegraphics[width=0.85\linewidth]{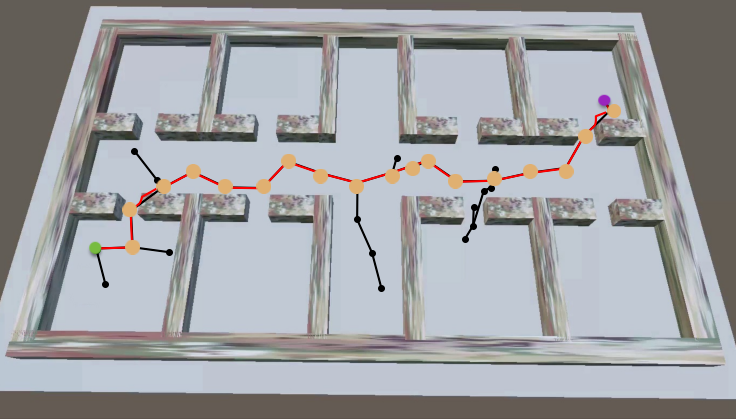}}
  \caption{(a) First and (b) second simulation environment experiments.
  Tree generated by \texttt{C-CLF-CBF-RRT} (black), waypoints of the returned path (dark yellow) and trajectory followed by the robot using 
  the controller from~\eqref{eq:opt-pb} (red). The starting point is the green dot and the end goal is 
  the purple dot.
  In each environment, the robot successfully visits the waypoints while avoiding collisions with obstacles.}
  \label{fig:nonconvex-env}
\end{figure}
%
% \marginJC{Text in red seems like a repeat and also we say is for \texttt{GEOM-RRT}? }
%

\subsection{Hardware Experiments}

%
% \marginJC{I think one question reviewers would raise, specially in a robotics journal, is how have we taken into account the physical dimensions of the robot -- b/c (26) is like a point model, no?}
%
We have also tested \texttt{C-CLF-CBF-RRT} in a physical environment using a Clearpath Jackal robot. The robot is equipped with GPS, IMU and LIDAR sensors, which are used to run a SLAM system to localize its position in the environment and execute the controller from~\eqref{eq:opt-pb}. The environment, with dimensions $4m\times 9m$, consists of different obstacles whose projection on the navigation plane is either a circle or a polytope. We ensure the whole physical body of the robot remains safe using a slack term in the CBF formulation, as described in Section~\ref{sec:computer-simulations}. Figure~\ref{fig:hardw2-env}(a) shows the tree generated by \texttt{C-CLF-CBF-RRT} as well as the trajectory executed by the robot, successfully reaching its goal. We use $\alpha_l(s) = 5s$ for all $l\in[M]$ and choose $\eta=2m$. Once the robot is within $0.5m$ of a given waypoint, we switch the controller so that it steers the robot towards the next waypoint.

\begin{figure}[htb]
  \centering
  \subfigure[\texttt{C-CLF-CBF-RRT}]{
  \includegraphics[width=0.85\linewidth]{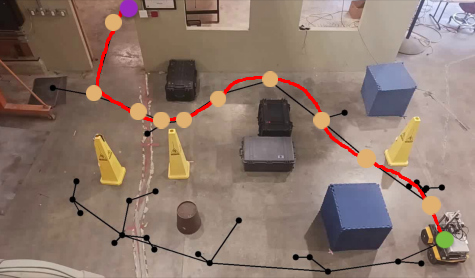}}
 \\
 \subfigure[\texttt{GEOM-RRT}]{
  \includegraphics[width=0.85\linewidth]{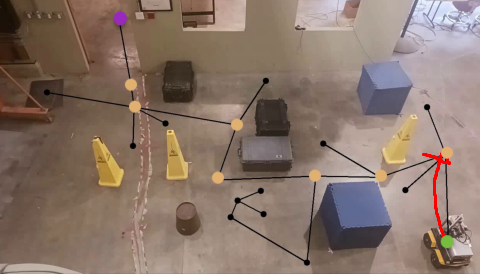}}
  \caption{Execution of (a) \texttt{C-CLF-CBF-RRT} and (b) \texttt{GEOM-RRT} in the hardware experiment. In both plots, tree generated by the corresponding algorithm (black), waypoints of the returned path (dark yellow), and trajectory followed by the robot (red) using 
  the controller from~\eqref{eq:opt-pb} (red). The starting point is the green dot and the end goal is 
  the purple dot. %  (b) Tree generated by \texttt{GEOM-RRT} (black), waypoints of the returned path (dark yellow), and trajectory followed by the robot using the controller from~\eqref{eq:opt-pb} (red). The starting point is the green dot and the end goal is the purple dot. 
  The trajectory executed by the robot under \texttt{C-CLF-CBF-RRT} reaches its goal safely, whereas 
  it fails under \texttt{GEOM-RRT} because it quickly encounters a point where the optimization problem~\eqref{eq:opt-pb} is infeasible.}
  \label{fig:hardw2-env}
\end{figure}

\subsection{Comparison with \texttt{GEOM-RRT}}
Here we compare the performance of \texttt{C-CLF-CBF-RRT} with \texttt{GEOM-RRT} in both the simulation and hardware environments.
Figure~\ref{fig:hardw2-env}(b)
%
% \marginJC{Is this Fig 2(b) or Fig 3(b)? If it's the latter, isn't it a bit funny to have this as a separate subsection when it's actually part of the previous subsection?}
%
shows the tree generated by \texttt{GEOM-RRT} as well as the trajectory 
executed by the robot in the hardware environment using the controller obtained from~\eqref{eq:opt-pb}.
%
% \marginJC{? This doesn't look like the 1st sim environment, but like the hardware one}
%
One can observe that the trajectory generated by the robot is unable to reach
the end goal and stops rather early, at a point where the optimization problem~\eqref{eq:opt-pb} becomes
infeasible. This occurs because \texttt{GEOM-RRT} does not 
take into account the dynamic feasibility of the path it generates.
% It should also be noted that since the execution of \texttt{GEOM-RRT} is random, there can be executions  for which the obtained path is dynamically feasible and the controller~\eqref{eq:clf-bncbf-controller} steers the robot from the starting point to the end goal.

We should point out that the steering parameter $\eta$ critically affects the performance of \texttt{GEOM-RRT}.
To show this, we run various executions of \texttt{GEOM-RRT} in the simulation environment with obstacles depicted in Figure~\ref{fig:nonconvex-env}(a).
Table~\ref{tab:geom-rrt-comparison} shows that smaller values of $\eta$ yield a higher percentage of feasible 
paths but with a higher average execution time. 
For comparison, the average execution time of  \texttt{C-CLF-CBF-RRT}, whose paths are always dynamically feasible, for the same simulation environment and with $\eta=4m$, is 8.72 seconds.
%
% \marginJC{With $\eta = 2$?}
%
To match the dynamic feasibility of the produced paths, \texttt{GEOM-RRT} has to be run with $\eta=1m$, at a significantly higher computational cost.
\begin{table}[htb]
  \centering
  \begin{tabular}{| c | c | c |}
    \hline
    $\eta$ (meters) & \begin{tabular}{@{}c@{}} Percentage of \\ feasible paths \end{tabular} & \begin{tabular}{@{}c@{}} Average execution \\ time (seconds) \end{tabular} \\
    \hline
    1 & 100$\%$ & 154.36 \\
    \hline 
    2 & 90$\%$ & 140.62 \\
    \hline
    4 & 50$\%$ & 130.62 \\
    \hline 
    8 & 30$\%$ & 4.83 \\
    \hline 
    16 & 5$\%$ & 1.84 \\
    \hline
  \end{tabular}
  \caption{Comparison of the percentage of feasible paths (i.e., paths for which 
  the controller in~\eqref{eq:clf-bncbf-controller} steers the robot from the initial point to the 
  end goal by following the waypoints generated by the path) and the average execution time of \texttt{GEOM-RRT} (over 20 executions).
  The paths are generated for the simulation environment with obstacles in Figure~\ref{fig:nonconvex-env}(a).
%  For each different value of $\eta$ in the table, we run \texttt{GEOM-RRT} 20 different times.
  }\label{tab:geom-rrt-comparison}
\end{table}
%
% \marginJC{Why is it that the percentage of feasible paths increases for $\eta=8$ and $\eta=16$? Or is that a typo and you meant $8\%$? Does it even make sense to show that large $\eta$ for an environment whose diagonal is $\sqrt{97} \simeq 9.8$? Also, for "the first simulation environment" refers to the subsection A or B?}
%\marginJC{Does it even make sense to show that large $\eta$ for an environment whose diagonal is $\sqrt{97} \simeq 9.8$?}
%\marginPM{It's the first simulation environment in VIII.A, which is 20m x 50m.}
%

  \begin{remark}\longthmtitle{Convergence to
      waypoints}\label{rem:tuning-waypoint-convergence-rate}
    \rm{Since the robot asymptotically converges to each waypoint, we
      observe in the experiments that it tends to slow down when
      reaching a waypoint and speed up when switching to the next
      one. Here we describe ways in which this behavior can be
      alleviated:
  \begin{enumerate}
  \item \textbf{Modifying the objective function}: The minimum-norm
    controller in~\eqref{eq:opt-pb} naturally seeks the smallest
    control action, which can lead to the observed \textit{slowing
      down} effect near waypoints. Alternatively, given a nominal
    controller $u_{\text{nom}}:\real^n\to\real^m$ with a desired
    behavior (towards a waypoint or the end goal), one can modify the
    objective function in~\eqref{eq:opt-pb} by
    $\frac{1}{2}\norm{u-u_{\text{nom}}(x)}^2$ and implement the
    resulting controller.
    % In the future we plan to study the
    % performance of these class of controllers for different choices of
    % $u_{\text{nom}}$.
    % 
    % \marginJC{This sounds like a better solution.}
    % 
    
  \item \textbf{Finite-time CLFs}: Fixed-time Control Lyapunov
    Function~\cite{KG-EA-DP:22} can be used to design controllers that
    guarantee convergence to a desired waypoint within a specified
    time horizon. By extending the notion of compatibility to consider
    BNCBFs and finite-time CLFs,
    % (namely, that the corresponding inequalities in the control
    % input are simultaneously feasible),
    the optimization
    problems~\eqref{eq:r1-clf-bncbf-compat-check-general}
    and~\eqref{eq:r2-clf-bncbf-compat-check-general} can be
    reformulated using finite-time CLFs.  Then, these optimization
    problems can be utilized to define a version of
    \texttt{C-CLF-CBF-RRT} that accounts for finite-time CLFs. We
    leave the study of the properties of such an algorithm for future
    work.
    %
    % \marginJC{This last sentence makes it sound a bit too easy, no?
    %   Maybe we could distinguish between re-formulating things with
    %   finite CLFs, and actually extending the results. }
    %
    
  \item \textbf{CLF convergence rate}: given a closed-loop system
    satisfying the CLF condition~\eqref{eq:clf-ineq},
    %
    %\marginJC{Don't write (2) manually, use eqref}
    %
    the function $W$ dictates the rate of decrease of trajectories to
    the origin. For example, by taking $W(x) = \gamma V(x)$, with
    $\gamma>0$, trajectories of the closed-loop system converge to the
    origin at a rate $\gamma$.  Therefore, by increasing $\gamma$, the
    rate of convergence can be increased.  However, it should be
    pointed out that an increased rate of convergence might compromise
    the compatibility of $V$ with a CBF.     \demo
      %
      % \marginJC{But this would still slow things down near the
      %   equilibrium, no? So this is a way to `mitigate' the slowing
      %   down, but not a way of making it disappear. So maybe we don't
      %   start with this solution, since it's weaker than the others?}
      %
      % However, an increased rate of convergence might compromise the
      % compatibility of $V$ with a CBF. In future work, we plan to
      % thoroughly study this trade-off and determine the largest rate of
      % convergence that still ensures compatibility.
    \end{enumerate}}
\end{remark}

\subsection{Comparison with \texttt{CBF-RRT} and \texttt{LQR-CBF-RRT*}}
%
%\marginJC{I'm confused here: is this a simulation environment or a hardware experiment? If it's the first, then shouldn't this be part of the first subsection?}
%\marginPM{It's the first simulation environment. Since we are comparing with another algorithm, I thought it's better to have another subsection, like with GEOM-RRT.}
%
Here we compare \texttt{C-CLF-CBF-RRT} with other related algorithms
in the literature leveraging CBFs.  First, we compare it with
\texttt{CBF-RRT}, a sampling-based motion planning algorithm proposed
in~\cite{GY-BV-ZS-CB-RT:19} that also employs control barrier
functions. Initially, \texttt{CBF-RRT} starts with a tree consisting
of a single node in $x_{\text{init}}$.  Then, each iteration of
\texttt{CBF-RRT} operates as follows. First, it randomly samples a
vertex $x_0$ from the current tree. Next, it generates a reference
input, e.g., one steering the robot from $x_0$ to the goal set
$\Xc_{\text{goal}}$ (cf.~\cite[Section 5]{GY-BV-ZS-CB-RT:19} for more
details).  Finally, for a fixed period of simulated time $T_0$,
at every state it executes the controller closest to the reference
input that satisfies the CBF conditions associated to all
  obstacles.  This quadratic optimization program is solved using the
  convex optimization library \texttt{CVXOPT}~\cite{MSA-JD-LV:13}.
The state $x_{\text{new}}$ reached by the robot after this period of
time $T_0$ gets added to the tree.
%
% \marginJC{There is no $x_{\text{rand}}$? So no steps 4-5, and a revised step 6?}
%
%
% \marginJC{I'm not sure the reader will understand what "running a CBF controller with a desired nominal controller" means. Maybe it's best to first describe the nominal controller/how it is defined, then explain that to make it safe one runs it through a safety filter. Since we don't have the equation for it earlier that we can refer to, then just introduce it here? }
%
To generate the trajectory, we numerically integrate the
  closed-loop system using the \texttt{odeint} method from the Python
  library \texttt{SCIPY}~\cite{PV-RG-TEO:20} and use a time
  discretization step of $0.005$ seconds.  We have ran multiple times
\texttt{C-CLF-CBF-RRT} and \texttt{CBF-RRT} in the simulation
environment with obstacles of Figure~\ref{fig:nonconvex-env}(a). Note
that \texttt{CBF-RRT} is more computationally costly, as it requires
running a trajectory for every new node added to the
tree. Furthermore, this trajectory is generated by a controller that
is obtained as the solution of an optimization problem at every
point. In contrast, \texttt{C-CLF-CBF-RRT} only requires solving a
single optimization problem (and, in the cases discussed in
Section~\ref{sec:integrator-circular}, not even that, since an
algebraic check is enough) for every new node added to the tree.  For
example, if $T_0$ is small (e.g., $T_0=5$), the average execution time
of \texttt{CBF-RRT} exceeds one minute.  For $T_0 = 15$, the average
execution time of \texttt{CBF-RRT} (over 10 different runs) is 384.58
seconds.  The average execution time is similar for $T_0 = 10$,
$T_0 = 20$.
%It is unclear what is the optimal value of $T_0$, but~\cite{GY-BV-ZS-CB-RT:19} doesn't provide a method to know what is optimal.
These numbers seem to indicate that smaller values of $T_0$ find a
feasible path more rapidly, but such paths contain a larger number of
waypoints. In contrast, larger values of $T_0$ lead to paths with a
smaller number of waypoints but require more time to be found.  In
comparison, the average execution time of \texttt{C-CLF-CBF-RRT} with
the same initial point and end goal (and with $\alpha_l(s)=5s$ for all
$l\in[M]$ and $\eta=4m$) is 8.72 seconds, almost two orders of
magnitude faster. We should also point out that there exists a
  trade-off between the computational complexity of CBF-RRT and the
  underlying safety guarantees.  Indeed, since the CBF-QP controller
  cannot be solved continuously, CBF-RRT~\cite{GY-BV-ZS-CB-RT:19}
  solves the CBF-QP optimization problem periodically along the
  generated trajectory with sampling time~$T_0$.  As a consequence,
  in-between the times when the CBF-QP is solved, safety violations
  may occur.  One way to remedy this is to solve the CBF-QP at a
  higher frequency. However, this increases the computational
  complexity of CBF-RRT, since the overall number of optimization
  problems to be solved is
  higher.  % Hence, there exists an inherent tradeoff between
  % safety violations and computational complexity of CBF-RRT.
%
% \marginJC{Another way would be to embrace the fact that the CBF-QP is
%   going to be solved at discrete instants of time, and decide the time
%   intervals accordingly. This is the idea that we did
%   in~\cite{AJT-PO-JC-AA:21-csl}, but carrying it out here would be
%   another paper. Still, worth mentioning.}
%
% 
% \marginJC{No comment on trade-offs of small $T_0$ vs large
% $T_0$. From the numbers here, small $T_0$ would seem like the better
% choice, but then taht would mean that we do not advance as much,
% no?} 
% 
% 
% \marginJC{Seems like you ran out of energy here! We said "our first
% observation", but that turns out to be th only one. Maybe before we
% get into numbers, we should comment on the fact that CBF-RRT is
% naturally more computationally heavy b/c of the reliance on
% computing actual trajectories of the dynamics to determine if points
% can be safely reached. By contrast, our tests are algebraic, bla,
% bla. The reliance on actual trajectories then presents a dilemma:
% short $T_0$ to make things fast, at the risk of not reaching the
% point; or large $T_0$ to increase the chances, at the cost of higher
% computational complexity.} 
%

  Finally, we compare \texttt{C-CLF-CBF-RRT} with
  \texttt{LQR-CBF-RRT*}. This is a sampling-based algorithm proposed
  in~\cite{GY-MC-AA-AP-RT-CB:23} which generates reference
  trajectories using LQR-based controllers of linearized dynamics
  around a new added node to the RRT, and checks the CBF condition at
  a finite set of points along this reference trajectory. The
  resolution with which such CBF condition is checked affects the
  safety of the overall trajectory (theoretically, it is safe only if
  every point satisfies the CBF condition).
  Table~\ref{tab:lqr-comparison} compares different resolutions with
  which the CBF condition checks are made, along with the
  corresponding average execution times (over 20 runs) and safety
  violations (which, to have a fair comparison with
  \texttt{C-CLF-CBF-RRT}, has been implemented without the adaptive
  sampling procedure described in~\cite[Section
  V.C]{GY-MC-AA-AP-RT-CB:23}).  Smaller resolutions naturally lead to
  larger execution times and a smaller percentage of safety
  violations.
%The percentage of safety violations is computed integrating the closed-loop dynamics using the 
%same resolution with which the LQR-CBF-RRT algorithm has been run.
  We note that a resolution of $0.01m$ leads to no safety violations
  and only has a slightly higher execution time compared to
  \texttt{C-CLF-CBF-RRT}. However, this lack of safety violations is
  not theoretically guaranteed in general and it is not known a
  priori what resolution results in no safety violations.
%
% \marginJC{Plus, a priori, you don't know what resolution would result
%   in no safety violation, right? This is more like a trial-and-adjust
%   method, no?}
%

\begin{table}[htb]
  \centering
  \small
  \begin{tabular}{ | c | c | c |}
    \hline
    Resolution (m) & Average execution time (s) & Safety violations \\
    \hline
    0.5 & 0.26 & 60 $\%$ \\
    \hline
    0.1 & 1.09 & 40 $\%$ \\
    \hline
    0.05 & 1.7 & 5 $\%$ \\
    \hline
    0.01 & 11.31 & 0 $\%$ \\
    \hline
  \end{tabular}
  \caption{Comparison of the resolution with which the CBF checks
      are made in \texttt{LQR-CBR-RRT*} and the corresponding average
      execution time (over 20 executions).  The paths are generated
      for the simulation environment with obstacles in Figure
      2(a).}\label{tab:lqr-comparison}
\end{table}

% \begin{table}[htb]
%   \centering
%   \small
%   \begin{tabular}{ | c | c |}
%     \hline
%     Resolution (m) & Average execution time (s) \\
%     \hline
%     0.1 & 0.31 \\
%     \hline
%     0.05 & 5.21 \\
%     \hline
%     0.01 & 9.71 \\
%     \hline
%     0.001 & 50.31 \\
%     \hline
%   \end{tabular}
%   \caption{Comparison of the resolution with which the CBF checks are made in \texttt{LQR-CBR-RRT}
%   and the corresponding average execution time (over 20 executions).
%   The paths are generated for the simulation environment with obstacles in 
%   Figure~\ref{fig:nonconvex-env}(a).}\label{tab:lqr-comparison}
% \end{table}

%%%%Bidirectional RRT (compat_bidirectional_rrt_guaranteed.py in bardock)
%Runtime is divided by 2 approximately. Easy to implement. Data of runtime is in data-bidirectional file

\section{Conclusions}
We have introduced \texttt{C-CLF-CBF-RRT}, a sampling-based motion
planning algorithm that generates dynamically feasible collision-free
paths from an initial point to an end goal.
% We assume that such controller is implemented using CLFs and CBFs. 
The algorithm creates a sequence of waypoints and results in a
well-defined CLF-CBF-based controller that generates trajectories
guaranteed to be safe and to sequentially visit the waypoints.
% well-defined (meaning that the CLF and CBF conditions are always compatible). 
These guarantees are based on a result of independent interest that
shows that the problem of verifying whether a CLF and a BNCBF are
compatible in a set of interest can be solved by finding the optimal
value of an optimization problem. For systems with linear dynamics,
quadratic CLFs, and CBFs of polytopic or ellipsoidal
obstacles, this optimization problem is a QCQP, and for CBFs of
circular obstacles, it can be solved in closed form. In
  these scenarios, this ensures that \texttt{C-CLF-CBF-RRT} can be
  executed efficiently. Finally, we have shown that
\texttt{C-CLF-CBR-RRT} is probabilistically complete and can be
generalized to systems where safety constraints have a high relative
degree.  Simulations and hardware experiments illustrate the
performance and computational benefits of \texttt{C-CLF-CBR-RRT}.
Future work will explore the extension of the results to other
sampling-based algorithms (e.g., RRT*, bidirectional RRT),
construct asymptotically optimal versions of
  \texttt{C-CLF-CBR-RRT}, and integrate available computational tools
  to find CLFs and verify the compatibility of CLF-CBF pairs.
  % such as sum-of-squares~\cite{HD-CJ-HZ-AC:24}, the
  % learner-falsifier framework~\cite{HR-SS:19}, or neural
  % networks~\cite{YCC-NR-SG:19}.
  We also plan to explore techniques to alleviate the observed
  \textit{slowing down} effect near waypoints,
% and for which the probabilistic completeness of \texttt{C-CLF-CBF-RRT} holds. 
consider systems under uncertainty, both in the robot dynamics and the
obstacles in the environment, and extend hardware implementations
  of \texttt{C-CLF-CBF-RRT} to more complex systems exploiting the
  notion of differential flatness.

\section*{Acknowledgments}
This work was supported by the Tactical Behaviors for Autonomous
Maneuver (TBAM) ARL-W911NF-22-2-0231 and W911NF-25-2-0042. Part of
this work was conducted during an internship by the first author at
the U.S. Army Combat Capabilities Development Command Army Research
Laboratory in Adelphi, MD during the summer of 2024.

%
% \marginJC{Make a pass at the refs for consistency and also fix whatever needs fixing in our svn: for instance, "Lyapunov" instead of "lyapunov" in [44], remove capitals in all words but first in [1], missing address in [9],...}
%
\bibliography{../bib/alias,../bib/JC,../bib/Main-add,../bib/Main} 
\bibliographystyle{IEEEtran}

\appendix

  The following result shows that the problem of checking whether
  the optimization problem~\eqref{eq:opt-pb-xnear-compat-check} is
  feasible can be simplified by only checking the pairwise feasibility
  of the CLF constraint and the CBF constraints associated with the
  obstacles that intersect with $\Theta$ (as defined in
  Section~\ref{sec:compat-def}).
  \begin{lemma}\longthmtitle{Checking pairwise compatibility of a
      reduced set of CBFs}\label{lem:reduction-cbf-set}
    Let $x_{\text{near}}\in\real^n$, $x_{\text{new}}\in\real^n$, and
    $V:\real^n\to\real$ be a CLF with respect to $x_{\text{new}}$.
    Define $\Theta=\setdef{x\in\real^n}{V(x)\leq V(x_{\text{near}})}$.
    Let $\Lc:=\setdef{l\in[M]}{\Theta\cap\Cl(\Oc_l)=\emptyset}$.
    Suppose that there exists a set of extended class $\Kc_{\infty}$
    functions $\{ \alpha_l \}_{l\in\Lc}$ such that for each $l\in\Lc$,
    the problem
    \begin{align}\label{eq:opt-pb-xnear-mathcalL}
      &\min_{u\in\real^m} \frac{1}{2}\norm{u}^2
      \\
      \notag
      &\quad \text{s.t.} \ L_fh_{j,l}(x) + L_gh_{j,l}(x)u \geq
        -\alpha_l(h_{j,l}(x)), \ j\in\Ic_l(x),
      \\
      \notag
      &\quad \quad \ L_fV(x) + L_gV(x)u + W(x) \leq 0,
    \end{align}
    is feasible for all $x\in\Theta\cap\Fc$ and there exists a set of
    disjoint open sets $\{ \Yc_l \}_{ l\in\Lc }$ (with $\Yc_l$ being a
    neighborhood of $\partial\Oc_l$ satisfying
    $\Yc_l\cap\Oc_{l^\prime} = \emptyset$ for all $l^\prime \neq l$)
    and a bounded controller $\hat{u}$ satisfying the constraints
    in~\eqref{eq:opt-pb-xnear-mathcalL} for each
    $x\in\Yc_l\cap\Theta\cap\Fc$ and $l\in\Lc$.  Then, there exists a
    set of extended class $\Kc_{\infty}$ functions
    $\{ \bar{\alpha}_l \}_{l\in[M]}$ such that
    \begin{align}\label{eq:opt-pb-xnear-Theta-Fc}
      &\min_{u\in\real^m} \frac{1}{2}\norm{u}^2
      \\
      \notag
      &\quad \text{s.t.} \ L_fh_{j,l}(x) + L_gh_{j,l}(x)u \geq
        -\bar{\alpha}_l(h_{j,l}(x)),
      \\
      \notag
      &\qquad \qquad \qquad \qquad \qquad \qquad \forall j\in\Ic_l(x),
        l\in\Lc,
      \\
      \notag
      &\quad \quad \ L_fV(x) + L_gV(x)u + W(x) \leq 0.
    \end{align}
    is feasible for all $x\in\Theta\cap\Fc$.
  \end{lemma}
  \begin{proof}
    Let $l\in\Lc$.  Note that since
    $\Yc_l\cap\Oc_{l^\prime}=\emptyset$ for all
    $l^\prime \in [M]\backslash\{ l \}$, there exists $d_l > 0$ such
    that $h_{l^\prime}(x) \geq d_l$ for all
    $l^\prime \in [M]\backslash\{ l \}$ and
    $x\in\mathcal{Y}_{l}\cap\Theta\cap\Fc$.  Now, take
    $\hat{\alpha}_l > 0$ such that
  \begin{align*}
    \hat{\alpha}_l > \frac{\sup\limits_{x\in\Yc_l\cap\Theta\cap\Fc}
    |L_f h_{j,l^{\prime}}(x) + L_gh_{j,l^{\prime}}(x)
    \hat{u}(x)|}{d_l} 
  \end{align*}
  for all $l^\prime \in [M]\backslash\{ l \}$ and
  $j\in\Ic_{l^{\prime}}(x)$.  Note that such $\hat{\alpha}_l$ exists
  because $\hat{u}$ is bounded and $\Theta$ is compact.  Further let
  $\hat{\alpha} > \hat{\alpha}_l$ for all $l\in\Lc$, and take
  $\bar{\alpha}_l$ so that
  $\bar{\alpha}_l(s) > \max\{ \alpha_l(s), \hat{\alpha}s \}$ for all
  $s\geq 0$.  Now, $\hat{u}(x)$ is feasible
  for~\eqref{eq:opt-pb-xnear-Theta-Fc} for any
  $x\in\big( \bigcup_{l\in\Lc} \Yc_l \big) \cap \Theta\cap\Fc$.  On
  the other hand, there exists $d_{-1} > 0$ such that
  $h_l(x) > d_{-1}$ for all $\l\in[M]$ and
  $x\in \Theta\cap\Fc \backslash \Big( \bigcup_{l\in\Lc} \Yc_l \Big)$.
  Now, take $\hat{\alpha}_{-1}>0$ such that
  \begin{align*}
    \hat{\alpha}_{-1} > \frac{\sup\limits_{x\in \Theta\cap\Fc
    \backslash ( \bigcup_{l\in\Lc} \Yc_l ) } |L_f h_{j,l}(x) +
    L_gh_{j,l}(x) \hat{u}(x)|}{d_{-1}}, 
  \end{align*}
  for all $l\in[M]$ and $j\in\Ic_l(x)$.  Again, such
  $\hat{\alpha}_{-1}$ exists because $\hat{u}$ is bounded and $\Theta$
  is compact.  Further let
  $\hat{\alpha}_*>\max\{ \hat{\alpha}, \hat{\alpha}_{-1} \}$ and take
  $\bar{\alpha}_l$ so that
  $\bar{\alpha}_l(s) > \max\{ \alpha_l(s), \hat{\alpha}_* s \}$ for
  all $s\geq 0$.  Hence, $\hat{u}(x)$ is feasible
  for~\eqref{eq:opt-pb-xnear-Theta-Fc} for any $x\in\Theta\cap\Fc$.
\end{proof}

\begin{table}[htb]
  \centering
  \small
  \begin{tabular}{ | c | c |}
    \hline
    \textbf{Symbol} & \textbf{Meaning} \\
    \hline
    \multicolumn{2}{c|}{ \textbf{Dynamics} } \\
    \hline
    $n$ & state dimension \\
    \hline
    $m$ & input dimension \\
    \hline
    $f$, $g$ & vector fields defining dynamics \\
    \hline
    \multicolumn{2}{c|}{ \textbf{CBF definitions} } \\
    \hline
    $M$ & number of obstacles \\
    \hline
    $\Oc_l$ & $l$-th obstacle \\
    \hline
    $h_l$ & BNCBF associated with $l$-th obstacle \\
    \hline
    $\Ic_l(x)$ & set of active indices for $h_l$ at $x\in\real^n$ \\
    \hline
    $\Fc$ & safe space \\
    \hline
    $N_l$ & number of functions defining $h_l$ \\
    \hline
    $h_{i,l}$ & $i$-th function defining $h_l$ \\
    \hline
    $\bar{m}$ & relative degree of a HOCBF \\
    \hline
    $\Zc_{l,\Jc}$ & \begin{tabular}{@{}c@{}} set of points where active constraints for \\ $\Oc_l$ have indices in $\Jc$ \end{tabular} \\
    \hline
    $\{ \beta_i \}_{i\in\Jc}$ & optimization variables in~\ref{eq:r1-clf-bncbf-compat-check-general},~\ref{eq:r2-clf-bncbf-compat-check-general} \\
    \hline
    \multicolumn{2}{c|}{ \textbf{CLF-CBF compatible path} } \\
    \hline
    $\Ac = \{ x_i \}_{i=1}^{N_a}$ & CLF-CBF compatible path \\
    \hline
    $N_a$ & number of waypoints in $\Ac$ \\
    \hline
    $x_{\text{init}}$ & starting point of $\Ac$ \\
    \hline
    $\Xc_{\text{goal}}$ & goal set \\
    \hline
    $V_i$ & CLF with respect to waypoints $x_{i+1}$ \\
    \hline
    $\Gamma_i$ & $\setdef{x\in\real^n}{V_i(x)\leq V_i(x_{i+1})}$ \\
    \hline
    $W_i$ & positive definite associated with $V_i$ \\
    \hline
    $\alpha_{i,l}$ & \begin{tabular}{@{}c@{}} extended class $\Kc_{\infty}$ associated to \\ waypoint $x_{i+1}$ and $\Oc_l$ \end{tabular} \\
    \hline
    \multicolumn{2}{c|}{ \textbf{\texttt{C-CLF-CBF-RRT}} } \\
    \hline
    $\Tc$ & tree constructed in \texttt{C-CLF-CBF-RRT} \\
    \hline
    $x_{\text{rand}}$ & randomly sampled node \\
    \hline
    $x_{\text{new}}$ & new node to be added to $\Tc$ \\
    \hline
    $x_{\text{near}}$ & nearest node in $\Tc$ to $x_{\text{new}}$ \\
    \hline
    $k$ & number of iterations of \texttt{C-CLF-CBF-RRT} \\
    \hline
    $\eta$ & steering parameter in \texttt{NEW}$\underline{\hspace{0.2cm}}$\texttt{STATE} \\
    \hline
    $\tau$ & \begin{tabular}{@{}c@{}} number of updates of $\{ \alpha_l \}_{l\in[M]}$ \\ and $W$ in \texttt{COMPATIBILITY} \end{tabular} \\
    \hline 
    $V, W$ & \begin{tabular}{@{}c@{}} CLF and associated positive definite function \\ returned by \texttt{FIND}$\underline{\hspace{0.2cm}}$\texttt{CLF} \end{tabular} \\
    \hline
    \multicolumn{2}{c|}{ \textbf{\texttt{COMPATIBILITY}}  } \\
    \hline
    $\Theta$ & $\setdef{x\in\real^n}{V(x)\leq V(x_{\text{near}})}$ \\
    \hline
    $\Lc$ & $\setdef{l\in[M]}{ \text{Cl}(\Oc_l)\cap\Theta\neq\emptyset }$ \\
    \hline
    $\zeta_{1,l}$ & solution of~\eqref{eq:r1-clf-bncbf-compat-check-general} for $l\in\Lc$ \\
    \hline
    $\zeta_{2,l}$ & solution of~\eqref{eq:r2-clf-bncbf-compat-check-general} for $l\in\Lc$ \\
    \hline
    \multicolumn{2}{c|}{ \textbf{Notation in Proposition~\ref{prop:prob-completeness-general}} } \\
    \hline
    $\delta_{\text{clear}}$ & positive number such that $\Bc(x_i,\delta_{\text{clear}})\subset\Fc$ \\
    \hline
    $\Nc_i$ & neighborhood of waypoint $x_i$ in Lemma~\ref{lem:compat-neighboring-vertices-general} \\
    \hline
    $\hat{\Gamma}_i$ & superset of $\Gamma_i$ in Lemma~\ref{lem:compat-neighboring-vertices-general} \\
    \hline
    $\Gamma_{y_1,y_2}$ & $\setdef{x\in\real^n}{V_{y_2}(x) \leq V_{y_2}(y_1)}$ \\
    \hline
    $\Zc$ & $\Zc=\setdef{z\in\Fc}{\exists l\in[M] \ \text{s.t.} \ d(z,\Oc_l)\leq \tfrac{\delta_{\text{clear}}}{2} }$ \\
    \hline
    \multicolumn{2}{c|}{ \textbf{Simulations} } \\
    \hline
    $R(\theta)$ & rotation matrix of angle $\theta$ \\
    \hline
    $v, \omega$ & \begin{tabular}{@{}c@{}} linear and angular velocities \\ in unicycle model \end{tabular} \\
    \hline
  \end{tabular}
  \caption{Summary of symbols used in the paper.}
  \label{tab:1}
\end{table}
%
% \marginJC{I don't know what the logic is for including/not including a
%   symbol in here. Some of it seems too specific (e.g., optimization
%   variables in Prop BLA, or solution of (6) for $l$). One option is to
%   group them in blocks (e.g., problem setup, CBF-related symbols,
%   CLF-related symbols, etc.). Also, shouldn't this table be in the
%   appendix? We could refer to it early in the paper (earlier, like at
%   the beginning of Section II, say ``for reference, we provide a
%   summary table of the symbols used throughout the paper in bla''),
%   but the actual table could be at the end.}
%

% \begin{lemma}\longthmtitle{CLF for fully actuated systems}\label{lem:clf-fully-actuated}
%   Suppose that~\eqref{eq:control-affine-sys} is such that $m=n$ and the matrix $g(x)$ is invertible for 
%   all $x\in\Rc$.
%   Then, for any $q\in\text{Int}(\Rc)$, the function $V(x)=\frac{1}{2}\norm{x-q}^2$ is a CLF of~\eqref{eq:control-affine-sys}
%   with respect to $q$.
% \end{lemma}
% \begin{proof}
%   The CLF condition reads:
%   \begin{align}\label{eq:clf-b-full-rank}
%     2(x-q)^T f(x) + 2(x-q)^T g(x) u + W(x) \leq 0,
%   \end{align}
%   for some positive definite function $W$ with respect to $q$.
%   Note that since $g(x)$ is invertible, $(x-q)^T g(x)=\textbf{0}_n^T$ if and only if $x=q$.
%   Therefore, for $x\neq q$, the control $u_x = -\frac{1}{2}\frac{2(x-q)^T f(x)- W(x)}{\norm{g(x)^T (x-q)}^2} g(x)^T (x-q)$
%   satisfies~\eqref{eq:clf-b-full-rank}.
% \end{proof}

\vspace*{-3ex}

\begin{IEEEbiography}[{\includegraphics[width=1in,clip,keepaspectratio]{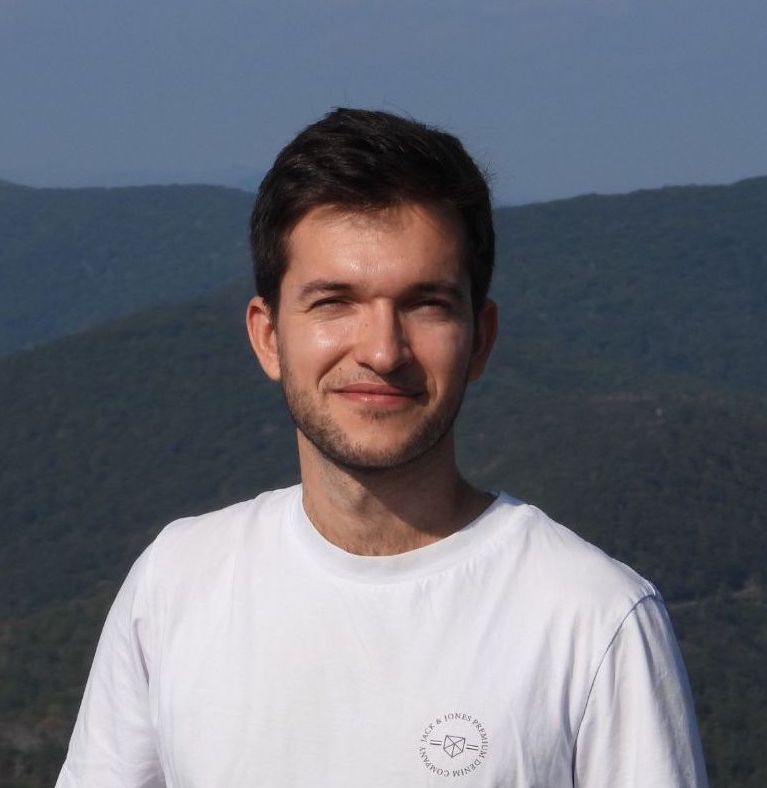}}]{Pol
    Mestres} received the Bachelor's degree in mathematics and the
  Bachelor's degree in engineering physics from the Universitat
  Polit\`{e}cnica de Catalunya, Barcelona, Spain, in 2020, and the
  Master's degree in mechanical engineering in 2021 from the
  University of California, San Diego, La Jolla, CA, USA, where he is
  currently a Ph.D. candidate. His research interests include
  safety-critical control, optimization-based controllers, distributed
  optimization and motion planning.
\end{IEEEbiography}

\begin{IEEEbiography}[{\includegraphics[width=1in,clip,keepaspectratio]
    {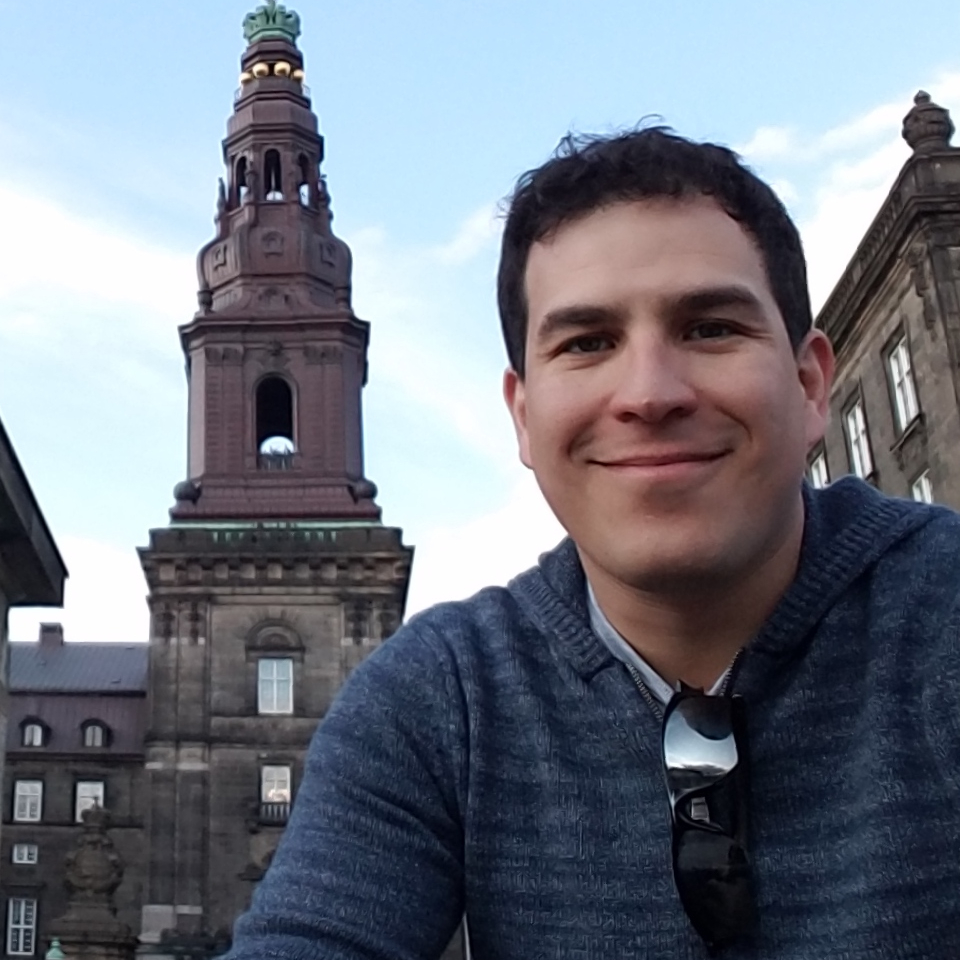}}]{Carlos Nieto-Granda} is a Research
  Scientist in the Science of Intelligent Systems Division at the
  U.S. Army Research Laboratory (DEVCOM/ARL). He has obtained a
  B.S. degree in Electronics Systems from Tecnol\'ogico de Monterrey,
  Campus Estado de Mexico, Mexico; an M.S. degree in Computer Science
  from Georgia Institute of Technology; and a Ph.D. degree in
  Intelligent Systems, Robotics, and Control from University of
  California San Diego. His research interests include autonomous
  exploration, coordination, and decision-making for heterogeneous
  multi-robot teams focused on state estimation, sensor fusion,
  computer vision, localization and mapping, autonomous navigation,
  and control in complex environments. He is a recipient of the 2022
  Transactions on Robotics King-Sun Fu Memorial Best Paper Award.
\end{IEEEbiography}

\begin{IEEEbiography}[{\includegraphics[width=1in,clip,keepaspectratio]{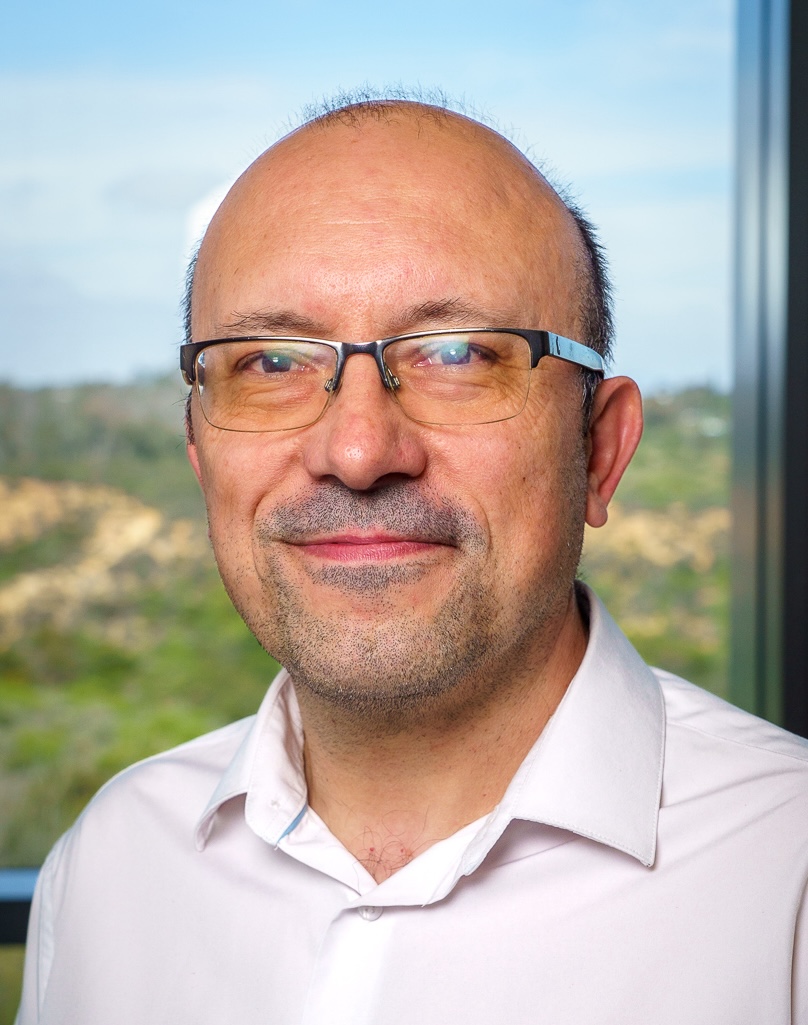}}]{Jorge
    Cort\'{e}s}(M'02, SM'06, F'14) received the Licenciatura degree in
  mathematics from Universidad de Zaragoza, Zaragoza, Spain, in 1997,
  and the Ph.D. degree in engineering mathematics from Universidad
  Carlos III de Madrid, Madrid, Spain, in 2001. He held postdoctoral
  positions with the University of Twente, Twente, The Netherlands,
  and the University of Illinois at Urbana-Champaign, Urbana, IL,
  USA. He was an Assistant Professor with the Department of Applied
  Mathematics and Statistics, University of California, Santa Cruz,
  CA, USA, from 2004 to 2007. He is a Professor and Cymer Corporation
  Endowed Chair in High Performance Dynamic Systems Modeling and
  Control at the Department of Mechanical and Aerospace Engineering,
  University of California, San Diego, CA, USA.  He is a Fellow of
  IEEE, SIAM, and IFAC.  His research interests include distributed
  control and optimization, network science, nonsmooth analysis,
  reasoning and decision making under uncertainty, network
  neuroscience, and multi-agent coordination in robotic, power, and
  transportation networks.
\end{IEEEbiography}

\end{document}